\def\argmin{\mathop{\rm argmin}}
\def\arg{\mathop{\rm arg}}
\def\bfX{\boldsymbol{X}}
\newcommand{\RNum}[1]{\uppercase\expandafter{\romannumeral #1\relax}}
\theoremstyle{plain}
\newtheorem{theorem}{Theorem}[section]
\newtheorem{proposition}[theorem]{Proposition}
\newtheorem{lemma}[theorem]{Lemma}
\theoremstyle{definition}
\newtheorem{assumption}[theorem]{Assumption}
\theoremstyle{remark}
\newtheorem{remark}[theorem]{Remark}
\icmltitlerunning{Distributed High-Dimensional Quantile Regression: Estimation Efficiency and Support Recovery}
\begin{document}

\twocolumn[
\icmltitle{Distributed High-Dimensional Quantile Regression: Estimation Efficiency and Support Recovery}
% It is OKAY to include author information, even for blind
% submissions: the style file will automatically remove it for you
% unless you've provided the [accepted] option to the icml2024
% package.

% List of affiliations: The first argument should be a (short)
% identifier you will use later to specify author affiliations
% Academic affiliations should list Department, University, City, Region, Country
% Industry affiliations should list Company, City, Region, Country

% You can specify symbols, otherwise they are numbered in order.
% Ideally, you should not use this facility. Affiliations will be numbered
% in order of appearance and this is the preferred way.
\icmlsetsymbol{equal}{*}

\begin{icmlauthorlist}
\icmlauthor{Caixing Wang}{sch}
\icmlauthor{Ziliang Shen}{sch}
\end{icmlauthorlist}

\icmlaffiliation{sch}{School of Statistics and Management, Shanghai University of Finance and Economics}

\icmlcorrespondingauthor{Caixing Wang}{wang.caixing@stu.sufe.edu.cn}

% You may provide any keywords that you
% find helpful for describing your paper; these are used to populate
% the "keywords" metadata in the PDF but will not be shown in the document
\icmlkeywords{Distributed estimation, quantile regression, high-dimensional, support recovery.}
\vskip 0.3in 
]

% this must go after the closing bracket ] following \twocolumn[ ...

% This command actually creates the footnote in the first column
% listing the affiliations and the copyright notice.
% The command takes one argument, which is text to display at the start of the footnote.
% The \icmlEqualContribution command is standard text for equal contribution.
% Remove it (just {}) if you do not need this facility.

\printAffiliationsAndNotice{}  
% leave blank if no need to mention equal contribution
% \printAffiliationsAndNotice{\icmlEqualContribution} 
% otherwise use the standard text.

\begin{abstract}
    In this paper, we focus on distributed estimation and support recovery for high-dimensional linear quantile regression. 
    Quantile regression is a popular alternative tool to the least squares regression for robustness against outliers and data heterogeneity. 
    However, the non-smoothness of the check loss function poses big challenges to both computation and theory in the distributed setting.
    To tackle these problems, we transform the original quantile regression into the least-squares optimization. 
    By applying a double-smoothing approach, we extend a previous Newton-type distributed approach without the restrictive 
    independent assumption between the error term and covariates. 
    An efficient algorithm is developed, which enjoys high computation and 
    communication efficiency. Theoretically, the proposed distributed estimator achieves a near-oracle convergence rate and high support 
    recovery accuracy after a constant number of iterations. 
    Extensive experiments on synthetic examples and a real data application further demonstrate the effectiveness of the proposed method.
\end{abstract}

\section{Introduction}
\label{sec:intro}
With the development of modern technology, the proliferation of massive data has 
garnered significant attention from researchers and practitioners \citep{li2020federated,gao2022review}.
For example, financial institutions leverage big data to scrutinize customer 
preferences and fine-tune their marketing approaches, while manufacturers and 
transportation departments lean heavily on extensive datasets to streamline 
supply chain management and enhance delivery route optimization. 
However, these large-scale datasets are usually distributed across different 
machines due to storage and privacy concerns, making the direct application of existing statistical methods infeasible. On the other hand, another challenge arises from the high dimensionality of modern data. In the literature, a sparse assumption is often adopted \citep{zhao2006model,wainwright2009sharp,hastie2015statistical}, and support recovery is an essential problem for high-dimensional analysis. Despite its importance in practice, the support recovery in a distributed system is underexplored in theory, compared to the well-studied statistical estimation of the interested parameters \citep{zhang13,lee2017communication,jordan2018communication}.

Since the seminar work of \citet{koenker1978regression}, 
quantile regression has gained increasing attention across various fields, 
including economics, biomedicine, and environmental studies. Compared to the 
least squares regression that only estimates the conditional means, quantile 
regression models the entire conditional quantiles of the response, making it 
more robust against outliers in the response measurements \citep{koenker2005quantile,feng2023lack}. 
Although quantile regression can better handle data heterogeneity, computational challenges arise when both sample size and dimension are large due to the non-smooth check loss function \citep{he2021smoothed}.  Consequently, it is natural
to consider a distributed estimation procedure to address scalability concerns. 

In this paper, we focus on distributed estimation and support recovery for high-dimensional linear quantile regression. We aim to bridge the theoretical and practical gap by addressing several fundamental questions regarding distributed high-dimensional linear quantile regression. \textit{First, what is the statistical limit of estimation in the presence of distributed data? And how does this limit depend on the number of local machines in the distributed system? Second, can distributed high-dimensional linear quantile regression achieve the same convergence rate of the parameters and support recovery rate as those in a single machine setting? Third, can the proposed method be applied to various practical settings, such as homoscedastic and heteroscedastic data structures?}

We address the aforementioned theoretical and practical questions by designing a distributed high-dimensional linear quantile regression algorithm via a double-smoothing transformation. To our limited knowledge, our work is one of the pioneering works in studying distributed high-dimensional linear quantile regression with the least practical constraint and solid theoretical guarantees involving the estimation efficiency and support recovery. The specific contributions can be concluded as follows.

\textbf{Methodology novelty.}
Our paper extends the idea of \citet{chen2020distributed} resulting in a novel method for \textbf{D}istributed \textbf{H}igh-dimensional \textbf{S}parse \textbf{Q}uantile \textbf{R}egression (\textit{DHSQR}) without the stringent assumption that the error term is independent of the covariates.  Specifically, we start by transforming the covariate and the response, which recast the quantile regression into the least squares framework. Next, we introduce an iterative distributed algorithm based on an approximate Newton method that uses a double-smoothing approach applied to the global and local loss functions, respectively. In the distributed system, the local machines only need to broadcast the $(p+1)$-dimensional gradient vectors (instead of the $(p+1)\times (p+1)$ Hessian matrix), where $p$ is the dimension of the covariates. This optimization problem can be efficiently addressed on the central machine due to its simplified least squares formulation.

\textbf{Theoretical assessments.}
Theoretically, we not only establish the convergence rate of our DHSQR estimator in the $\ell_2$-norm (Theorems \ref{thm1} and \ref{thm2}), but also characterize the \textit{beta-min} condition for the exact support recovery (Theorems \ref{thm3} and \ref{thm4}) that is novel for distributed high-dimensional sparse estimation \citep{neykov2016l1}. After a constant number of iterations, the convergence rate and the \textit{beta-min} condition of the distributed estimator align with the classical theoretical results derived for a single machine setup \citep{wainwright2009sharp,belloni2011}. 

\textbf{Numerical verification.} 
Another contribution of this work is the comprehensive studies on the validity and effectiveness of the proposed algorithm in various synthetic and real-life examples, which further support the theoretical findings in this paper.

\subsection{Related Work}

\textbf{Distributed methods.}
Significant efforts have been dedicated to the development of distributed statistical learning methods, broadly categorized into two main streams.
The first class is known as the divide-and-conquer (DC) methods \citep{zhang13,li2013statistical,chen2014split,zhang2015divide}. 
These one-shot methods usually compute the relevant estimates based on local samples in the first step and then send these local
estimates to a central machine where the final estimate is obtained by simply averaging the local estimates. These methods offer 
computational efficiency with just one round of communication, 
but they have the theoretical constraint on the number of local machines to guarantee the global optimal rate 
\citep{huang2019distributed,gao2022review}. The second class comprises
multi-round distributed methods designed to improve estimation efficiency and
relax restrictions on the number of local machines \citep{shamir2014communication,lee2017communication,wang2017efficient}. 
 \citet{jordan2018communication} and \citet{fan2021communication} proposed a communication-efficient surrogate likelihood (CSL) framework that can be applied to low-dimensional estimation, high-dimensional regularized estimation, and Bayesian inference. Notably, the CSL method eliminates the need to transfer local Hessian matrices to the central machine, resulting in significantly reduced communication costs. It's worth noting that most of the aforementioned methods only focus on homogeneous data, which can be less practical in the context of big data analysis. 

\textbf{Distributed linear quantile regression.}
In the existing literature, distributed linear quantile regression has been widely investigated using the traditional divide-and-conquer method \citep{zhao2014general,xu2020block,chen2020quantile}. However, when dealing with high-dimensional settings, where sparsity assumptions are commonly applied, the DC estimator is no longer sparse due to de-biasing and averaging processes, resulting in poor support recovery \citep{bradic2017uniform,chen2020distributed}. In addition, their methods require a condition on the number of distributed machines to ensure the global convergence rate. To alleviate the restriction on the number of machines, \citet{chen2020distributed} transform the check loss to the square loss via a kernel smoothing approach and propose a Newton-type distributed estimator. The theoretical results offered insights into estimation errors and support recovery. Nevertheless, their method and theory require the error term to be independent of the covariates, which is not very common and hard to verify in practice.  Inspired by the ideas in \citet{jordan2018communication} and \citet{fernandes2021smoothing}, \citet{tan2022communication} studied a distributed convolution-type smoothing quantile regression whose loss function is twice continuously differentiable in both low-dimensional and high-dimensional regime. Note that most of the previous works mainly focus on the convergence rate of their respective estimator, while we further establish the distributed support recovery theory.
\subsection{Notations}
For two sequences $\{a_n\}$ and $\{b_n\}$, we denote $a_n \lesssim b_n$ if $a_n \leq C b_n$, where $C$ is a constant. And $a_n \asymp b_n$ if and only if $a_n \lesssim b_n$ and $b_n \lesssim a_n$. For a vector $\boldsymbol{u}=(u_1,\ldots,u_p)^{\mathrm{T}}$, we define  supp$(\boldsymbol{u})=\{j  : u_j \neq 0\}$. We use $|\cdot|_q$ to denote the $\ell_q$-norm in $\mathbb{R}_p$: $|\boldsymbol{u}|_q=(\sum_{i=1}^{p}|u_i|^q)^{1/q}$, for $1\leq q< \infty$ and $|\boldsymbol{u}|_{\infty}=\max_{1\leq i \leq p}|u_i|$. For $S  \subseteq\{1, \ldots, p\} $ with length $|S|$, let $\boldsymbol{u}_S=(u_i, i \in S) \subseteq \mathbb{R}^{|S|}$. For a matrix $\boldsymbol{A}=(a_{ij}) \in \mathbb{R}^{p \times q}$, define $|\boldsymbol{A}|_{\infty}=\max_{1 \leq i \leq p, 1 \leq j \leq q}|a_{ij}|$, $\|\boldsymbol{A}\|_{\infty}=\max_{1 \leq i \leq p}\sum_{1 \leq j \leq q}|a_{ij}|$, and $\|\boldsymbol{A}\|_{op}=\max_{|\boldsymbol{v}|_2=1}|\boldsymbol{A}\boldsymbol{v}|_2$. For two subsets $S_1 \in \{1,\ldots,p\} $ and $ S_2 \in \{1,\ldots,q\}$, we define the submatrix $\boldsymbol{A}_{S_1 \times S_2}=(a_{ij}, i \in S_1, j \in S_2)$, $\Lambda_{\max }(\boldsymbol{A})$ and $\Lambda_{\min }(\boldsymbol{A})$ to be the largest and smallest eigenvalues of $\boldsymbol{A}$, respectively. In this paper, we use $C$ to denote a universal constant that may vary from line to line.

\section{Methodology}\label{sec:metho}
In this section, we introduce the proposed distributed estimator. Inspired by the Newton-Raphson iteration, we construct a kernel-based estimator that establishes a connection between quantile regression and ordinary least squares regression in a single machine setting. Based on this estimator, we design a distributed algorithm to minimize a double-smoothing shifted loss function. 

\subsection{The Linear Quantile Model}
For a given quantile level $\tau \in(0,1)$, we consider to construct the conditional $\tau$-th quantile function $Q_\tau(Y|\boldsymbol{X})$  with a linear model:
$$
Q_\tau(Y|\boldsymbol{X}) = \boldsymbol{X}^{\mathrm{T}} \boldsymbol{\beta}^*(\tau) = \sum_{j=0}^p x_j \beta_j^*(\tau),
$$
where $Y \in \mathbb{R}$ is a univariate response and ${\bfX}=(x_0,x_1,\ldots,x_p)^{\mathrm{T}} \in \mathbb{R}^{p+1}$ is $p+1$-dimensional covariate vector with $x_0 \equiv 1$. Here, $\boldsymbol{\beta}^*=\boldsymbol{\beta}^*(\tau) = (\beta_0^*(\tau),\beta_1^*(\tau),\ldots,\beta_p^*(\tau))$ is the true coefficient vector that can be obtained by minimizing the following stochastic optimization problem,
\begin{equation}\label{quantile_loss}
\mathcal{Q}(\boldsymbol{\beta})=\mathbb{E}\left[\rho_\tau(Y-\boldsymbol{X}^{\mathrm{T}}\boldsymbol{\beta})\right] ,   
\end{equation}
where $\rho_\tau(u)=u\{\tau-I(u\leq0)\}$ is the standard check loss function \citep{koenker1978regression} with $I(\cdot)$ is the indicator function. 

\subsection{Newton-type Transformation on Quantile Regression}

To solve the stochastic optimization problem in \eqref{quantile_loss}, we use the Newton-Raphson method. Given an initial estimator $\boldsymbol{\beta}_0$, the population form of the Newton-Raphson iteration is
\begin{equation}\label{N-R_iter}
\boldsymbol{\beta}_{1}=\boldsymbol{\beta}_{0}-\boldsymbol{H}^{-1}(\boldsymbol{\beta}_{0})\mathbb{E}\left[\partial \mathcal{Q}(\boldsymbol{\beta}_{0})\right],
\end{equation}
where $\partial \mathcal{Q}(\boldsymbol{\beta})=\boldsymbol{X}\{I(Y-\boldsymbol{X}^{\mathrm{T}}\boldsymbol{\beta}\leq 0)-\tau\}$ is the subgradient of the check loss function with respect to $\boldsymbol{\beta}$, and $\boldsymbol{H}(\boldsymbol{\beta})=\partial\mathbb{E}[ \partial \mathcal{Q}(\boldsymbol{\beta})]/\partial \boldsymbol{\beta} = \mathbb{E}[\boldsymbol{X}\boldsymbol{X}^{\mathrm{T}}f_{\varepsilon|\boldsymbol{X}}(\boldsymbol{X}^{\mathrm{T}}(\boldsymbol{\beta}-\boldsymbol{\beta}^*))]$ denotes the population Hessian matrix of $\mathbb{E}[\mathcal{Q}(\boldsymbol{\beta})]$. Here, we denote the error term as $\varepsilon=Y-\boldsymbol{X}^{\mathrm{T}} \boldsymbol{\beta}^*$, and $f_{\varepsilon|\boldsymbol{X}}(\cdot)$ is the conditional density of $\varepsilon$ given $\boldsymbol{X}$. 

When the initial estimator $\boldsymbol{\beta}_0$ is close to the true parameter $\boldsymbol{\beta}^*$, $\boldsymbol{H}(\boldsymbol{\beta}_0)$ will be close to $\boldsymbol{H}(\boldsymbol{\beta}^*)=\mathbb{E}[\boldsymbol{X}\boldsymbol{X}^{\mathrm{T}}f_{\varepsilon|\boldsymbol{X}}(0)]$. Motivated by this, we further approximate $\boldsymbol{H}(\boldsymbol{\beta}^*)$ with $\boldsymbol{D}_h(\boldsymbol{\beta}_{0})$ such that 
\begin{align}\label{appro_matrix}
 \boldsymbol{H}(\boldsymbol{\beta}_{0})\approx \boldsymbol{H}(\boldsymbol{\beta}^*)\approx\boldsymbol{D}_h(\boldsymbol{\beta}_{0})=\mathbb{E}(\bfX\bfX^{\mathrm{T}}K_h(e_0)),    
\end{align}
where $ e_0 = Y-{\bfX}^{\mathrm{T}}\boldsymbol{\beta}_0$, and $K_h(\cdot)=K(\cdot/h)/h$ with $K(\cdot)$ denoting a symmetrix and non-negative kernel function, $h \rightarrow 0$ is the bandwidth. For simplicity of notation, we denote a pseudo covariate as $\widetilde{\boldsymbol{X}}_h=\sqrt{K_h(e_0)}\boldsymbol{X}$. Hence, we can rewrite $\boldsymbol{D}_h(\boldsymbol{\beta}_{0})=\mathbb{E}(\widetilde{\boldsymbol{X}}_h\widetilde{\boldsymbol{X}}_h^{\mathrm{T}})$, which is the covariance matrix of $\widetilde{\boldsymbol{X}}_h$. Replacing  $\boldsymbol{H}(\boldsymbol{\beta}_0)$ with $\boldsymbol{D}_h(\boldsymbol{\beta}_{0})$ in \eqref{N-R_iter} leads to the following iteration,
\begin{align}\label{app_iter}
\boldsymbol{\beta}_{1}=\boldsymbol{\beta}_{0}-\boldsymbol{D}_h^{-1}(\boldsymbol{\beta}_{0})\mathbb{E}[\partial \mathcal{Q}(\boldsymbol{\beta}_{0})]
\end{align}
This iteration together with the Taylor expansion of $\mathbb{E}[\partial \mathcal{Q}(\boldsymbol{\beta}_{0})]$ at $\boldsymbol{\beta}^*$,
$$
\mathbb{E}[\partial \mathcal{Q}(\boldsymbol{\beta}_{0})]=\boldsymbol{H}(\boldsymbol{\beta}^{*})(\boldsymbol{\beta}_{0}-\boldsymbol{\beta}^{*})+\mathcal{O}(|\boldsymbol{\beta}_{0}-\boldsymbol{\beta}^{*}|_2^2),
$$
guarantee an improved convergence rate of $\boldsymbol{\beta}_{1}$ in $\ell_2$-norm,
\begin{align*}
|\boldsymbol{\beta}_{1}-\boldsymbol{\beta}^{*}|_2 &= 
\left| \boldsymbol{\beta}_{0}-\boldsymbol{D}_h^{-1}(\boldsymbol{\beta}_{0,h}) (\boldsymbol{H}(\boldsymbol{\beta}^{*})(\boldsymbol{\beta}_{0}-\boldsymbol{\beta}^{*})\right.\\
&\left.+ \mathcal{O}(|\boldsymbol{\beta}_{0}-\boldsymbol{\beta}^{*}|_2^2) ) - \boldsymbol{\beta}^{*} \right|_2 = \mathcal{O}\left(|\boldsymbol{\beta}_{0}-\boldsymbol{\beta}^{*}|_2^2\right).
\end{align*}
Consequently, if we have a consistent estimator $\boldsymbol{\beta}_{0}$, we can refine it by \eqref{app_iter}.

Now we show how to transform the Newton-Raphson iteration into a least squares problem. According to \eqref{app_iter}, we have 
\begin{small}
\begin{align*}
&\boldsymbol{\beta}_{1}
=\boldsymbol{D}_h^{-1}(\boldsymbol{\beta}_{0})\left\{\boldsymbol{D}_h(\boldsymbol{\beta}_{0})\boldsymbol{\beta}_{0}-\mathbb{E}\left[\bfX(I(e_0 \leq 0)-\tau)\right]\right\} \\
&=\boldsymbol{D}_h^{-1}(\boldsymbol{\beta}_{0})\mathbb{E}\left\{\widetilde{\bfX}_h\left[\widetilde{\bfX}_h^{\mathrm{T}}\boldsymbol{\beta}_{0}-\frac{1}{\sqrt{K_h(e_0)}}\left(I(e_0\leq 0)-\tau\right)\right]\right\}.
\end{align*}    
\end{small}
If we further define a new pseudo response as 
\begin{align*}
\widetilde{Y}_h=\widetilde{\bfX}_h^{\mathrm{T}}\boldsymbol{\beta}_{0}-\frac{1}{\sqrt{K_h(e_0)}}\left(I(e_0\leq 0)-\tau\right),
\end{align*}
then $\boldsymbol{\beta}_{1}=\boldsymbol{D}_h^{-1}(\boldsymbol{\beta}_{0})\mathbb{E}(\widetilde{\bfX}_h\widetilde{Y}_h)=\argmin_{\boldsymbol{\beta} \in \mathbb{R}^{p+1}}\mathbb{E}(\widetilde{Y}_h-\widetilde{\bfX}_h^{\mathrm{T}}\boldsymbol{\beta})^2$ is the least squares regression coefficient of $\widetilde{Y}_h$ on $\widetilde{\bfX}_h$. To further encourage the sparsity of the coefficient vector, we consider the following $\ell_1$-penalized least squares problem,
\begin{equation}\label{l1_LS}
\boldsymbol{\beta}_{1,\ell_1}= \underset{\boldsymbol{\beta} \in \mathbb{R}^{p+1}}{\operatorname{argmin}}\frac{1}{2}\mathbb{E}\left(\widetilde{Y}_h-\widetilde{\bfX}_h^{\mathrm{T}}\boldsymbol{\beta}\right)^2 + \lambda |\boldsymbol{\beta}|_1,
\end{equation}
where $\lambda>0$ is the regularization parameter. We can also consider other forms of penalties including the smoothly clipped absolute deviations penalty (SCAD, \citet{fan2001variable}) and the minimax concave penalty (MCP, \citet{zhang2010nearly}). We refer the reader to \citet{hastie2015statistical} for comprehensive reviews on the recent developments.

Now we are ready to define the empirical form of $\boldsymbol{\beta}_{1}$ in a single machine. Let $\widehat{\boldsymbol{\beta}}_{0}$ be an initial estimate based on  random samples ${\cal Z}^N =  \{(\bfX_i, Y_{i})  \}_{i=1}^{N}$, then we can transform the origin covariates and responses by 
\begin{equation}\label{T_x}
\widetilde{\bfX}_{i,h}=\sqrt{K_h(\widehat{e}_{0,i})}\bfX_i ,
\end{equation}
\begin{equation}
\begin{aligned}
\label{T_y}
\widetilde{Y}_{i,h}=\widetilde{\bfX}_{i,h}^{\mathrm{T}}\widehat{\boldsymbol{\beta}}_{0}-\frac{1}{\sqrt{K_h(\widehat{e}_{0,i})}}\left(I(\widehat{e}_{0,i}\leq 0)-\tau\right) ,
\end{aligned}    
\end{equation}
for $i=1,\ldots,N$, where $\widehat{e}_{0,i}=Y_i-\boldsymbol{X}_i^{\mathrm{T}}\widehat{\boldsymbol{\beta}}_{0}$. Thus, we estimate $\boldsymbol{\beta}^*$ by the empirical version of \eqref{l1_LS}:
\begin{equation}\label{empircal_l2}
\widehat{\boldsymbol{\beta}}_{pool}=\underset{\boldsymbol{\beta} \in \mathbb{R}^{p+1}}{\operatorname{argmin}}\frac{1}{2N}\sum_{i=1}^N(\widetilde{Y}_{i,h}-\widetilde{\bfX}_{i,h}^{\mathrm{T}}\boldsymbol{\beta})^2 + \lambda |\boldsymbol{\beta}|_1.
\end{equation}

Given a consistent initial estimator $\widehat{\boldsymbol{\beta}}_{0}$, we introduce a reasonable estimator $\widehat{\boldsymbol{\beta}}_{pool}$ by pooling all data into a single machine. Compared to the standard $\ell_1$-penalized quantile regression, the least squares problem plus a lasso penalty is much more computationally efficient.  Moreover, the smoothness and strong convexity of the quadratic loss function facilitate the development of a distributed estimator in the next section. 

\begin{remark}\label{compare_rem}
Different from the work in \citet{chen2020distributed}, we remove the stringent restriction that the error term $\epsilon$ should be independent of the covariate $\bfX$. Therefore, we cannot simply take the conditional density $f_{\varepsilon|\boldsymbol{X}}(0)$ from $\boldsymbol{H}(\boldsymbol{\beta}^*)$ in \eqref{appro_matrix}.
To consider such a dependence, we further define a pseudo covariate $\widetilde{\bfX}_{h}$ that can be regarded as a density-scaled surrogate of $\bfX$. As indicated in \citet{chen2020distributed}, the extension to the dependent case seems relatively straightforward in a single machine setting. However, it is nontrivial for distributed implementation both in methodology and theory due to the \textit{curse of dimensionality}, and we succeed in solving it by leveraging a double-smoothing approach (see details in the next section). 
\end{remark}

\subsection{Distributed Estimation with a Double-smoothing Shifted Loss Function}
Suppose the random samples ${\cal Z}^N =  \{(\bfX_i, Y_{i})  \}_{i=1}^{N}$ are randomly stored in $m$ machines $\mathcal{M}_{1}, \ldots, \mathcal{M}_{m}$ with the equal local sample size that $n=N/m$.  Without loss of generality,  we assume that $\mathcal{M}_{1}$ is the central machine and denote those samples in the $k$-th machine as $\{ (\bfX_{i}, Y_{i}) \}_{i\in \mathcal{M}_k}$ with $|\mathcal{M}_{k}|=n$, for $k=1,\ldots,m$. Based on the initial estimator $\widehat{\boldsymbol{\beta}}_{0}$, every local machines can compute the transformed samples as $\{ (\widetilde{\bfX}_{i,h}, \widetilde{Y}_{i,h}) \}_{i\in \mathcal{M}_k}$ according to \eqref{T_x} and \eqref{T_y}. For ease of notation, let
\begin{equation}
\begin{aligned}
&\widehat{\boldsymbol{D}}_{k,h} = \frac{1}{n} \sum_{i \in \mathcal{M}_k} \widetilde{\boldsymbol{X}}_{i,h} \widetilde{\boldsymbol{X}}_{i,h}^{\mathrm{T}}, \\
&\widehat{\boldsymbol{D}}_{h} = \frac{1}{m} \sum_{k=1}^m \widehat{\boldsymbol{D}}_{k,h} =\frac{1}{N} \sum_{i=1}^N \widetilde{\boldsymbol{X}}_{i,h} \widetilde{\boldsymbol{X}}_{i,h}^{\mathrm{T}},
\end{aligned}    
\end{equation}
as the $k$-th local sample covariance matrix and total sample covariance matrix, respectively. It is worth noting that our algorithm does not need the local machine to explicitly calculate and broadcast $\widehat{\boldsymbol{D}}_{k,h}$ for $k \neq 1$ (see in Algorithm \ref{alg:2}). We further define the pseudo local and global loss functions, respectively, as 
\begin{equation}\label{pseudo_loss}
\begin{aligned}
    &\mathcal{L}_k(\boldsymbol{\beta})=\frac{1}{2n}\sum_{i \in \mathcal{M}_k}(\widetilde{Y}_{i,h}-\widetilde{\bfX}_{i,h}^{\mathrm{T}}\boldsymbol{\beta})^2,\\
    &\mathcal{L}_N(\boldsymbol{\beta})=\frac{1}{2N}\sum_{i=1}^N(\widetilde{Y}_{i,h}-\widetilde{\bfX}_{i,h}^{\mathrm{T}}\boldsymbol{\beta})^2.
\end{aligned}
\end{equation}
According to the Taylor expansion of $\mathcal{L}_N(\boldsymbol{\beta})$ around $\widehat{\boldsymbol{\beta}}_{0}$, we have
\begin{equation}\label{Taylor_expansion}
\begin{aligned}
\mathcal{L}_N(\boldsymbol{\beta})&=
\mathcal{L}_N(\widehat{\boldsymbol{\beta}}_0)
+\{\partial \mathcal{L}_N(\widehat{\boldsymbol{\beta}}_0)\}^{\mathrm{T}}(\boldsymbol{\beta}-\widehat{\boldsymbol{\beta}}_0) \\
&+\frac{1}{2}(\boldsymbol{\beta}-\widehat{\boldsymbol{\beta}}_0)^{\mathrm{T}} \widehat{\boldsymbol{D}}_{h}(\boldsymbol{\beta}-\widehat{\boldsymbol{\beta}}_0).   
\end{aligned}
\end{equation}
It is easy to see that $\partial \mathcal{L}_N(\widehat{\boldsymbol{\beta}}_0)$ and $\widehat{\boldsymbol{D}}_{h}$ can be simply calculated by averaging the local ones. However, the burden of transmitting the local $(p+1)\times (p+1)$ covariance matrix $\widehat{\boldsymbol{D}}_{k,h}$ is heavy when $p$ is large. To save the communication cost, we replace the global Hessian $\widehat{\boldsymbol{D}}_{h}$ with the local Hessian $\widehat{\boldsymbol{D}}_{1,b}$. Here, $h$ and $b$ denote the \emph{global bandwidth} and \emph{local bandwidth}, respectively, and we assume $b\geq h\geq 0$. Thus we can rewrite \eqref{Taylor_expansion} as 
\begin{equation}\label{substitude_loss}
\begin{aligned}
\mathcal{L}_N(\boldsymbol{\beta},\widehat{\boldsymbol{D}}_{h})
&=\underbrace{\mathcal{L}_N(\boldsymbol{\beta},\widehat{\boldsymbol{D}}_{1,b})}_{(\romannumeral1) \text{ Shifted loss}}\\
&+\underbrace{\mathcal{O}_{\mathbb{P}}\left\{\|\widehat{\boldsymbol{D}}_{h}-\widehat{\boldsymbol{D}}_{1,b}\|_{op}\cdot|\boldsymbol{\beta}-\widehat{\boldsymbol{\beta}}_0|_2^2\right\}}_{(\romannumeral2) 
 \text{ Approximation error}}.
\end{aligned}
\end{equation}
The second term in \eqref{substitude_loss} is from the Cauchy–Schwarz inequality. Note that the substituted local Hessian matrix is flexibly controlled by a local bandwidth $b$ instead of the global bandwidth $h$, which ensures that $\|\widehat{\boldsymbol{D}}_{h}-\widehat{\boldsymbol{D}}_{1,b}\|_{op}=o_\mathbb{P}(1)$ (Detailed proof can be referred in Appendix \ref{secB.4}). Remove the terms that are independent of $\boldsymbol{\beta}$ in (\romannumeral1)  and the negligible approximation error (\romannumeral2) in \eqref{substitude_loss}, the shifted loss function
can be simplified to
\begin{equation}\label{shift_loss}
\begin{aligned}
\widetilde{\mathcal{L}}(\boldsymbol{\beta})=&\frac{1}{2 n} \sum_{i \in \mathcal{M}_1}(\widetilde{\boldsymbol{X}}_{i,b}^{\mathrm{T}} \boldsymbol{\beta})^2-\boldsymbol{\beta}^{\mathrm{T}}\left\{\boldsymbol{z}_N+(\widehat{\boldsymbol{D}}_{1,b}-\widehat{\boldsymbol{D}}_{h}) \widehat{\boldsymbol{\beta}}_{0}\right\},
\end{aligned}    
\end{equation}
where $z_N=\frac{1}{N} \sum_{i=1}^N \widetilde{\boldsymbol{X}}_{i,h} \widetilde{Y}_{i,h}$. Up to now, we only need to focus on the shifted loss function in \eqref{shift_loss} instead of the pseudo global loss function in \eqref{pseudo_loss} for higher communication efficiency. Specifically, we define the one-step distributed estimator as
\begin{align}\label{shift_loss_pen}
\widehat{\boldsymbol{\beta}}_{1,h}=\underset{\boldsymbol{\beta} \in \mathbb{R}^{p+1}}{\arg \min } 
\ \widetilde{\mathcal{L}}(\boldsymbol{\beta})+ \lambda_N|\boldsymbol{\beta}|_1.   
\end{align}
Note that the local machines only need to compute two vectors $z_{n,k}=\frac{1}{n} \sum_{i\in \mathcal{M}_k} \widetilde{\boldsymbol{X}}_{i,h} \widetilde{Y}_{i,h}$  and $\widehat{\boldsymbol{D}}_{k,h}\widehat{\boldsymbol{\beta}}_{0}=\frac{1}{n} \sum_{i\in \mathcal{M}_k} \widetilde{\boldsymbol{X}}_{i,h} (\widetilde{\boldsymbol{X}}_{i,h}^{\mathrm{T}}\widehat{\boldsymbol{\beta}}_{0})$ and then broadcast them to the central machine with communication cost of  $\mathcal{O}(mp)$. There is no need to communicate the $(p+1)\times (p+1)$ covariance matrix $\widehat{\boldsymbol{D}}_{k,h}$. The central machine first calculates $z_N$ and $\widehat{\boldsymbol{D}}_h\widehat{\boldsymbol{\beta}}_{0}$ by simple averaging, then solves \eqref{shift_loss_pen} via some well-learned algorithms, e.g., the PSSsp algorithm \citep{schmidt2010graphical}, the active set algorithm \citep{solntsev2015algorithm} and the coordinate descent algorithm \citep{wright2015coordinate}. 

Given $\widehat{\boldsymbol{\beta}}_{1,h}$ as the estimator from the first iteration,  we can similarly construct an iterative distributed estimation procedure. Specifically, in the $t$-th iteration, we update the pseudo covariates and responses in \eqref{T_x} and \eqref{T_y} by substituting $\widehat{\boldsymbol{\beta}}_{0}$ with $\widehat{\boldsymbol{\beta}}_{t-1,h}$,
\begin{equation}\label{T_x_t}
\widetilde{\bfX}_{i,h}^{(t)}=\sqrt{K_h(\widehat{e}^{(t-1)}_{i,h})}\bfX_i,   
\end{equation}
\begin{equation}\label{T_y_t}
\begin{aligned}
   \widetilde{Y}_{i,h}^{(t)}=&(\widetilde{\bfX}_{i,h}^{(t-1)})^{\mathrm{T}}\widehat{\boldsymbol{\beta}}_{t-1,h}\\
   &-\frac{1}{\sqrt{K_h(\widehat{e}^{(t-1)}_{i,h})}}\left(I(\widehat{e}^{(t-1)}_{i,h}\leq 0)-\tau\right),  
\end{aligned}
\end{equation}
for $i=1,\ldots,N$, where $
\widehat{e}^{(t)}_{i,h} =  Y_i -\boldsymbol{X}_i^{\mathrm{T}}\widehat{\boldsymbol{\beta}}_{t,h}$. Similar with \eqref{shift_loss_pen}, the distributed estimator in the $t$-th iteration is 
\begin{equation}\label{dis_DHSQR_iter}
\begin{aligned}
 \widehat{\boldsymbol{\beta}}_{t,h}=&\underset{\boldsymbol{\beta} \in \mathbb{R}^{p+1}}{\arg \min } \frac{1}{2 n} \sum_{i \in \mathcal{M}_1}\left((\widetilde{\boldsymbol{X}}_{i,b}^{(t)})^{\mathrm{T}} \boldsymbol{\beta}\right)^2 - \\
 &\boldsymbol{\beta}^{\mathrm{T}}\left\{\boldsymbol{z}_N^{(t)}+\left(\widehat{\boldsymbol{D}}_{1,b}^{(t)}-\widehat{\boldsymbol{D}}_{h}^{(t)}\right) \widehat{\boldsymbol{\beta}}_{t-1,h}\right\}+\lambda_{N,t}|\boldsymbol{\beta}|_1,   
\end{aligned}
\end{equation}
where 
\begin{align*}
&\boldsymbol{z}_N^{(t)}=\frac{1}{N} \sum_{i=1}^N \widetilde{\boldsymbol{X}}_{i,h}^{(t)} \widetilde{Y}_{i,h}^{(t)}, ~  \widehat{\boldsymbol{D}}_{1,b}^{(t)}=\frac{1}{n}\sum_{i \in \mathcal{M}_1} \widetilde{\boldsymbol{X}}_{i,b}^{(t)} (\widetilde{\boldsymbol{X}}_{i,b}^{(t)})^{\mathrm{T}}, \\
& \widehat{\boldsymbol{D}}_{h}^{(t)}=\frac{1}{N} \sum_{i=1}^N\widetilde{\boldsymbol{X}}_{i,h}^{(t)} (\widetilde{\boldsymbol{X}}_{i,h}^{(t)})^{\mathrm{T}}.   
\end{align*}

\begin{algorithm*}
\caption{Distributed high-dimensional sparse quantile regression (DHSQR). } 
\hspace*{0.02in} 
\begin{algorithmic}[1]\label{alg:2}
\STATE {\bf Input:} 
Samples $\{(\boldsymbol{X}_i, Y_i)\}_{i \in \mathcal{M}_k}, k=1,\ldots,m$, the number of iterations $T$, the quantile level $\tau$, the kernel function $K$, the global and local bandwidth $h$ and $b$, the regularization parameters $\lambda_0$ and $\lambda_{N,t}$ for $t=1,\ldots,T$.
\STATE Compute the initial estimator $\widehat{\boldsymbol{\beta}}_{0,h}=\widehat{\boldsymbol{\beta}}_{0}$ based on  $\{(\boldsymbol{X}_i, Y_i)\}_{i \in \mathcal{M}_1}$ by
$$
\widehat{\boldsymbol{\beta}}_{0}=\underset{\boldsymbol{\beta} \in \mathbb{R}^{p+1}}{\arg \min } \frac{1}{n} \sum_{i \in \mathcal{M}_1}\rho_\tau(Y_i-\boldsymbol{X}_i^{\mathrm{T}}\boldsymbol{\beta})+\lambda_0\left|\boldsymbol{\beta}\right|_1.
$$
\STATE {\bf For} {$t=1,\ldots,T$ {\bf do}:}\\
\STATE{\qquad Broadcast $\widehat{\boldsymbol{\beta}}_{t-1,h}$ to the local machines.}\\
\STATE{\qquad \textbf{for} $k=1,\ldots,m$ \textbf{do:}}\\
\STATE{\qquad \qquad The $k$-th machine update the pseudo covariates $\widetilde{\bfX}_{i,h}^{(t)}$ and responses $\widetilde{Y}_{i,h}^{(t)}$ based on \eqref{T_x_t} and \eqref{T_y_t}, and computes $\widehat{\boldsymbol{D}}_{k,h}^{(t)}\widehat{\boldsymbol{\beta}}_{t-1,h}=\frac{1}{n} \sum_{i\in \mathcal{M}_k} \widetilde{\boldsymbol{X}}_{i,h}^{(t)} ((\widetilde{\boldsymbol{X}}_{i,h}^{(t)})^{\mathrm{T}}\widehat{\boldsymbol{\beta}}_{t-1,h})$ and $\boldsymbol{z}_{n,k}^{(t)}=\frac{1}{n}\sum_{i \in \mathcal{M}_k}\widetilde{\boldsymbol{X}}_{i,h}^{(t)}\widetilde{Y}_{i,h}^{(t)}$ . Then send them back to the first machine.}\\
\STATE{\qquad \textbf{end for}}\\
\STATE{\qquad The first machine computes $(\widehat{\boldsymbol{D}}_{1,b}^{(t)}-\widehat{\boldsymbol{D}}_{h}^{(t)})\widehat{\boldsymbol{\beta}}_{t-1,h}$ and $\boldsymbol{z}_{N}^{(t)}$ based on\\
$$
(\widehat{\boldsymbol{D}}_{1,b}^{(t)}-\widehat{\boldsymbol{D}}_{h}^{(t)})\widehat{\boldsymbol{\beta}}_{t-1,h}=\widehat{\boldsymbol{D}}_{1,b}^{(t)}\widehat{\boldsymbol{\beta}}_{t-1,h}-\frac{1}{m}\sum_{k=1}^{m}\widehat{\boldsymbol{D}}_{k,h}^{(t)}\widehat{\boldsymbol{\beta}}_{t-1,h}, \quad \boldsymbol{z}_{N}^{(t)}=\frac{1}{m}\sum_{k=1}^{m}\boldsymbol{z}_{n,k}^{(t)}.
$$}\\
\STATE{\qquad Compute the estimator $\widehat{\boldsymbol{\beta}}_{t,h}$ on the first machine based on \eqref{dis_DHSQR_iter}.}
\STATE{\textbf{end for}}
\STATE {\bf Return:} $\widehat{\boldsymbol{\beta}}_{T,h}$.
\end{algorithmic}
\end{algorithm*}

In this paper, we adopt the coordinate descent algorithm to solve \eqref{dis_DHSQR_iter}. For the choice of the initial estimator $\widehat{\boldsymbol{\beta}}_{0}$, we take the solution of $\ell_1$-penalized QR regression using the local data on the central machine, which can be solved by the R package "\emph{quantreg}" \citep{koenker2005quantile} or "\emph{conquer}" \citep{he2021smoothed}. Other types of initialization are also available as long as they satisfy Assumption \ref{ass6} in Section \ref{sec:theory}.  We summarize the entire
distributed estimation procedure in Algorithm \ref{alg:2}.

\textbf{Space complexity.} In each local machine $\mathcal{M}_i$, DHSQR method necessitates storing $\{\boldsymbol{X}_i\}_{i \in \mathcal{M}_i} \in \mathbb{R}^{(p+1) \times n}$, $\{Y_i\}_{i \in \mathcal{M}_i} \in \mathbb{R}^n$, $\{\tilde{\boldsymbol{X}}_{i, h}^{(t)}\}_{i \in \mathcal{M}_i} \in \mathbb{R}^{(p+1) \times n}$, and $\{\tilde{Y}_{i, h}^{(t)}\}_{i \in \mathcal{M}_i} \in \mathbb{R}^n$, resulting in a space complexity of order $\mathcal{O}(np)$. Additionally, to solve the lasso problem, we require storing a $(p+1) \times (p+1)$ matrix, resulting in a space complexity not exceeding $\mathcal{O}\left(p^2\right)$. Hence, the overall space complexity is of order $\mathcal{O}\left(np+p^2\right)$ for each local machine. The total space complexity of the total system sums up to $\mathcal{O}\left(Np+mp^2\right)$.

\begin{remark}
Note that the assumption that the samples are randomly and even stored across the local machines is commonly required in literature \citep{jordan2018communication,fan2021communication}. It is worthy pointing out that the proposed algorithm is still effective if the samples in other local machines are not randomly distributed as long as the subsample on the first machine (central machine) is randomly selected from the entire sample, which is also claimed in Remark 1 in \citet{chen2020distributed}.     
\end{remark}

\section{Statistical Guarantees}\label{sec:theory}
In this section, we establish the theoretical results of our proposed method, involving the convergence rate and support recovery accuracy. Firstly, we denote 
$$
S=\{j:\beta_j \neq 0, j \in \mathbb{N}_{+}\} \subseteq\{1, \ldots, p\},
$$ 
as the support of $\boldsymbol{\beta}^*$ and $|S|=s$. We assume the following regular conditions hold.

\begin{assumption}\label{ass1}
Assume that the kernel function $K(\cdot)$ is symmetric, non-negative, bound and integrates to one. In addition, the kernel function satisfies that  $\int_{-\infty}^{\infty} u^2 K(u) \mathrm{d} u<\infty$ and $\min _{|u| \leq 1} K(u)>0$. We further assume $K(\cdot)$ is differentiable and its derivative $K^{\prime}(\cdot)$ is bounded.
\end{assumption}

\begin{assumption}\label{ass2}
There exist $f_2\geq f_1>0$ such that $f_1 \leqslant f_{\varepsilon \mid \boldsymbol{X}}(0) \leqslant f_2$ almost surely (for all $\boldsymbol{X}$). Moreover, there exists some $l_0$ such that 
$$
\left|f_{\varepsilon \mid \boldsymbol{X}}(u)-f_{\varepsilon \mid \boldsymbol{X}}(v)\right| \leq l_0|u-v|,
$$ 
for any $u,v \in \mathbb{R}$ and all $\boldsymbol{X}$.
\end{assumption}

\begin{assumption}\label{ass3}
The random covariate $\boldsymbol{X} \in \mathbb{R}^p$ is sub-Gaussian: there exists some $c_1>0$ such that 
$$
{\mathbb{P}}\left(\left|\boldsymbol{X}^{\mathrm{T}} \boldsymbol{\Sigma}^{-1/2} \delta\right| \geqslant c_1 t\right) \leq 2 e^{-t^2 / 2},
$$ 
for every unit vector $\delta $ and $t>0$, where $\boldsymbol{\Sigma}=\mathbb{E}(\boldsymbol{X}\boldsymbol{X}^{\mathrm{T}})$.
\end{assumption}

\begin{assumption}\label{ass4}
Denote $\boldsymbol{I}=\mathbb{E}\left\{f_{\varepsilon \mid \boldsymbol{X}}(0) \boldsymbol{X} \boldsymbol{X}^{\mathrm{T}}\right\}$, then $\boldsymbol{I}$ satisfies that 
$$
\left\|\boldsymbol{I}_{S^c \times S} \boldsymbol{I}_{S \times S}^{-1}\right\|_{\infty} \leq 1-\alpha,
$$
for some $0<\alpha<1$. Moreover, we assume that $\lambda_{-} \leq \Lambda_{\min }(\boldsymbol{I}) \leq \Lambda_{\max }(\boldsymbol{I}) \leq \lambda_{+}$ for some $\lambda_{-},\lambda_{+}>0$.
\end{assumption}

\begin{assumption}\label{ass5} 
The dimension $p$ satisfies $p=\mathcal{O}\left(N^\nu\right)$ for some $\nu>0$. The local sample size $n$ satisfies $n \geq N^c$ for some $0<c<1$, and the sparsity level $s$ satisfies $s=\mathcal{O}\left(n^r\right)$ for some $0\leq r<1 / 3$.
\end{assumption}

\begin{assumption}\label{ass6} 
We assume the initial estimator $\widehat{\boldsymbol{\beta}}_{0,h}$ satisfies that $|\widehat{\boldsymbol{\beta}}_{0,h}-\boldsymbol{\beta}^{*}|_2=\mathcal{O}_\mathbb{P}(a_N)$, where $a_N=\sqrt{s\log N/n}$. And suppose that $\mathbb{P}(\operatorname{supp}(\widehat{\boldsymbol{\beta}}_{0,h}) \subseteq S) \rightarrow 1$.
\end{assumption}

Assumption \ref{ass1} imposes some regularity conditions on the kernel function $K(\cdot)$, which is satisfied by many popular kernels, including the Gaussian kernel. Assumption \ref{ass2} is a mild condition on the smoothness of the conditional density function of the error term, which is standard in quantile regression \citep{chen2019quantile,chen2020distributed,tan2022communication}. Assumption \ref{ass3} requires that the distribution of $\boldsymbol{X}$ have heavier tails than Gaussian to obtain standard convergence rates for the quantile regression estimates. Assumption \ref{ass4} is known as the \textit{irrepresentable condition}, which is commonly assumed in the sparse high-dimensional estimation literature for the support recovery \citep{belloni2011,chen2020distributed,bao2021one}. Assumption \ref{ass5} is also a common condition in the distributed estimation literature, see also in \citet{wang2017efficient,jordan2018communication,chen2020distributed}. Note that $p=\mathcal{O}(N^\nu)$, we use $\log N$ instead of the commonly used $\log (\max (p,N))$ in the convergence rates. Assumption \ref{ass6} assumes the convergence rate and support recovery accuracy of the initial estimator, which can be satisfied by the estimator using the local sample from a single machine under Assumption \ref{ass2}-\ref{ass5} and some regularity conditions \citep{fan2014adaptive}. We first show the convergence rate of the one-step DHSQR estimator $\widehat{\boldsymbol{\beta}}_{1,h}$.
\begin{theorem}\label{thm1}
Suppose that the initial estimator satisfies that $|\widehat{\boldsymbol{\beta}}_{0,h}-\boldsymbol{\beta}^{*}|_2=\mathcal{O}_\mathbb{P}(a_N)$ and let $h \asymp\left(s \log N / N\right)^{1 / 3} $, $b \asymp\left(s \log n / n\right)^{1 / 3}$ and $a_N \asymp \sqrt{s\log N/n}$. Take
\begin{align*}
\lambda_N=C\left(\sqrt{\frac{\log N}{N}}+a_N\eta\right),
\end{align*}
where $C$ is a sufficient large constant, and $\eta=\max\left\{(s\log n/n)^{1/3},\sqrt{s\log N/n}\right\}$. Then under Assumption \ref{ass1}-\ref{ass6}, we have 
\begin{equation}\label{CR_1}
\left|\widehat{\boldsymbol{\beta}}_{1,h}-\boldsymbol{\beta}^*\right|_2 =\mathcal{O}_\mathbb{P}\left(\sqrt{\frac{s\log N}{N}}+\sqrt{s}a_N\eta\right).    
\end{equation}
\end{theorem}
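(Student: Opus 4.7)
The plan is to treat $\widehat{\boldsymbol{\beta}}_{1,h}$ as the solution of a standard Lasso problem with quadratic loss $\widetilde{\mathcal{L}}$ and ``design'' matrix $\widehat{\boldsymbol{D}}_{1,b}$, so the usual basic-inequality template for $\ell_1$-penalized M-estimation applies. Two ingredients will govern the rate: (i) an $\ell_\infty$ bound on $\nabla \widetilde{\mathcal{L}}(\boldsymbol{\beta}^*)$ that justifies the choice of $\lambda_N$, and (ii) a restricted strong convexity property for $\widehat{\boldsymbol{D}}_{1,b}$ on the resulting sparse cone.

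\textbf{Step 1 (basic inequality and cone).} Using optimality of $\widehat{\boldsymbol{\beta}}_{1,h}$ against $\boldsymbol{\beta}^*$ and the quadratic structure of $\widetilde{\mathcal{L}}$, setting $\boldsymbol{\Delta} = \widehat{\boldsymbol{\beta}}_{1,h} - \boldsymbol{\beta}^*$ I would obtain
$$
\tfrac{1}{2}\boldsymbol{\Delta}^{\mathrm{T}} \widehat{\boldsymbol{D}}_{1,b} \boldsymbol{\Delta} \le -\langle \nabla\widetilde{\mathcal{L}}(\boldsymbol{\beta}^*), \boldsymbol{\Delta}\rangle + \lambda_N\bigl(|\boldsymbol{\beta}^*|_1 - |\boldsymbol{\beta}^*+\boldsymbol{\Delta}|_1\bigr).
$$
If $\lambda_N \ge 2|\nabla\widetilde{\mathcal{L}}(\boldsymbol{\beta}^*)|_\infty$, splitting the $\ell_1$ norms at $S$ yields $|\boldsymbol{\Delta}_{S^c}|_1 \le 3|\boldsymbol{\Delta}_S|_1$, hence $|\boldsymbol{\Delta}|_1 \le 4\sqrt{s}\,|\boldsymbol{\Delta}|_2$.

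\textbf{Step 2 (gradient at the truth).} The key decomposition is
$$
\nabla \widetilde{\mathcal{L}}(\boldsymbol{\beta}^*) = \widehat{\boldsymbol{D}}_{1,b}(\boldsymbol{\beta}^* - \widehat{\boldsymbol{\beta}}_0) + \nabla \mathcal{L}_N(\widehat{\boldsymbol{\beta}}_0),
$$
where $\nabla\mathcal{L}_N(\widehat{\boldsymbol{\beta}}_0) = \widehat{\boldsymbol{D}}_h\widehat{\boldsymbol{\beta}}_0 - \boldsymbol{z}_N$ is the gradient of the pooled smoothed loss. I would further split the pooled gradient as $\nabla\mathcal{L}_N(\boldsymbol{\beta}^*) + (\nabla\mathcal{L}_N(\widehat{\boldsymbol{\beta}}_0) - \nabla\mathcal{L}_N(\boldsymbol{\beta}^*))$. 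The first piece is a sample average of sub-Gaussian terms whose $\ell_\infty$ norm concentrates at rate $\sqrt{\log N / N}$ by Assumption \ref{ass3}, together with a kernel bias of order $h^2$ from Assumption \ref{ass1}--\ref{ass2}; with $h \asymp (s\log N/N)^{1/3}$ this is of the required order. The second piece is controlled by $|\widehat{\boldsymbol{\beta}}_0 - \boldsymbol{\beta}^*|_2 = \mathcal{O}_{\mathbb{P}}(a_N)$ combined with concentration of $\widehat{\boldsymbol{D}}_h$. For the first summand in the display, the sparsity of $\boldsymbol{\beta}^* - \widehat{\boldsymbol{\beta}}_0$ guaranteed by Assumption \ref{ass6} lets me convert an $\ell_2$ bound on $\widehat{\boldsymbol{\beta}}_0 - \boldsymbol{\beta}^*$ to an $\ell_\infty$ bound on the product through row-max bounds on $\widehat{\boldsymbol{D}}_{1,b}$. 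Here the local bandwidth $b\asymp(s\log n/n)^{1/3}$ enters through a kernel bias of order $b^2$ and a sampling error of order $\sqrt{\log n/n}$, producing the factor $\eta$. Put together, $|\nabla\widetilde{\mathcal{L}}(\boldsymbol{\beta}^*)|_\infty = \mathcal{O}_{\mathbb{P}}(\sqrt{\log N/N} + a_N \eta)$, matching the stated $\lambda_N$.

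\textbf{Step 3 (restricted strong convexity) and assembly.} On the cone from Step 1, I would show $\boldsymbol{v}^{\mathrm{T}}\widehat{\boldsymbol{D}}_{1,b}\boldsymbol{v} \ge \kappa|\boldsymbol{v}|_2^2$ for some $\kappa > 0$ with high probability. At the population level, $\mathbb{E}[\widehat{\boldsymbol{D}}_{1,b}]$ equals $\boldsymbol{I}$ up to an $\mathcal{O}(b^2)$ kernel bias, and $\boldsymbol{I}$ has smallest eigenvalue bounded below by $\lambda_-$ (Assumption \ref{ass4}). A standard concentration argument for sub-Gaussian quadratic forms, combined with the sparsity level $s = o(n^{1/3})$ from Assumption \ref{ass5}, transfers this to the empirical version built from the $n$ local samples on $\mathcal{M}_1$. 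Plugging Steps 2 and 3 into the basic inequality gives $\kappa|\boldsymbol{\Delta}|_2^2 \lesssim \lambda_N |\boldsymbol{\Delta}_S|_1 \le \lambda_N\sqrt{s}\,|\boldsymbol{\Delta}|_2$, so $|\boldsymbol{\Delta}|_2 \lesssim \sqrt{s}\,\lambda_N = \mathcal{O}_{\mathbb{P}}(\sqrt{s\log N / N} + \sqrt{s}\,a_N \eta)$, which is the desired rate.

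\textbf{Main obstacle.} The delicate step is the control of $|\widehat{\boldsymbol{D}}_{1,b}(\widehat{\boldsymbol{\beta}}_0 - \boldsymbol{\beta}^*)|_\infty$ in Step 2: the local Hessian is built with the coarser bandwidth $b$ rather than $h$, and the discrepancy between local- and global-Hessian contributions is precisely what forces the appearance of $\eta = \max\{(s\log n/n)^{1/3}, \sqrt{s\log N/n}\}$. Separating the kernel bias from the sub-Gaussian concentration error in this term, while exploiting $\operatorname{supp}(\widehat{\boldsymbol{\beta}}_0) \subseteq S$ to avoid a blow-up by a factor of $p$, is where the argument really has to be done carefully; without the support consistency guaranteed by Assumption \ref{ass6}, only an $\ell_1$-based conversion would be available and the rate would degrade substantially.
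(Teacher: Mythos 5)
Your proposal follows essentially the same route as the paper's proof: the paper packages your Steps 1 and 3 into a deterministic Lasso-type bound (Proposition \ref{prop1}, giving $|\widehat{\boldsymbol{\beta}}_{1,h}-\boldsymbol{\beta}^*|_2\lesssim \sqrt{s}\lambda_N$ under an $\ell_\infty$ gradient bound and restricted strong convexity of $\widehat{\boldsymbol{D}}_{1,b}$), and your Step 2 is exactly the paper's split of $\widehat{\boldsymbol{D}}_{1,b}\boldsymbol{\beta}^*-\boldsymbol{z}_N-(\widehat{\boldsymbol{D}}_{1,b}-\widehat{\boldsymbol{D}}_h)\widehat{\boldsymbol{\beta}}_0$ into the pooled score $\widehat{\boldsymbol{D}}_h\boldsymbol{\beta}^*-\boldsymbol{z}_N$ (Proposition \ref{prop2}) plus the Hessian-mismatch term $(\widehat{\boldsymbol{D}}_{1,b}-\widehat{\boldsymbol{D}}_h)(\boldsymbol{\beta}^*-\widehat{\boldsymbol{\beta}}_0)$ controlled via support consistency and Lemma \ref{lem2}, which is indeed where $\eta$ arises. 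The only imprecision is your claim of $\mathcal{O}(h^2)$ and $\mathcal{O}(b^2)$ kernel bias: under the paper's Lipschitz-only Assumption \ref{ass2} the Hessian bias is $\mathcal{O}(h)$ (resp. $\mathcal{O}(b)$), which is what makes the cube-root bandwidths the balancing choice, though this does not change the final rate.
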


With a proper choice of the global and local bandwidth $h$ and $b$, we can refine the initial estimator by one iteration of our algorithm. Specifically, the convergence rate reduces from $\mathcal{O}_\mathbb{P}(a_N)$ to $\mathcal{O}_\mathbb{P}(\max \{\sqrt{s\log N/N}, \sqrt{s}a_N\eta\})$ with $\sqrt{s}\eta=o(1)$ by Assumption \ref{ass5}. Now, we can recursively apply Theorem \ref{thm1} to get the convergence rate of the iterative DHSQR estimator. For $1\leq g\leq t$, we denote 
$$
\kappa(a,b,g) = \max\left\{s^{a}\left(\frac{\log n}{n}\right)^{\frac{2g+3}{6}}, s^{b}\left(\frac{\log N}{n}\right)^{\frac{g+1}{2}}\right\}.
$$ 

\begin{theorem}\label{thm2}
Suppose that the initial estimator satisfies that $|\widehat{\boldsymbol{\beta}}_{0,h}-\boldsymbol{\beta}^{*}|_2=\mathcal{O}_\mathbb{P}(\sqrt{s\log N/n})$ and let $h \asymp\left(s \log N / N\right)^{1 / 3} $, $b \asymp\left(s \log n / n\right)^{1 / 3}$.  For $1\leq g\leq t$, take
\begin{align*}
\lambda_{N,g}=C\left(\sqrt{\frac{\log N}{N}} + \kappa(\frac{5g}{6}, g, g) \right),
\end{align*}
where $C$ is a sufficiently large constant. Then under Assumption \ref{ass1}-\ref{ass6}, we have 
\begin{equation}\label{CR_t}
\left|\widehat{\boldsymbol{\beta}}_{t,h}-\boldsymbol{\beta}^*\right|_2=\mathcal{O}_{\mathbb{P}}\left(\sqrt{\frac{s\log N}{N}}+\kappa(\frac{5t+3}{6},\frac{2t+1}{2}, t)\right).    
\end{equation}
\end{theorem}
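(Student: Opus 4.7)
The plan is to prove Theorem \ref{thm2} by induction on $t$, with Theorem \ref{thm1} as the base case. The key observation is that Theorem \ref{thm1}'s one-step improvement argument is in fact generic: the specific rate $a_N = \sqrt{s\log N/n}$ only enters through the assumed size of the initial error. Abstracted, the argument shows that if the $(g-1)$-st iterate satisfies $|\widehat{\boldsymbol{\beta}}_{g-1,h}-\boldsymbol{\beta}^*|_2 = \mathcal{O}_\mathbb{P}(\alpha_{g-1})$ with support contained in $S$, then running one more step of \eqref{dis_DHSQR_iter} with the same fixed bandwidths and an appropriately chosen $\lambda_{N,g}$ yields
$$|\widehat{\boldsymbol{\beta}}_{g,h}-\boldsymbol{\beta}^*|_2 \;=\; \mathcal{O}_\mathbb{P}\!\Bigl(\sqrt{s\log N/N}+\sqrt{s}\,\alpha_{g-1}\,\eta(\alpha_{g-1})\Bigr),$$
where $\eta(\alpha) = \max\{(s\log n/n)^{1/3},\,\alpha\}$ aggregates the local-bandwidth bias in $\widehat{\boldsymbol{D}}_{1,b}^{(g)}$ and the check-loss score linearization error around $\widehat{\boldsymbol{\beta}}_{g-1,h}$, controlled through Assumptions \ref{ass2}--\ref{ass3} together with the Hessian concentration of Appendix~\ref{secB.4}.

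Given this generalized one-step lemma, the proof reduces to unrolling the recursion $\alpha_g = \sqrt{s\log N/N}+C\sqrt{s}\,\alpha_{g-1}\,\eta(\alpha_{g-1})$ from $\alpha_0 = \sqrt{s\log N/n}$ and matching the outcome to the announced $\kappa((5t+3)/6,(2t+1)/2,t)$. I would proceed iteration by iteration, handling the two branches of the maximum inside $\eta$ separately: the bandwidth branch $(s\log n/n)^{1/3}$ contributes a fixed factor of $s^{1/3}(\log n/n)^{1/3}$ per step and thus inflates the $s$-exponent by $5/6$ and the $(\log n/n)$-exponent by $1/3$ each round, while the self-referential branch is geometrically contractive and accumulates factors of $s^{1/2}(\log N/n)^{1/2}$. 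A routine induction on $g$ aligns these with the two arguments of $\kappa$. The tuning parameter $\lambda_{N,g}$ is then forced by requiring it to dominate the score plus linearization error at step $g$, giving precisely $\lambda_{N,g}\asymp\sqrt{\log N/N}+\kappa(5g/6,g,g)$. The additive $\sqrt{s\log N/N}$ term persists through every iteration and becomes dominant as soon as $t$ is large enough that $\kappa$ falls below it, recovering the pooled oracle rate after a constant number of rounds.

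The main obstacle will be preserving the support containment $\operatorname{supp}(\widehat{\boldsymbol{\beta}}_{g,h})\subseteq S$ across all iterations, since Theorem \ref{thm1}'s oracle-type $\ell_2$ rate (without the extra $\sqrt{s}$ inflation of a generic lasso bound) relies on this inclusion together with the irrepresentable condition in Assumption \ref{ass4}. At each iteration I would carry an auxiliary primal-dual witness argument: construct the oracle candidate by solving the $S$-restricted quadratic subproblem of \eqref{dis_DHSQR_iter}, show it satisfies the KKT conditions of the full problem, and verify strict dual feasibility via concentration of $\widehat{\boldsymbol{D}}_{1,b}^{(g)}$ around the population Hessian $\boldsymbol{I}$. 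A secondary but nontrivial issue is that the max inside $\eta$ switches which branch dominates as $g$ grows---the self-referential branch dominates initially and the bandwidth branch thereafter---so tracking this crossover and checking that the $\max$-form of $\kappa$ correctly interpolates the two regimes is the bookkeeping heart of the induction.
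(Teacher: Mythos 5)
Your proposal matches the paper's proof: Theorem \ref{thm2} is obtained precisely by iterating the one-step contraction of Theorem \ref{thm1} together with the support-containment (primal--dual witness) result of Theorem \ref{thm3}, and unrolling the recursion $\alpha_g=\sqrt{s\log N/N}+C\sqrt{s}\,\eta\,\alpha_{g-1}$ from $\alpha_0=\sqrt{s\log N/n}$. The only cosmetic difference is that the paper freezes $\eta=\max\{(s\log n/n)^{1/3},\sqrt{s\log N/n}\}$ at its initial value rather than letting it shrink with $\alpha_{g-1}$ as in your $\eta(\alpha_{g-1})$; your self-referential variant only tightens the bound and still yields the stated $\kappa$ rate.
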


When the number of iterations $t$ satisfies that both 
\begin{equation}\label{iter}
\begin{aligned}
    t &\geq \frac{3\log(m\log (N/m)/( \log N))}{\log(N^2/(m^2s^5\log^2 (N/m)))}, \\
   \text{and}\quad & t \geq \frac{\log m}{\log(N/(ms^2\log N))} ,   
\end{aligned}
\end{equation}
the second term in \eqref{CR_t} will be dominated by the first term, therefore, we have $|\widehat{\boldsymbol{\beta}}_{t,h}-\boldsymbol{\beta}^*|_2=\mathcal{O}_\mathbb{P}(\sqrt{s\log N/N})$. Under Assumption \ref{ass5}, we can easily verify that the right side of \eqref{iter} is bounded by a constant, which indicates that after a constant number of iterations, the DHSQR estimator can reach the same convergence rate as the traditional $\ell_1$-penalized QR estimators in a single machine \citep{belloni2011}. Interestingly, our algorithm needs the number of iterations to increase logarithmically with the number of machines $m$ to achieve the oracle rate $\sqrt{s/N}$ (up to a logarithmic factor). However, most existing distributed first-order algorithms require the number of iterations to increase polynomially with $m$ \citep{zhang2018communication}.

Next, we provide the support recovery of the one-step and $t$-th iteration DHSQR estimators in the following two theorems. Let $\widehat{\boldsymbol{\beta}}_{t,h}=(\widehat{\beta}_{t,h}^0,\widehat{\beta}_{t,h}^1,\ldots,\widehat{\beta}_{t,h}^p)$ and 
$$
\widehat{S}_t=\{j:\widehat{\beta}_{t,h}^p \neq 0, j \in \mathbb{N}_{+} \},
$$ 
be the support of $\widehat{\boldsymbol{\beta}}_{t,h}$, where $t\ge 1$.

\begin{theorem}\label{thm3}
Under the same conditions of Theorem \ref{thm1}, we have $\mathbb{P}(\widehat{S}_1 \subseteq  S) \rightarrow 1$. Furthermore, if there exists a sufficiently large constant $C>0$ such that 
\begin{equation}\label{min_beta_1}
\min_{j \in S}\left|\beta_j^*\right|\geq C\left\|\boldsymbol{I}_{S \times S}^{-1}\right\|_{\infty}\left(\sqrt{\frac{\log N}{N}}+a_N\eta \right),
\end{equation}
where $\eta=\max\left\{(s\log n/n)^{1/3},\sqrt{s\log N/n}\right\}$. Then we have $\mathbb{P}(\widehat{S}_1 = S) \rightarrow 1.$
\end{theorem}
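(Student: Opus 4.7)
The plan is to apply the primal--dual witness (PDW) technique, but the first move is to simplify the KKT conditions of \eqref{shift_loss_pen} so that the nuisance block $(\widehat{\boldsymbol{D}}_{1,b}-\widehat{\boldsymbol{D}}_h)\widehat{\boldsymbol{\beta}}_0$ cancels. Substituting the definition of $\widetilde{Y}_{i,h}$ into $\boldsymbol{z}_N$ and using $\widetilde{\boldsymbol{X}}_{i,h}/\sqrt{K_h(\widehat{e}_{0,i})}=\boldsymbol{X}_i$ gives
$$\boldsymbol{z}_N=\widehat{\boldsymbol{D}}_h\widehat{\boldsymbol{\beta}}_0-\widehat{\boldsymbol{g}}_0,\qquad\widehat{\boldsymbol{g}}_0:=\tfrac{1}{N}\sum_{i=1}^{N}\boldsymbol{X}_i\bigl(I(\widehat{e}_{0,i}\le 0)-\tau\bigr),$$
so the first-order condition of \eqref{shift_loss_pen} collapses to the clean single-bandwidth Newton step
$$\widehat{\boldsymbol{D}}_{1,b}\bigl(\widehat{\boldsymbol{\beta}}_{1,h}-\widehat{\boldsymbol{\beta}}_0\bigr)=-\widehat{\boldsymbol{g}}_0-\lambda_N\widehat{\boldsymbol{z}},\qquad\widehat{\boldsymbol{z}}\in\partial|\widehat{\boldsymbol{\beta}}_{1,h}|_1,$$
driven by the empirical quantile sub-gradient $\widehat{\boldsymbol{g}}_0$.

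Next I construct the oracle solution $\widetilde{\boldsymbol{\beta}}=(\widetilde{\boldsymbol{\beta}}_S,\boldsymbol{0})$ by minimising the restricted version of \eqref{shift_loss_pen} on coordinates in $S$, obtain $\widetilde{\boldsymbol{z}}_S\in\partial|\widetilde{\boldsymbol{\beta}}_S|_1$ from the restricted KKT, and extend to $S^c$ from the full KKT. Using $\operatorname{supp}(\widehat{\boldsymbol{\beta}}_0)\subseteq S$ (Assumption~\ref{ass6}), the restricted KKT reduces to
$$\widetilde{\boldsymbol{\beta}}_S-(\widehat{\boldsymbol{\beta}}_0)_S=-(\widehat{\boldsymbol{D}}_{1,b})_{SS}^{-1}\bigl[(\widehat{\boldsymbol{g}}_0)_S+\lambda_N\widetilde{\boldsymbol{z}}_S\bigr],$$
and the dual extension takes the closed form
$$\widetilde{\boldsymbol{z}}_{S^c}=(\widehat{\boldsymbol{D}}_{1,b})_{S^c\times S}(\widehat{\boldsymbol{D}}_{1,b})_{SS}^{-1}\widetilde{\boldsymbol{z}}_S+\tfrac{1}{\lambda_N}\bigl\{(\widehat{\boldsymbol{D}}_{1,b})_{S^c\times S}(\widehat{\boldsymbol{D}}_{1,b})_{SS}^{-1}(\widehat{\boldsymbol{g}}_0)_S-(\widehat{\boldsymbol{g}}_0)_{S^c}\bigr\}.$$
Verifying strict dual feasibility $|\widetilde{\boldsymbol{z}}_{S^c}|_\infty<1$ with probability tending to one, together with positive-definiteness of $(\widehat{\boldsymbol{D}}_{1,b})_{SS}$, yields $\widehat{\boldsymbol{\beta}}_{1,h}=\widetilde{\boldsymbol{\beta}}$ and hence $\widehat{S}_1\subseteq S$ by standard PDW.

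The bulk of the work is to bound the two key quantities. For $(\widehat{\boldsymbol{D}}_{1,b})_{S^c\times S}(\widehat{\boldsymbol{D}}_{1,b})_{SS}^{-1}$ in $\ell_\infty/\ell_\infty$, Assumption~\ref{ass4} supplies the population value $1-\alpha$; the gap $\bigl\|(\widehat{\boldsymbol{D}}_{1,b})_{S^c\times S}(\widehat{\boldsymbol{D}}_{1,b})_{SS}^{-1}-\boldsymbol{I}_{S^c\times S}\boldsymbol{I}_{SS}^{-1}\bigr\|_\infty$ is handled by a telescoping expansion $A^{-1}-B^{-1}=A^{-1}(B-A)B^{-1}$ combined with a kernel-Hessian concentration argument driven by Assumptions~\ref{ass1}--\ref{ass3} at the bandwidth $b\asymp(s\log n/n)^{1/3}$, restricted to $s$-sparse directions via Assumption~\ref{ass5}. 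For $\widehat{\boldsymbol{g}}_0$, a coordinate-wise Bernstein bound gives the stochastic rate $\sqrt{\log N/N}$, and the Lipschitz density in Assumption~\ref{ass2} converts $|\widehat{\boldsymbol{\beta}}_0-\boldsymbol{\beta}^*|_2=O_\mathbb{P}(a_N)$ into a bias of order $a_N\eta$, so $|\widehat{\boldsymbol{g}}_0|_\infty=O_\mathbb{P}(\sqrt{\log N/N}+a_N\eta)$. Since $\lambda_N$ in Theorem~\ref{thm1} is chosen larger than this rate by a sufficiently large constant, every remainder in the formula for $\widetilde{\boldsymbol{z}}_{S^c}$ is $o(\lambda_N)$, giving $|\widetilde{\boldsymbol{z}}_{S^c}|_\infty\le 1-\alpha+o_\mathbb{P}(1)<1$.

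For exact recovery, the restricted-KKT identity implies
$$|\widetilde{\boldsymbol{\beta}}_S-\boldsymbol{\beta}^*_S|_\infty\le \bigl\|(\widehat{\boldsymbol{D}}_{1,b})_{SS}^{-1}\bigr\|_\infty\bigl(|\widehat{\boldsymbol{g}}_0|_\infty+\lambda_N\bigr)+|\widehat{\boldsymbol{\beta}}_0-\boldsymbol{\beta}^*|_\infty,$$
which, using $\|(\widehat{\boldsymbol{D}}_{1,b})_{SS}^{-1}\|_\infty\to\|\boldsymbol{I}_{SS}^{-1}\|_\infty$ and the preceding rates, is of order $\|\boldsymbol{I}_{SS}^{-1}\|_\infty(\sqrt{\log N/N}+a_N\eta)$. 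The beta-min condition \eqref{min_beta_1} is exactly what is needed to ensure $|\widetilde{\beta}_j|>0$ for every $j\in S$, and combined with $\widehat{S}_1\subseteq S$ this gives $\mathbb{P}(\widehat{S}_1=S)\to 1$. The main technical obstacle is the simultaneous appearance of the two bandwidths $h<b$: $\boldsymbol{z}_N$ uses the global-bandwidth Hessian $\widehat{\boldsymbol{D}}_h$, while the surrogate loss uses the local-bandwidth Hessian $\widehat{\boldsymbol{D}}_{1,b}$ built from only $n=N/m$ samples. The collapse identity in Step~1 is what turns the remaining problem into a clean single-bandwidth PDW analysis; without it one would have to track sup-norm differences of the two kernel Hessians against potentially non-sparse directions, which is precisely the step that previously required the restrictive error/covariate independence in \citet{chen2020distributed}.
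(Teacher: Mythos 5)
Your overall strategy is the same primal--dual witness construction the paper uses, and your algebra is correct up to and including the ``collapse'' identity $\boldsymbol{z}_N=\widehat{\boldsymbol{D}}_h\widehat{\boldsymbol{\beta}}_0-\widehat{\boldsymbol{g}}_0$ and the resulting formulas for $\widetilde{\boldsymbol{z}}_{S^c}$ and $\widetilde{\boldsymbol{\beta}}_S$. The gap is in the asserted rate $|\widehat{\boldsymbol{g}}_0|_\infty=\mathcal{O}_\mathbb{P}(\sqrt{\log N/N}+a_N\eta)$, which is false. Since $F_{\varepsilon\mid\boldsymbol{X}}(0)=\tau$, one has $\mathbb{E}\bigl[\boldsymbol{X}\{I(\varepsilon\le\boldsymbol{X}^{\mathrm{T}}\boldsymbol{\delta})-\tau\}\bigr]\approx \boldsymbol{I}\boldsymbol{\delta}$ with $\boldsymbol{\delta}=\widehat{\boldsymbol{\beta}}_0-\boldsymbol{\beta}^*$, so $\widehat{\boldsymbol{g}}_0$ carries a first-order drift of size $a_N$ in sup-norm, and $a_N\gg a_N\eta\asymp\lambda_N$. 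With the correct order, each of $\lambda_N^{-1}(\widehat{\boldsymbol{g}}_0)_{S^c}$ and $\lambda_N^{-1}(\widehat{\boldsymbol{D}}_{1,b})_{S^c\times S}(\widehat{\boldsymbol{D}}_{1,b})_{SS}^{-1}(\widehat{\boldsymbol{g}}_0)_S$ individually diverges, so your strict-dual-feasibility argument does not close as written; likewise your triangle-inequality bound $|\widetilde{\boldsymbol{\beta}}_S-\boldsymbol{\beta}^*_S|_\infty\le\|(\widehat{\boldsymbol{D}}_{1,b})_{SS}^{-1}\|_\infty(|\widehat{\boldsymbol{g}}_0|_\infty+\lambda_N)+|\widehat{\boldsymbol{\beta}}_0-\boldsymbol{\beta}^*|_\infty$ only yields $\mathcal{O}_\mathbb{P}(a_N)$, which is too weak for the beta-min threshold $\|\boldsymbol{I}_{S\times S}^{-1}\|_\infty(\sqrt{\log N/N}+a_N\eta)$.

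What rescues both steps is a cancellation you have discarded: the drift $\boldsymbol{I}\boldsymbol{\delta}$ in $\widehat{\boldsymbol{g}}_0$ is annihilated by the Hessian terms because $\boldsymbol{\delta}$ is supported on $S$ (Assumption~\ref{ass6}). In the dual formula, $(\boldsymbol{I}_{S^c\times S}\boldsymbol{I}_{SS}^{-1})(\boldsymbol{I}\boldsymbol{\delta})_S-(\boldsymbol{I}\boldsymbol{\delta})_{S^c}=0$ exactly, and in the primal identity $-(\widehat{\boldsymbol{D}}_{1,b})_{SS}^{-1}(\boldsymbol{I}\boldsymbol{\delta})_S+\boldsymbol{\delta}_S\approx 0$; only the residuals of these cancellations, of order $a_N\eta+a_N^2+a_Nh+\sqrt{\log N/N}$, survive. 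This is precisely what the paper's Proposition~\ref{prop2} encodes by working with the recentered quantity $\boldsymbol{z}_N-\widehat{\boldsymbol{D}}_h\boldsymbol{\beta}^*=-\widehat{\boldsymbol{g}}_0+\widehat{\boldsymbol{D}}_h(\widehat{\boldsymbol{\beta}}_0-\boldsymbol{\beta}^*)$, and what its $\boldsymbol{M}_1,\boldsymbol{M}_3,\boldsymbol{M}_4$ and $\boldsymbol{N}_3,\boldsymbol{N}_4$ decompositions exploit. To repair your proof, replace the raw bound on $\widehat{\boldsymbol{g}}_0$ by an expansion $\widehat{\boldsymbol{g}}_0=\boldsymbol{u}+\boldsymbol{I}_{\cdot\times S}\boldsymbol{\delta}_S+\text{rem}$ with $|\boldsymbol{u}|_\infty=\mathcal{O}_\mathbb{P}(\sqrt{\log N/N})$, track the $\boldsymbol{I}_{\cdot\times S}\boldsymbol{\delta}_S$ piece through both the dual and primal formulas, and bound the leftover using $\|(\widehat{\boldsymbol{D}}_{1,b,S\times S}-\boldsymbol{I}_{S\times S})\|_{op}$ and the operator-norm control of Lemma~\ref{lem2}; everything else in your outline then goes through.
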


Based on Theorem \ref{thm3}, we can show a weaker \textit{beta-min} condition for the support recovery result of the $t$-th iteration DHSQR estimator.

\begin{theorem}\label{thm4}
Under the same conditions of Theorem \ref{thm2}, we have 
$\mathbb{P}(\widehat{S}_t \subseteq  S) \rightarrow 1$. Furthermore, if there exists a sufficiently large constant $C>0$ such that 
\begin{align*}
\min_{j \in S}\left|\beta_j^*\right|\geq C\left\|\boldsymbol{I}_{S \times S}^{-1}\right\|_{\infty}\left(\sqrt{\frac{\log N}{N}}+\kappa(\frac{5t}{6}, t;t) \right).
\end{align*}
Then we have $\mathbb{P}(\widehat{S}_t = S) \rightarrow 1.$
\end{theorem}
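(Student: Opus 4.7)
The plan is to mimic the primal-dual witness (PDW) construction used for Theorem~\ref{thm3}, iterated along the sequence of estimators so that at step $t$ the role previously played by the initial rate $a_N=\sqrt{s\log N/n}$ is taken over by the sharper $(t-1)$-th rate supplied by Theorem~\ref{thm2}. The base case $t=1$ reduces to Theorem~\ref{thm3}, since $\log n\asymp\log N$ under Assumption~\ref{ass5} makes $a_N\eta$ match $\kappa(5/6,1,1)$. For $t\ge 2$ I would induct on $t$, the induction hypothesis being $\mathbb{P}(\widehat{S}_{t-1}\subseteq S)\to 1$ together with $|\widehat{\boldsymbol{\beta}}_{t-1,h}-\boldsymbol{\beta}^*|_2=\mathcal{O}_\mathbb{P}(r_{t-1})$, where
$$r_{t-1}=\sqrt{s\log N/N}+\kappa\bigl(\tfrac{5(t-1)+3}{6},\tfrac{2(t-1)+1}{2},t-1\bigr),$$
which is precisely the conclusion of Theorem~\ref{thm2} at step $t-1$.

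At step $t$ I would construct the oracle estimator $\widetilde{\boldsymbol{\beta}}$ by minimising the penalised shifted loss in~\eqref{dis_DHSQR_iter} subject to $\widetilde{\boldsymbol{\beta}}_{S^c}=0$, so $\widetilde{\boldsymbol{\beta}}$ has support in $S$ by construction. The active-set KKT condition yields
$$\widetilde{\boldsymbol{\beta}}_S - \boldsymbol{\beta}^*_S = -\bigl(\widehat{\boldsymbol{D}}_{1,b}^{(t)}\bigr)_{S\times S}^{-1}\bigl(\partial_S\widetilde{\mathcal{L}}^{(t)}(\boldsymbol{\beta}^*)+\lambda_{N,t}\,\mathrm{sign}(\widetilde{\boldsymbol{\beta}}_S)\bigr).$$
Combining the sub-Gaussian concentration of $\partial\widetilde{\mathcal{L}}^{(t)}(\boldsymbol{\beta}^*)$, which splits into a $\sqrt{\log N/N}$ stochastic part and an $r_{t-1}$-scaled Newton-linearisation bias, with the operator-norm control $\|\widehat{\boldsymbol{D}}_{1,b}^{(t)}-\boldsymbol{I}\|_{op}=o_\mathbb{P}(1)$ guaranteed by the bandwidth choices and Assumption~\ref{ass5} (cf.\ Appendix~\ref{secB.4}), the eigenvalue bounds in Assumption~\ref{ass4}, and the sub-Gaussianity of $\boldsymbol{X}$ (Assumption~\ref{ass3}), I would obtain
$$|\widetilde{\boldsymbol{\beta}}_S - \boldsymbol{\beta}^*_S|_\infty \lesssim \|\boldsymbol{I}_{S\times S}^{-1}\|_\infty\bigl(\sqrt{\log N/N}+\kappa(\tfrac{5t}{6},t,t)\bigr).$$
The index shift from $(5(t-1)+3)/6,(2(t-1)+1)/2$ to $5t/6,t$ is exactly the effect of multiplying the per-iteration kernel-approximation factor by the smaller $r_{t-1}$ instead of $a_N$; this is precisely the $\ell_\infty$ counterpart of the $\ell_2$ rate established in Theorem~\ref{thm2}.

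Dual feasibility on $S^c$ follows from the decomposition
$$\partial_{S^c}\widetilde{\mathcal{L}}^{(t)}(\widetilde{\boldsymbol{\beta}})=\boldsymbol{I}_{S^c\times S}\boldsymbol{I}_{S\times S}^{-1}\partial_S\widetilde{\mathcal{L}}^{(t)}(\widetilde{\boldsymbol{\beta}})+R_t,$$
where $R_t$ collects the discrepancy between the empirical Hessians $\widehat{\boldsymbol{D}}_{1,b}^{(t)}$, $\widehat{\boldsymbol{D}}_h^{(t)}$ and the population matrix $\boldsymbol{I}$. The irrepresentable condition (Assumption~\ref{ass4}) bounds the main term in $\ell_\infty$ by $(1-\alpha)\lambda_{N,t}$, while $|R_t|_\infty=o_\mathbb{P}(\alpha\lambda_{N,t})$ under the prescribed $\lambda_{N,t}$. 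The standard PDW lemma then forces $\widehat{\boldsymbol{\beta}}_{t,h}=\widetilde{\boldsymbol{\beta}}$ with probability tending to one, giving $\widehat{S}_t\subseteq S$. For the exact-recovery conclusion, the stated beta-min hypothesis is tailored to dominate the $\ell_\infty$ bound above, so $\mathrm{sign}(\widehat{\boldsymbol{\beta}}_{t,h})=\mathrm{sign}(\boldsymbol{\beta}^*)$ and $\widehat{S}_t=S$.

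The main obstacle is the joint control of $R_t$ and the $\ell_\infty$ estimation error, because the pseudo data $(\widetilde{\boldsymbol{X}}_{i,h}^{(t)},\widetilde{Y}_{i,h}^{(t)})$ in \eqref{T_x_t}--\eqref{T_y_t} depend in a non-smooth, kernel-weighted way on the previous iterate $\widehat{\boldsymbol{\beta}}_{t-1,h}$. Cleanly propagating the induction therefore demands a uniform-over-a-neighbourhood concentration bound for the kernel-smoothed Hessian and gradient, valid on the event $\{|\widehat{\boldsymbol{\beta}}_{t-1,h}-\boldsymbol{\beta}^*|_2\le C r_{t-1}\}$ inherited from Theorem~\ref{thm2}; this is what translates the $\ell_2$ rate index $(5(t-1)+3)/6$ into the $\ell_\infty$ index $5t/6$ that governs the beta-min threshold.
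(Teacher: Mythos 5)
Your proposal is correct and follows essentially the same route as the paper: the paper proves Theorem \ref{thm4} by iteratively applying the PDW construction of Theorem \ref{thm3} (Lemma \ref{lem4} for strict dual feasibility, plus the $\boldsymbol{N}_1+\cdots+\boldsymbol{N}_4$ decomposition for the $\ell_\infty$ bound on the oracle estimator), with the initial rate $a_N$ replaced at step $t$ by the $\ell_2$ rate of $\widehat{\boldsymbol{\beta}}_{t-1,h}$ from Theorem \ref{thm2}, exactly as in your induction. Your index bookkeeping (multiplying $\eta$ by $r_{t-1}$ to shift $\kappa(\tfrac{5(t-1)+3}{6},\tfrac{2(t-1)+1}{2},t-1)$ to $\kappa(\tfrac{5t}{6},t,t)$) matches the paper's recursion, and the uniform-over-a-neighbourhood concentration you flag as the key technical ingredient is supplied by Lemma \ref{lem2}.
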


Theorem \ref{thm3} and \ref{thm4} establish the support recovery results of our one-step and iterative DHSQR estimators by the \textit{beta-min} condition $\min_{j \in S}\left|\beta_j^*\right|$, which is wildly used in the sparse high-dimensional estimation literature.  When the number of iterations $t$ satisfies \eqref{iter}, the \textit{beta-min} condition will reduce to $\min_{j \in S}\left|\beta_j^*\right|\geq C\left\|\boldsymbol{I}_{S \times S}^{-1}\right\|_{\infty}\sqrt{\log N/N}$, which matches the rate of the lower bound for the \textit{beta-min} condition in a single machine \citep{wainwright2009sharp}.

\section{Simulation Studies}\label{sec:sim}
In this section, we provide simulation studies to assess the performance of our DHSQR estimator \footnote{The R code to reproduce our experimental results is available in \url{https://github.com/WangCaixing-96/DHSQR}.}.  We generate synthetic data from the following linear models, corresponding to the homoscedastic error case (model 1) and the heteroscedastic error case (model 2):
\begin{itemize}
    \item Model 1: $Y_i=\boldsymbol{X}_i^{\mathrm{T}} \boldsymbol{\beta}^{*}+\varepsilon_i$;
    \item Model 2:  $
    Y_i=\boldsymbol{X}_i^{\mathrm{T}} \boldsymbol{\beta}^{*}+(1+0.4 x_{i1})\varepsilon_i$,
\end{itemize}
where $\boldsymbol{X}_i=\left(1, x_{i 1}, \ldots, x_{i p}\right)^{\mathrm{T}}$ is a $(p+1)$-dimensional  vector and $\left(x_{i 1}, \ldots, x_{i p}\right)$ is drawn from a multivariate normal distribution $N(\boldsymbol{0},\boldsymbol{\Sigma})$ with covariance matrix $\boldsymbol{\Sigma}_{i j}=0.5^{|i-j|}$ for $1 \leq i, j \leq p$, the true parameter $\boldsymbol{\beta}^{*}=(1,1,2,3,4,5,\boldsymbol{0}_{p-5})^\mathrm{T}$. We fix the dimension $p=500$, and consider three different values of $\tau$, i.e., $\tau \in \{0.3, 0.5, 0.7\}$. We consider the following three noise distributions: 
\begin{enumerate}
    \item Normal distribution: the noise $\varepsilon_i \sim N(0, 1)$;
    \item $t_3$ distribution: the noise $\varepsilon_i\sim t(3)$;
    \item Cauchy distribution: the noise $\varepsilon_i \sim Cauchy(0,1)$.
\end{enumerate}

For the choice of the kernel function, we use the standard Gaussian kernel function that satisfies Assumption \ref{ass1}. For the global and local bandwidth, we set $h= 5 (s\log N/n)^{1/3}$ and $b=0.53 (s\log n/n)^{1/3}$ , respectively, according to the theoretical results in Theorem \ref{thm1} and \ref{thm2}. The regularization parameters $\lambda_{N,g}$ are selected by validation. Specifically, we choose $C_0$ to minimize the check loss on the validation set. 
All the simulation results are the average of 100 independent experiments.

\begin{figure}
    \centering
    \includegraphics[width=0.23\textwidth]{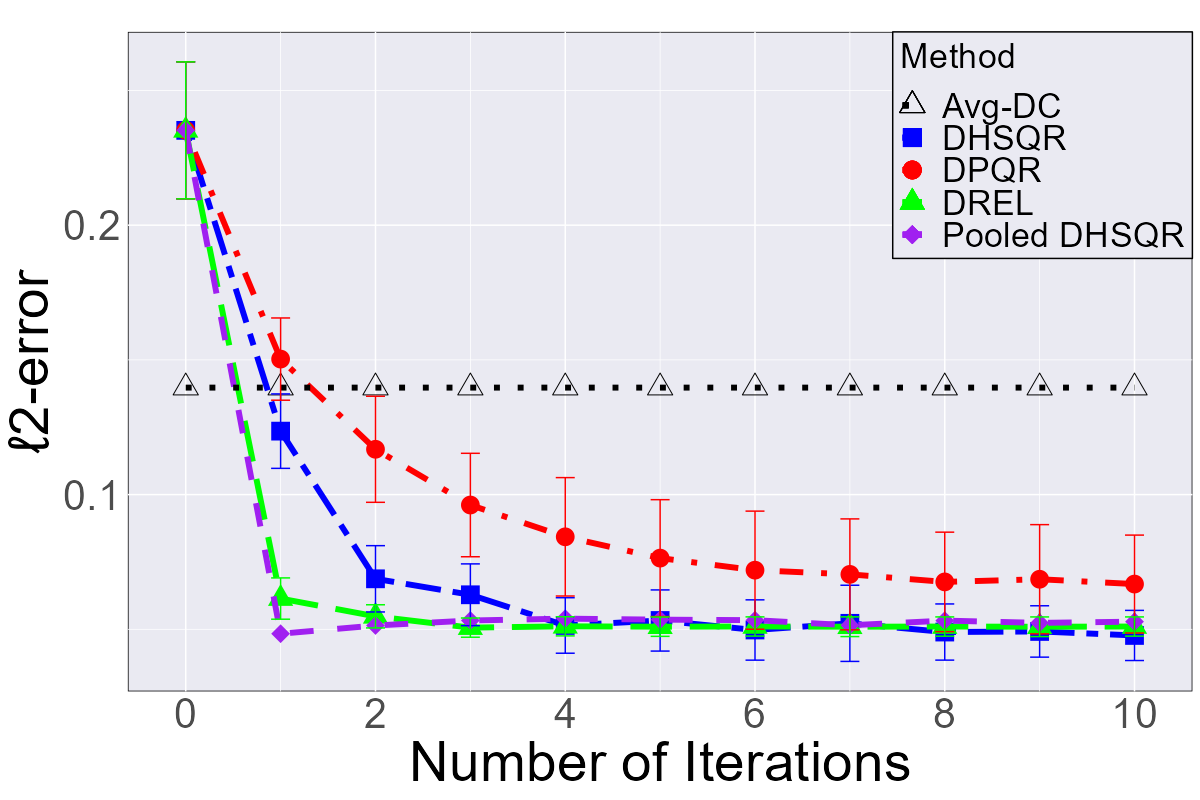}\label{hom_normal}
     \includegraphics[width=0.23\textwidth]{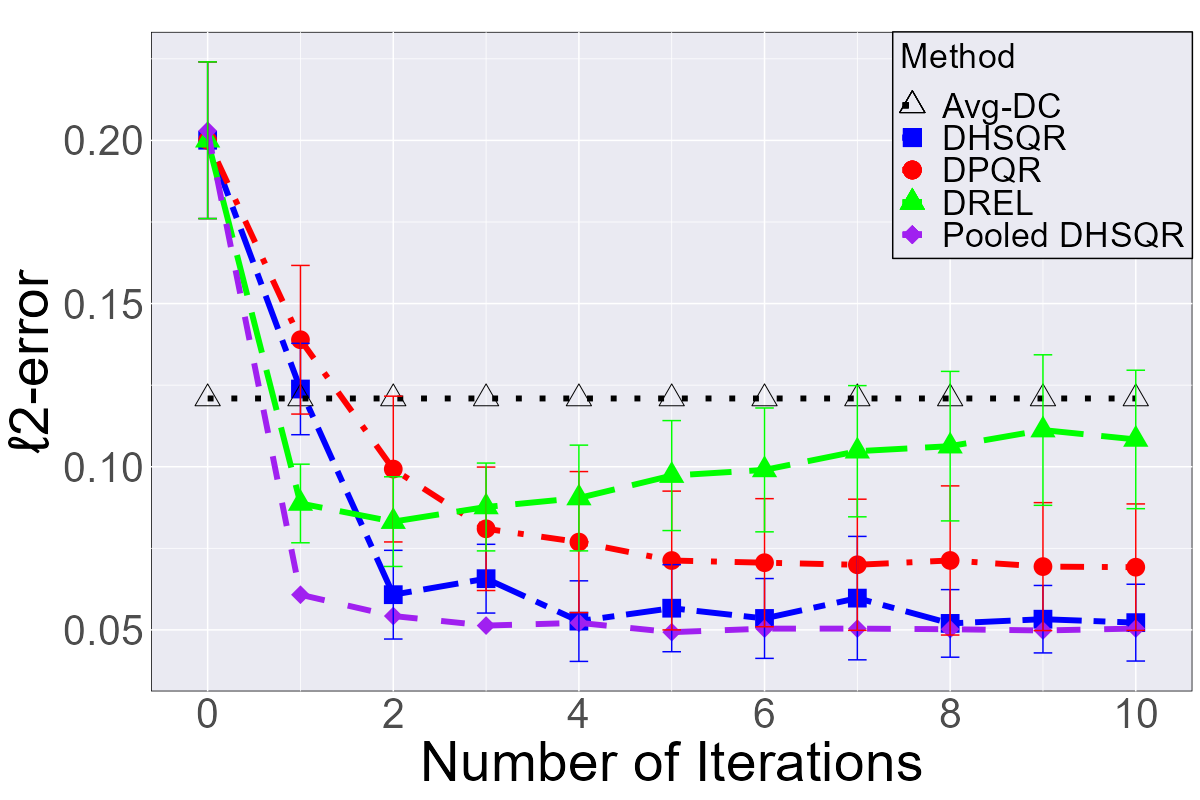}\label{het_normal}
  \includegraphics[width=0.23\textwidth]{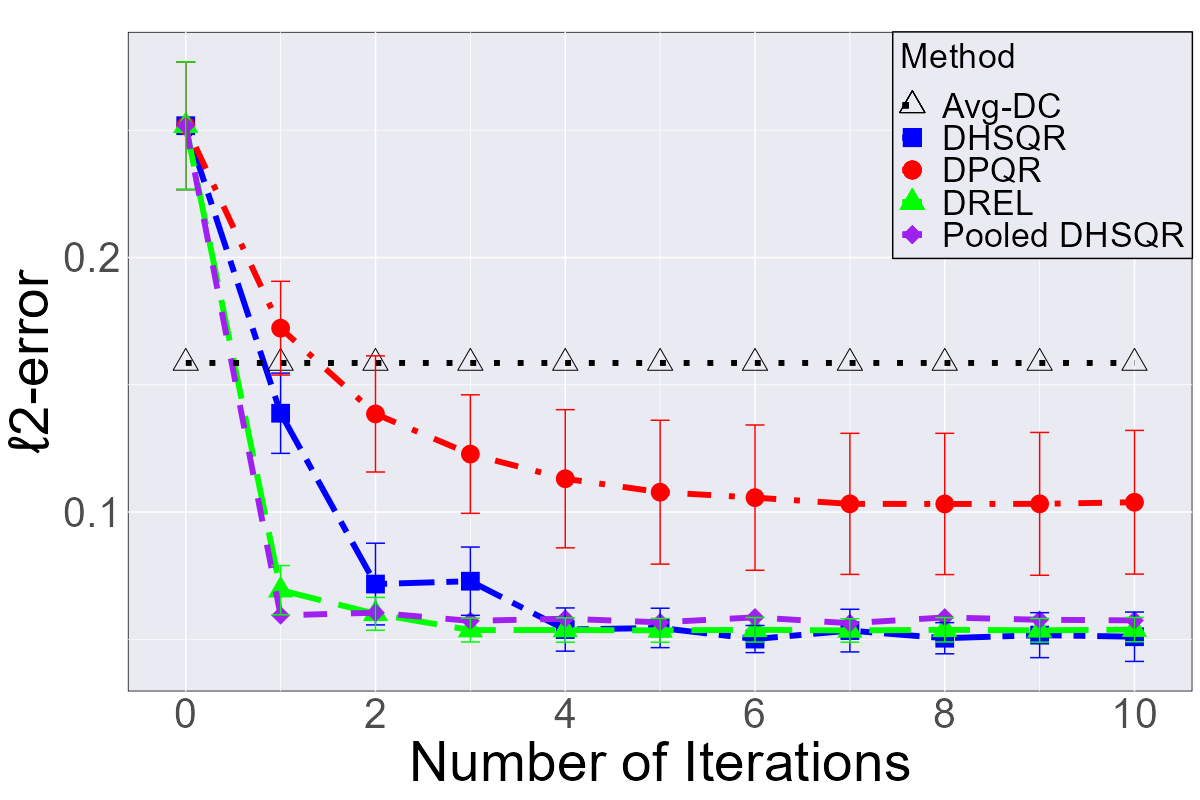}\label{hom_t}
   \includegraphics[width=0.23\textwidth]{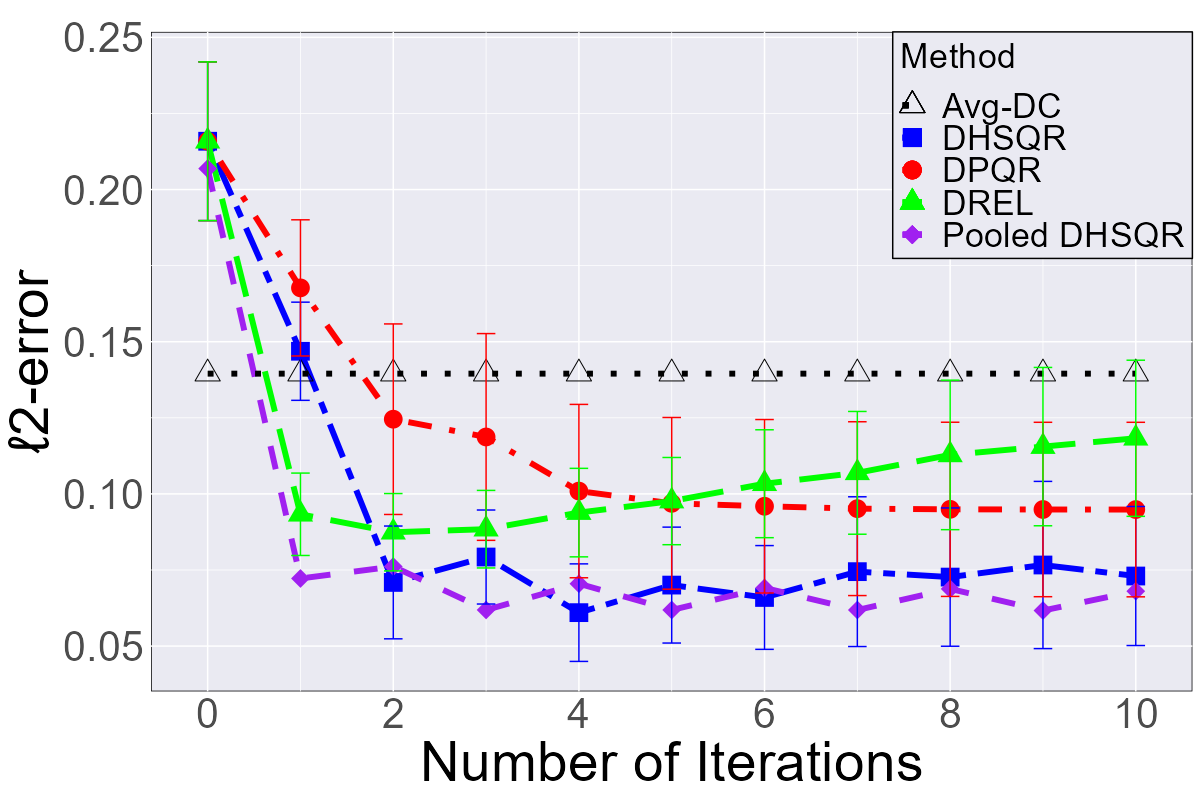}\label{het_t}
   \includegraphics[width=0.23\textwidth]{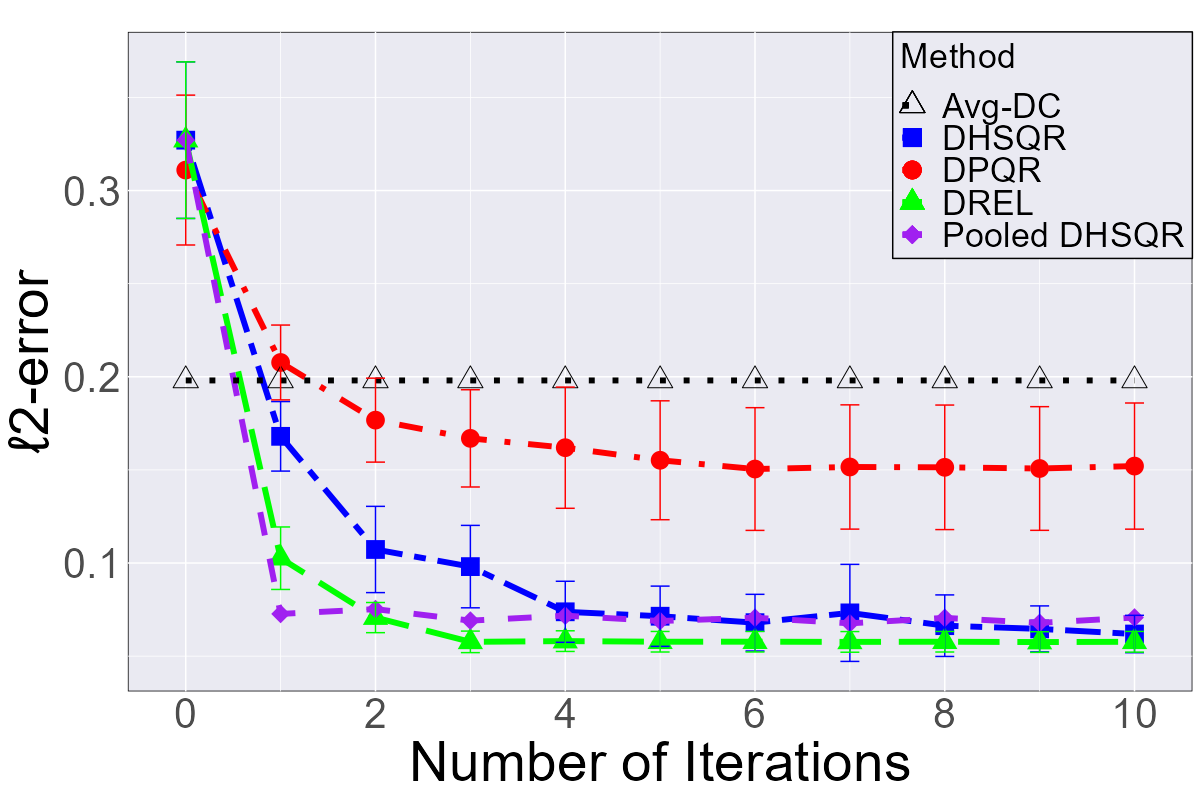}\label{hom_c}
   \includegraphics[width=0.23\textwidth] {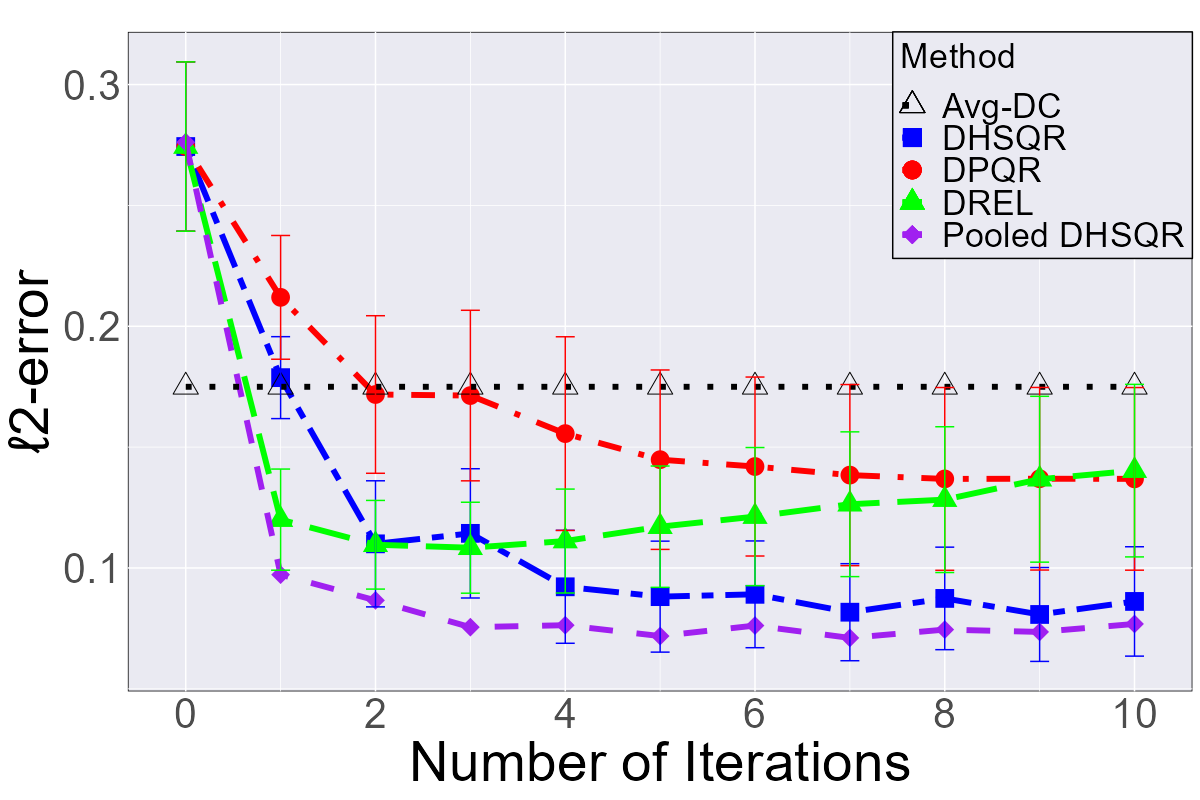}\label{het_c}
  
    \caption{The $\ell_2$-error with an error bound between the true parameter and the estimated parameter versus the number of iterations with a fixed quantile level $\tau=0.5$. In the left panel, from top to bottom represent noise distributions that are Normal, $t_3$, and Cauchy distribution for the homoscedastic error case, respectively. In the right panel, from top to bottom represent noise distributions as Normal, $t_3$, and Cauchy distribution for the heteroscedastic error case, respectively.}
    \label{fig.1}
\end{figure}

To evaluate the performance of our proposed method, we report the $\ell_2$-error between the estimate and the true parameter. In addition, we calculate precision and recall defined as the proportion of correctly estimated positive (TP) in estimated positive (TP+FP) and the proportion of correctly estimated positive (TP) in true positive (TP+FN), which is used to show the support recovery accuracy (i.e.,  precision = 1 and recall = 0 implies perfect support recovery).  We also report the $F_1$-score is defined as
$$
F_1 =\frac{2}{\text { Recall }^{-1}+\text { Precision }^{-1}},
$$
which is commonly used as an evaluation of support recovery. Note that $F_1$-score$=1$ implies perfect support recovery. We compare the finite sample
performance of the \textit{DHSQR} estimator and the other four estimators: 

\begin{itemize}
    \item[(a)]  Averaged DC (\textit{Avg-DC}) estimator which computes the $\ell_1$-penalized QR estimators on the local machine and then combines the local estimators by taking the average;
    \item[(b)] The distributed high-dimensional sparse quantile regression estimator on a single machine with pooled data defined in \eqref{empircal_l2}, which is denoted by \textit{Pooled DHSQR};
    \item[(c)] Distributed robust estimator with Lasso (\textit{DREL}), see in \citet{chen2020distributed};
    \item[(d)] Distributed penalty quantile regression estimator (\textit{DPQR}) with convolution smoothing, see in \citet{jiang2021smoothing,tan2022communication}.
\end{itemize}

\subsection{Effect of the number of iterations Under Heavy-Tailed Noise}
We first show the effect of the number of iterations in our proposed method. We fix the total sample size $N=20000$ and local sample size $n=500$. We plot the $\ell_2$-error from the true QR coefficients versus the number of iterations.
Since the Avg-DC only requires one-shot communication, we use a horizontal
line to show its performance. The results are shown in Figure \ref{fig.1}.

\begin{figure}
    \centering
   \includegraphics[width=0.23\textwidth]{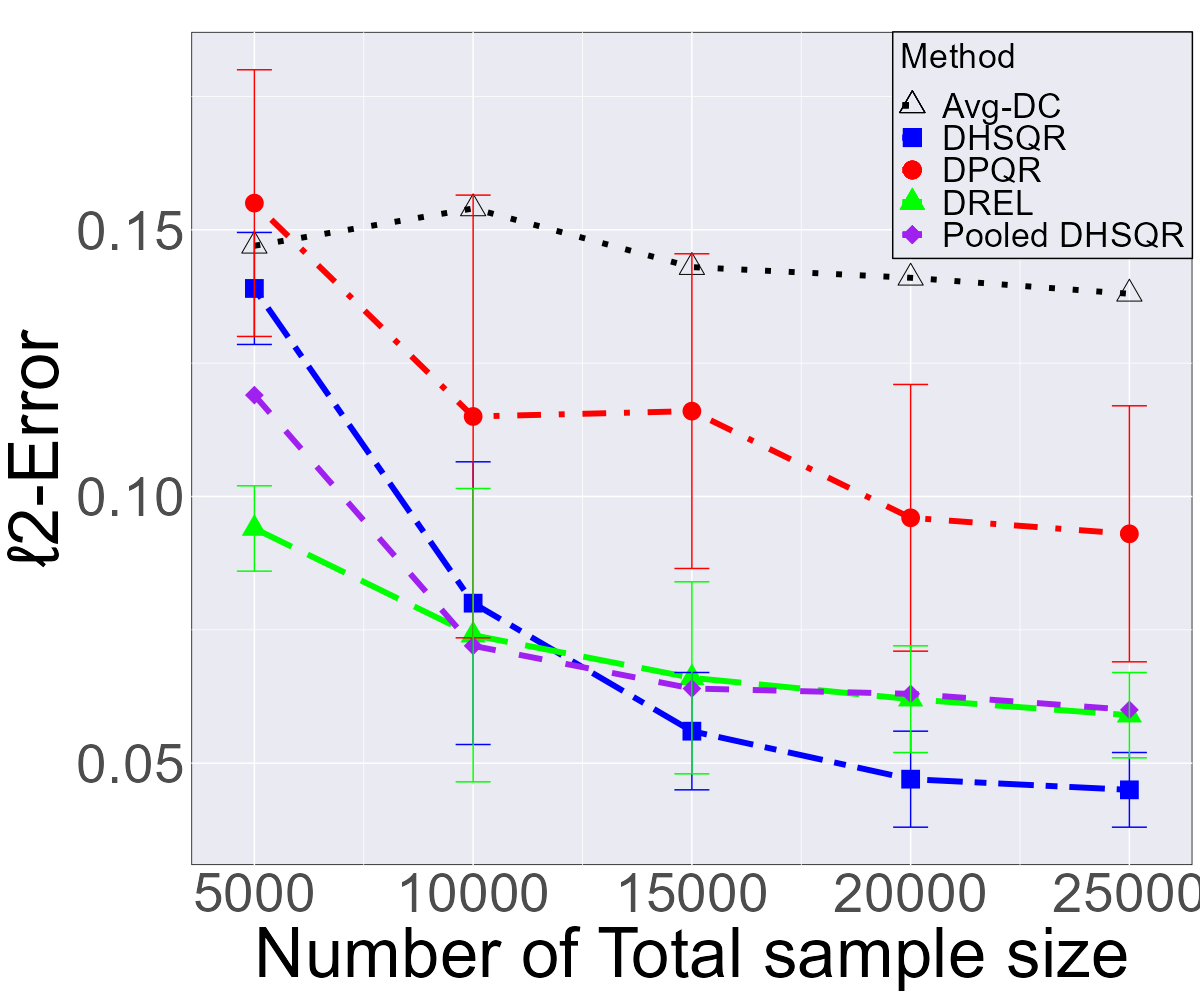}
        \label{label_for_cross_ref_7}
        \includegraphics[width=0.23\textwidth]{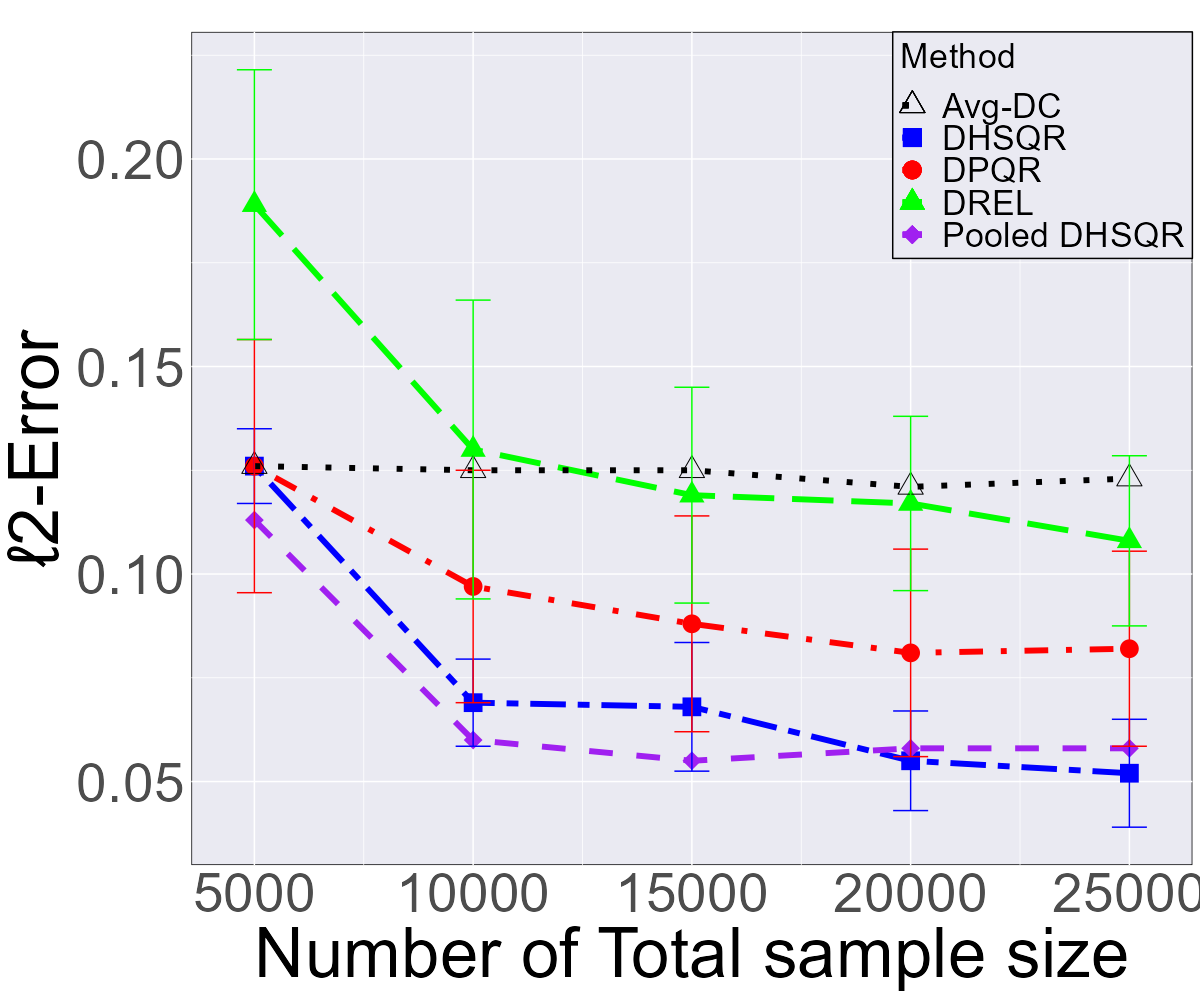}
        \label{label_for_cross_ref_8}
       \includegraphics[width=0.23\textwidth]{plot/iter_n_0.5_hom_normal_l2.png}
            \label{label_for_cross_ref_9}
        \includegraphics[width=0.23\textwidth]{plot/iter_n_0.5_het_normal_l2.png}
            \label{label_for_cross_ref_10}
    \caption{The $\ell_2$-error from the true parameter versus the number of total and local sample size with a fixed quantile level $\tau=0.5$. In the top panel, from left to right show the effect of different total sample sizes $N$ for the homoscedastic and heteroscedastic error cases, respectively. In the bottom panel, from left to right show the effect of different local sample sizes $n$ for the homoscedastic and heteroscedastic error cases, respectively.}
    \label{fig.2}
\end{figure}

 From the result, our DHSQR estimator outperforms the Avg-DC and DPQR estimators in all the cases after a few iterations, and the estimated $\ell_2$-error is very close to that of the pooled DHSQR estimator. Our method is robust for different noise settings, while DPQR performs poorly for heavy-tailed noise. Interestingly, for the heteroscedastic error case, the DREL estimator becomes unstable and fails to converge since it ignores the dependence between the error term and covariates. These results confirm our theoretical findings in section \ref{sec:theory}. For the rest of the experiments in this section, we fix the number of iterations $T=10$.

\subsection{Effect of Total Sample Size and Local Sample Size}
In this section, we investigate the performance of our proposed estimator under varying total and local sample sizes. We only consider the setting with normal error case and quantile level $\tau=0.5$.
For the effect of total sample size, we fix the local sample size $n = 500$, and vary the total sample
size $N \in \{5000, 10000, 15000, 20000, 25000\}$. For the effect of local sample size, we fix the total sample size $N = 20000$, and vary the local sample
size $n \in \{200, 500, 1000, 2000\}$.
We plot the $\ell_2$-error and $F_1$-score of the five estimators under different models in Figure \ref{fig.2} and \ref{fig.3}.

\begin{figure}
    \centering
  \includegraphics[width=0.23\textwidth]{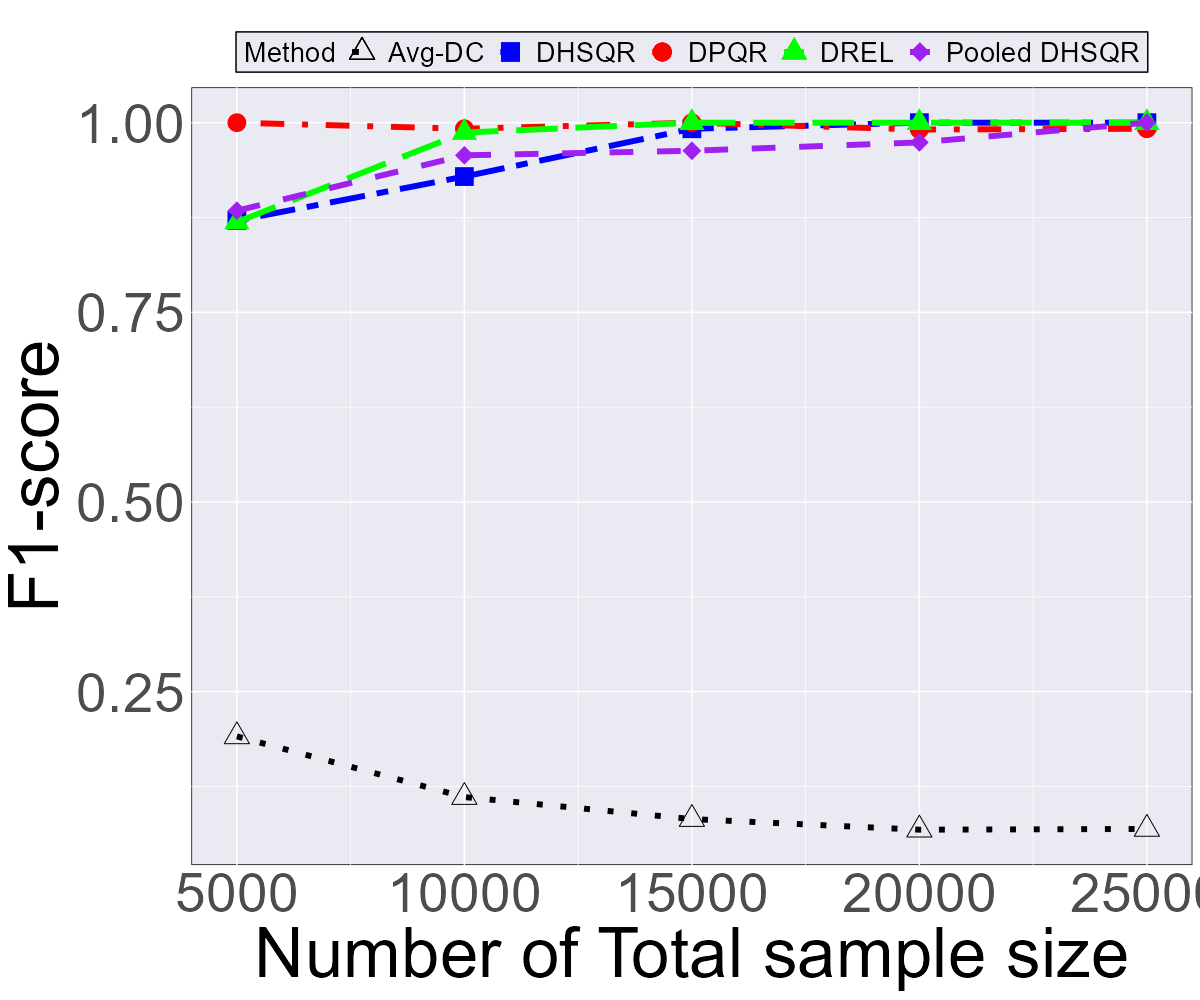}
        \label{label_for_cross_ref_11}
        \includegraphics[width=0.23\textwidth]{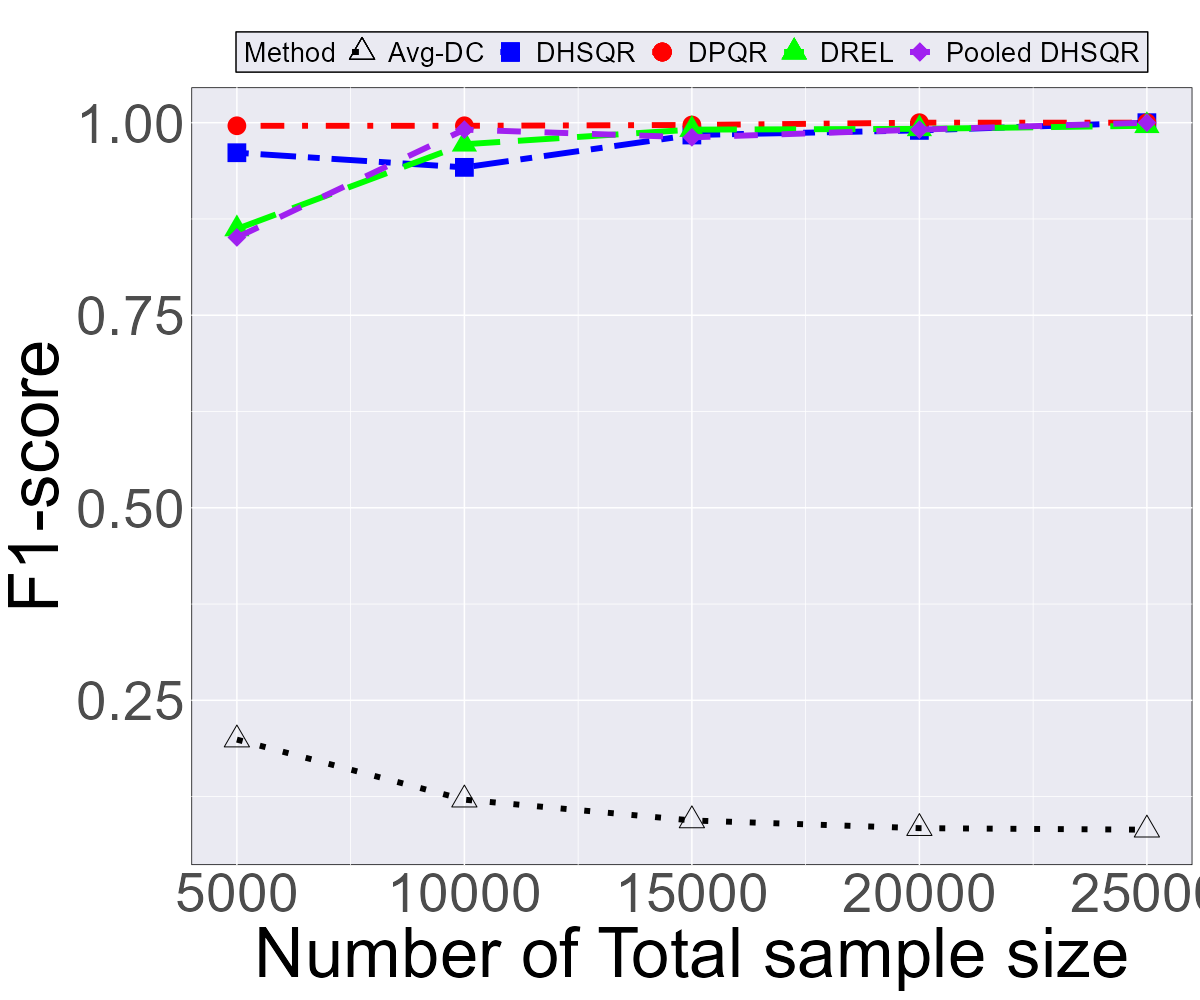}
        \label{label_for_cross_ref_12}
\includegraphics[width=0.23\textwidth]{plot/iter_n_0.5_hom_normal_F1.png}
            \label{label_for_cross_ref_13}
\includegraphics[width=0.23\textwidth]{plot/iter_n_0.5_hom_normal_F1.png}
            \label{label_for_cross_ref_14}
    \caption{The $F_1$ score from the true parameter versus the number of total and local sample size with a fixed quantile level $\tau=0.5$. 
    In the top panel, from left to right show the effect of different total sample sizes $N$ for the homoscedastic and heteroscedastic error cases, respectively. In the bottom panel, from left to right show the effect of different local sample sizes $n$ for the homoscedastic and heteroscedastic error cases, respectively.}
    \label{fig.3}
\end{figure}
The $\ell_2$-error of all methods decreases with an increase in the total sample size $N$ and local sample size $n$, as shown in Figure \ref{fig.2}.
For DHSQR, DPQR, and DREL estimators, as $N$ increases or $n$ increases, the $\ell_2$-error tends to decrease, and both of them outperform the Avg-DC estimator.
We can see that the $\ell_2$-error of DHSQR is very close to that of the Pooled DHSQR estimator in both homoscedastic and heteroscedastic error cases. The DHSQR estimator is better than other three distributed estimators. In the heteroscedastic error case, the DREL estimator's $\ell_2$-error is significantly worse compared to DHSQR and DPQR. The Pooled DHSQR estimator's performance is not significantly affected by variations in $n$, while Avg-DC shows minimal sensitivity to the changes in $N$.

In terms of support recovery, as Figure \ref{fig.3} is shown, both DHSQR, Pooled DHSQR, DPQR, and DREL estimators outperform the Avg-DC estimator in all settings, and their $F_1$ score are nearly equal to 1. It is also noteworthy that the Avg-DC estimator fails in support recovery since it is usually dense after averaging, especially when $N$ is large and $n$ is small.

Due to the space limit, we have also included an additional real data application and simulated experiments for sensitivity analysis for the bandwidth, computation time, different quantile levels, and different settings of parameters $\boldsymbol{\beta}^{*}$ in Appendix \ref{sec2}.

\section{Conclusion}\label{sec:con}

In this paper, we propose an efficient distributed quantile regression method for dealing with heterogeneous data. By constructing a kernel-based pseudo covariate and response, we transform the non-smooth quantile regression problem into a smooth least squares problem. Based on this procedure, we establish a double-smoothing surrogate likelihood framework to facilitate distributed learning. An efficient algorithm is developed, which enjoys computation and communication efficiency. Both theoretical analysis and simulation studies demonstrated the superior performance of the proposed algorithm. As future work, we will investigate the inference problem in distributed high-dimensional quantile regression based on the debiased procedure \citep{zhao2014general,bradic2017uniform}. It is also interesting to develop a decentralized distributed algorithm for quantile regression over a network data structure. 

\section*{Acknowledgments}
The authors thank the area chair and the anonymous referees for their constructive suggestions, which significantly improved this article. This research is supported in part by NSFC-12371270 and Shanghai Science and Technology Development Funds (23JC1402100).  This
research is also supported by Shanghai Research Center for Data Science and Decision Technology.

\section*{Impact Statement}
This paper presents work whose goal is to advance the field of Machine Learning. There are many potential societal consequences of our work, none which we feel must be specifically highlighted here.

% Acknowledgements should only appear in the accepted version.

% In the unusual situation where you want a paper to appear in the
% references without citing it in the main text, use \nocite
% \nocite{langley00}

\bibliography{main.bib}
\bibliographystyle{icml2024}

%%%%%%%%%%%%%%%%%%%%%%%%%%%%%%%%%%%%%%%%%%%%%%%%%%%%%%%%%%%%%%%%%%%%%%%%%%%%%%%
%%%%%%%%%%%%%%%%%%%%%%%%%%%%%%%%%%%%%%%%%%%%%%%%%%%%%%%%%%%%%%%%%%%%%%%%%%%%%%%
% APPENDIX
%%%%%%%%%%%%%%%%%%%%%%%%%%%%%%%%%%%%%%%%%%%%%%%%%%%%%%%%%%%%%%%%%%%%%%%%%%%%%%%
%%%%%%%%%%%%%%%%%%%%%%%%%%%%%%%%%%%%%%%%%%%%%%%%%%%%%%%%%%%%%%%%%%%%%%%%%%%%%%%
\newpage
\appendix
\onecolumn

\newpage

\section{Proof of the Main Results}\label{sec1}

\subsection{Some Technical Lemmas}
In this section, we present some technical lemmas that will be used in our proof of main theorems.
\begin{lemma}\label{lem1}
Let $\xi_1, \ldots, \xi_N$ be independent random variables with zero mean. Suppose that there exists some $t>0$ and $\bar{B}_N$ such that $\sum_{i=1}^N \mathbb{E} \xi_i^2 \exp \left(t\left|\xi_i\right|\right)\leq \bar{B}_N^2$. Then for any $0<x \leq \bar{B}_N$,
\begin{align*}
{\mathbb{P}}\left(\sum_{i=1}^N \xi_i \geq C_t \bar{B}_N x\right) \leq \exp \left(-x^2\right),
\end{align*}
where $C_t=t+t^{-1}$.
\end{lemma}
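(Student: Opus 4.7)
The plan is to prove this Bernstein-type concentration inequality via the standard Chernoff--Markov method, with the key input being a careful bound on the moment generating function $\mathbb{E} e^{\lambda \xi_i}$ derived from the exponential-moment hypothesis on $\mathbb{E}[\xi_i^2 e^{t|\xi_i|}]$.

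First I would establish the elementary inequality $e^y - 1 - y \leq (y^2/2)\, e^{|y|}$ for all real $y$, which follows by a term-by-term comparison of power series when $y \geq 0$ (using $2\,k! \leq (k+2)!$ for $k \geq 0$) and by a short calculus argument when $y < 0$. Substituting $y = \lambda \xi_i$, taking expectations, and using $\mathbb{E} \xi_i = 0$ yields $\mathbb{E} e^{\lambda \xi_i} \leq 1 + (\lambda^2/2)\,\mathbb{E}[\xi_i^2 e^{\lambda|\xi_i|}]$. Restricting to $0 < \lambda \leq t$ lets me upgrade $e^{\lambda|\xi_i|}$ to $e^{t|\xi_i|}$, and the standard bound $1 + u \leq e^u$ then gives a clean Gaussian-type MGF bound on each factor. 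By independence, $\mathbb{E} \exp(\lambda \sum_i \xi_i) \leq \exp(\lambda^2 \bar{B}_N^2 / 2)$, and Markov's inequality provides the base Chernoff bound $\mathbb{P}(\sum_i \xi_i \geq u) \leq \exp(-\lambda u + \lambda^2 \bar{B}_N^2 / 2)$ for every $\lambda \in (0, t]$.

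The main obstacle is the final step: optimizing this Chernoff bound over $\lambda \in (0, t]$ at $u = C_t \bar{B}_N x$ and verifying the claimed bound $\exp(-x^2)$ uniformly for $x \in (0, \bar{B}_N]$. The unconstrained optimizer is $\lambda^{\star} = C_t x / \bar{B}_N$, giving two regimes. When $\lambda^{\star} \leq t$ (small $x$), the bound becomes $\exp(-C_t^2 x^2 / 2)$, which beats $\exp(-x^2)$ immediately because $C_t = t + t^{-1} \geq 2$ implies $C_t^2 \geq 4$. When $\lambda^{\star} > t$ (large $x$), I would take $\lambda = t$, producing $\exp(-(t^2+1)\bar{B}_N x + t^2 \bar{B}_N^2 / 2)$; showing this is at most $\exp(-x^2)$ reduces to the quadratic inequality $x^2 - (t^2+1)\bar{B}_N x + t^2 \bar{B}_N^2 / 2 \leq 0$. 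A direct computation of its roots (with discriminant $\bar{B}_N^2(t^4+1)$) shows that the feasible interval for $x$ covers precisely the complementary regime $\bigl(t^2 \bar{B}_N / (t^2+1),\, \bar{B}_N\bigr]$ left by the first case, using the identity $(t^2+1)^2 \geq t^4+1$ at both endpoints. The delicate point, and the reason the specific constant $C_t = t + t^{-1}$ appears, is this precise matching of the Gaussian and Poisson regimes so that $\exp(-x^2)$ holds uniformly regardless of whether $t$ is small or large.
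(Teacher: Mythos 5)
Your proof is correct, and all the key computations check out: the elementary bound $e^y-1-y\le (y^2/2)e^{|y|}$, the restriction $\lambda\le t$ to invoke the hypothesis, the two-regime optimization splitting at $x=t^2\bar{B}_N/(t^2+1)$, and the root calculation with discriminant $\bar{B}_N^2(t^4+1)$ showing the quadratic inequality covers the whole interval up to $\bar{B}_N$. The paper itself omits the proof entirely, deferring to Lemma 1 of \citet{cai2011adaptive}; your argument is a self-contained reconstruction of that standard Bernstein-type Chernoff bound, so there is nothing in the paper to diverge from.
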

\noindent Lemma \ref{lem1} is the same as Lemma 1 of \citet{cai2011adaptive} and thus we omit its proof.
\vspace{0.5cm}

\begin{lemma}\label{lem2}
Under Assumption \ref{ass1}-\ref{ass3}, if $|\widehat{\boldsymbol{\beta}}_{0}-\boldsymbol{\beta}^*|_2=\mathcal{O}_\mathbb{P}(a_N)$ and $\mathbb{P}(\operatorname{supp}(\widehat{\boldsymbol{\beta}}_{0}) \subseteq S) \rightarrow 1$, then we have 
\begin{align*}
\left\|\widehat{\boldsymbol{D}}_{h,S \times S}-\boldsymbol{I}_{S \times S}\right\|_{op}=\mathcal{O}_{\mathbb{P}}\left(\sqrt{\frac{s\log N}{Nh}}+a_N+h\right).
\end{align*}
\end{lemma}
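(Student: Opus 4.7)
The plan is to decompose $\widehat{\boldsymbol{D}}_{h,S\times S}-\boldsymbol{I}_{S\times S}$ into three natural pieces, each producing one of the three terms in the bound. Specifically, I would write
$$\widehat{\boldsymbol{D}}_{h,S\times S}-\boldsymbol{I}_{S\times S} = T_1 + T_2 + T_3,$$
where $T_1 = N^{-1}\sum_{i=1}^N [K_h(\widehat{e}_{0,i})-K_h(\varepsilon_i)]\boldsymbol{X}_{i,S}\boldsymbol{X}_{i,S}^{\mathrm{T}}$ captures the plug-in error from using $\widehat{\boldsymbol{\beta}}_0$ in place of $\boldsymbol{\beta}^*$, $T_2 = N^{-1}\sum_{i=1}^N K_h(\varepsilon_i)\boldsymbol{X}_{i,S}\boldsymbol{X}_{i,S}^{\mathrm{T}} - \mathbb{E}[K_h(\varepsilon)\boldsymbol{X}_S\boldsymbol{X}_S^{\mathrm{T}}]$ is the stochastic fluctuation of the sample version around its population counterpart, and $T_3 = \mathbb{E}[K_h(\varepsilon)\boldsymbol{X}_S\boldsymbol{X}_S^{\mathrm{T}}]-\boldsymbol{I}_{S\times S}$ is the kernel smoothing bias.

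The deterministic bias $T_3$ is handled by a one-line Taylor/change-of-variables argument: a change of variables gives $\mathbb{E}[K_h(\varepsilon)\mid\boldsymbol{X}]-f_{\varepsilon\mid\boldsymbol{X}}(0)=\int K(v)[f_{\varepsilon\mid\boldsymbol{X}}(hv)-f_{\varepsilon\mid\boldsymbol{X}}(0)]\,dv$, which is uniformly bounded by $l_0 h\int|v|K(v)\,dv\lesssim h$ using Assumption~\ref{ass2}; multiplied by the sub-Gaussian bound on $\|\mathbb{E}[\boldsymbol{X}_S\boldsymbol{X}_S^{\mathrm{T}}]\|_{op}$ from Assumption~\ref{ass3}, this yields $\|T_3\|_{op}=\mathcal{O}(h)$. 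For the stochastic term $T_2$, I would use Lemma~\ref{lem1} coupled with an $\varepsilon$-net argument on the $|S|$-dimensional unit sphere. The key moment calculation is $\mathbb{E}[K_h(\varepsilon)^2\mid\boldsymbol{X}]=h^{-1}\int K(v)^2 f_{\varepsilon\mid\boldsymbol{X}}(hv)\,dv\lesssim h^{-1}$, so that after a standard truncation of the sub-Gaussian tails of $\boldsymbol{X}_S$ (using $|\boldsymbol{X}_{i,S}|_2\lesssim\sqrt{s\log N}$ on an event of overwhelming probability), Lemma~\ref{lem1} delivers $\|T_2\|_{op}=\mathcal{O}_\mathbb{P}(\sqrt{s\log N/(Nh)})$.

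The remaining term $T_1$ is the crux. Since $\widehat{\boldsymbol{\beta}}_0$ depends on the data, I would first pass to a uniform bound over the localized sparse ball $B=\{\boldsymbol{\beta}:|\boldsymbol{\beta}-\boldsymbol{\beta}^*|_2\leq Ca_N,\ \operatorname{supp}(\boldsymbol{\beta})\subseteq S\}$ and then split $T_1$ into a conditional-expectation piece and a conditional-fluctuation piece. For the conditional expectation (over $\varepsilon$ given $\boldsymbol{X}$), Assumption~\ref{ass2} gives $|\mathbb{E}[K_h(\varepsilon-\delta_i)-K_h(\varepsilon)\mid\boldsymbol{X}_i]|\leq l_0|\delta_i|$ where $\delta_i=\boldsymbol{X}_{i,S}^{\mathrm{T}}(\boldsymbol{\beta}-\boldsymbol{\beta}^*)_S$, so that for any unit $v$ supported on $S$,
$$|v^{\mathrm{T}}\,\mathbb{E}[T_1\mid\boldsymbol{X}]\,v|\leq\frac{l_0}{N}\sum_{i=1}^N|\boldsymbol{X}_{i,S}^{\mathrm{T}}(\boldsymbol{\beta}-\boldsymbol{\beta}^*)_S|(\boldsymbol{X}_{i,S}^{\mathrm{T}}v)^2.$$
Taking expectation over $\boldsymbol{X}$ and invoking the sub-Gaussian moment bound $\mathbb{E}[|\boldsymbol{X}_S^{\mathrm{T}}u|(\boldsymbol{X}_S^{\mathrm{T}}v)^2]\lesssim|u|_2|v|_2^2$ (Cauchy--Schwarz applied with Assumption~\ref{ass3}) yields $\|\mathbb{E}[T_1\mid\boldsymbol{X}]\|_{op}=\mathcal{O}_\mathbb{P}(a_N)$ uniformly in $\boldsymbol{\beta}\in B$. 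The remaining fluctuation piece is controlled by another application of Lemma~\ref{lem1} on an $\varepsilon$-net of $B\times S^{|S|-1}$, which is of lower order once multiplied by the correct Lipschitz scale of $K_h$ restricted to the sparse support. Summing the three bounds gives the claim.

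The principal obstacle will be the fluctuation analysis of $T_1$: the pointwise Lipschitz constant of $K_h$ is $\|K'\|_\infty/h^2$, which would inflate the bound by $1/h^2$ if used crudely. The trick I plan to exploit is the rank-one structure $\boldsymbol{X}_{i,S}\boldsymbol{X}_{i,S}^{\mathrm{T}}$ combined with the sub-Gaussian moment identity above, so that the natural scale is $|\boldsymbol{\beta}-\boldsymbol{\beta}^*|_2$ rather than the much cruder $\sqrt{s}\,|\boldsymbol{\beta}-\boldsymbol{\beta}^*|_2$ or $|\boldsymbol{X}_{i,S}|_2^3$ that a pointwise estimate would give; coupled with the peeling/$\varepsilon$-net covering of $B$ this keeps the $a_N$ term free of both $\sqrt{s}$ and $1/h$ factors.
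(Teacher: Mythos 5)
Your proposal is correct and follows essentially the same route as the paper: both reduce the operator norm to a $1/4$-net on the sparse unit sphere, cover the ball $|\boldsymbol{\beta}_S-\boldsymbol{\beta}^*_S|_2\leq a_N$ by a fine grid, extract the $\mathcal{O}(h+a_N)$ term from the conditional expectation via the Lipschitz continuity of $f_{\varepsilon\mid\boldsymbol{X}}$, and apply Lemma \ref{lem1} with the second-moment bound $\mathbb{E}[\xi^2]\lesssim h$ to get the $\sqrt{s\log N/(Nh)}$ fluctuation. The only cosmetic difference is that you split off the plug-in error $T_1$ as a separate additive term, whereas the paper folds the $a_N$ shift directly into the expectation computed at each grid point $\boldsymbol{\beta}_k$ and disposes of the $1/h^2$ Lipschitz inflation simply by taking the grid polynomially fine in $N$ with a large exponent.
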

\begin{proof}
Denote $\widehat{S}=\operatorname{supp}(\widehat{\boldsymbol{\beta}}_{0})$ and let 
\begin{align*}
\widehat{\boldsymbol{D}}_{h,S \times S}(\boldsymbol{\beta})=\frac{1}{N}\sum_{i=1}^N\boldsymbol{X}_{i,S}\boldsymbol{X}_{i,S}^{\mathrm{T}}K_h(Y_i-\boldsymbol{X}_{i,S}^{\mathrm{T}}\boldsymbol{\beta}_S).
\end{align*}
Since $\mathbb{P}(\operatorname{supp}(\widehat{\boldsymbol{\beta}}_{0}) \subseteq S) \rightarrow 1$, we have $\widehat{\boldsymbol{\beta}}_{0,S^c}=0$ with high probability. And note the fact $\boldsymbol{\beta}^*_{S^c}=0$, so we have $|\widehat{\boldsymbol{\beta}}_{0,S}-\boldsymbol{\beta}^*_{S}|_2=\mathcal{O}_\mathbb{P}(a_N)$. To prove the lemma, without loss of generality, we assume that $|\widehat{\boldsymbol{\beta}}_{0,S}-\boldsymbol{\beta}^*_{S}|_2\leq a_N$ and $\widehat{S} \subseteq S$.

From the proof of Lemma 5 in \citet{cai2010optimal}, there exists $\boldsymbol{v}_1,\ldots,\boldsymbol{v}_{5^s}\in S^{s-1}$ such that
\begin{equation}\label{A.1}
\left\|\widehat{\boldsymbol{D}}_{h,S \times S}(\boldsymbol{\beta})-\boldsymbol{I}_{S \times S}\right\|_{op} \leq 4 \underset{j \leq 5^s}{\operatorname{sup}}\left|\boldsymbol{v}_j^{\mathrm{T}}\{\widehat{\boldsymbol{D}}_{h,S \times S}(\boldsymbol{\beta})-\boldsymbol{I}_{S \times S}\}\boldsymbol{v}_j\right|. 
\end{equation}
So if $|\widehat{\boldsymbol{\beta}}_{0,S}-\boldsymbol{\beta}^*_{S}|_2\leq a_N$, we have 
\begin{equation}\label{A.2}
\underset{j \leq 5^s}{\operatorname{sup}}\left|\boldsymbol{v}_j^{\mathrm{T}}\{\widehat{\boldsymbol{D}}_{h,S \times S}-\boldsymbol{I}_{S \times S}\}\boldsymbol{v}_j\right|\leq \underset{j \leq 5^s}{\operatorname{sup}}\underset{|\boldsymbol{\beta}_{S}-\boldsymbol{\beta}^*_{S}|_2\leq a_N}{\operatorname{sup}}\left|\boldsymbol{v}_j^{\mathrm{T}}\{\widehat{\boldsymbol{D}}_{h,S \times S}(\boldsymbol{\beta})-\boldsymbol{I}_{S \times S}\}\boldsymbol{v}_j\right| . 
\end{equation}

Denote $\boldsymbol{\beta}_S^*=\left(\beta_1^*, \ldots, \beta_s^*\right)^{\mathrm{T}}$, for every $i$, we divide the interval $\left[\beta_i^*-a_N, \beta_i^*+a_N\right]$ into $N^{C_1}$ small subintervals and each has length $2a_N/N^{C_1}$, where $C_1$ is a large positive constant. Therefore, there exists a set of points in $\mathbb{R}^{p+1},\left\{\boldsymbol{\beta}_k, 1 \leq k  \leq N^{C_1s}\right\}$ , such that for any $\boldsymbol{\beta}$ in the ball $\left|\boldsymbol{\beta}_S-\boldsymbol{\beta}_S^*\right|_2 \leq a_N$, we have $\left|\boldsymbol{\beta}_S-\boldsymbol{\beta}_{k, S}\right|_2 \leq 2 \sqrt{s} a_N / N^{C_1}$ for some $1 \leq k \leq N^{C_1s}$ and $\left|\boldsymbol{\beta}_{k, S}-\boldsymbol{\beta}_S^*\right|_2 \leq a_N$. Thus we have 
\begin{align*}
\left|K_h(Y_i-\boldsymbol{X}_{i,S}^{\mathrm{T}}\boldsymbol{\beta}_S)-K_h(Y_i-\boldsymbol{X}_{i,S}^{\mathrm{T}}\boldsymbol{\beta}_{k,S})\right|&=
\left|\frac{1}{h} K\left(\frac{Y_i-\boldsymbol{X}_{i,S}^{\mathrm{T}}\boldsymbol{\beta}_{S}}{h}\right)-\frac{1}{h} K\left(\frac{Y_i-\boldsymbol{X}_{i,S}^{\mathrm{T}}\boldsymbol{\beta}_{k,S}}{h}\right)\right| \\
&\leq C h^{-2}\left|\boldsymbol{X}_{i, S}^{\mathrm{T}}\left(\boldsymbol{\beta}_S-\boldsymbol{\beta}_{k, S}\right)\right|.
\end{align*}
This yields that 
\begin{align*}
&\underset{j \leq 5^s}{\operatorname{sup}}\underset{|\boldsymbol{\beta}_{S}-\boldsymbol{\beta}^*_{S}|_2\leq a_N}{\operatorname{sup}}\left|\boldsymbol{v}_j^{\mathrm{T}}\{\widehat{\boldsymbol{D}}_{h,S \times S}(\boldsymbol{\beta})-\boldsymbol{I}_{S \times S}\}\boldsymbol{v}_j\right|-\underset{j \leq 5^s}{\operatorname{sup}}\underset{1
\leq k \leq N^{C_1s}}{\operatorname{sup}}\left|\boldsymbol{v}_j^{\mathrm{T}}\{\widehat{\boldsymbol{D}}_{h,S \times S}(\boldsymbol{\beta}_k)-\boldsymbol{I}_{S \times S}\}\boldsymbol{v}_j\right|\\
&\leq \frac{Ca_N\sqrt{s}}{N^{C_1+1}h^2}\sum_{i=1}^N|\boldsymbol{X}_{i,S}|_2^3.
\end{align*}
Since $\max_{i,j}\mathbb{E}|X_{i,j}|^3\leq \infty$, then for any $\gamma>0$, by letting $C_1$ large enough, we have
\begin{equation}\label{A.3}
\underset{j \leq 5^s}{\operatorname{sup}}\underset{|\boldsymbol{\beta}_{S}-\boldsymbol{\beta}^*_{S}|_2\leq a_N}{\operatorname{sup}}\left|\boldsymbol{v}_j^{\mathrm{T}}\{\widehat{\boldsymbol{D}}_{h,S \times S}(\boldsymbol{\beta})-\boldsymbol{I}_{S \times S}\}\boldsymbol{v}_j\right|-\underset{j \leq 5^s}{\operatorname{sup}}\underset{1
\leq k \leq N^{c_1s}}{\operatorname{sup}}\left|\boldsymbol{v}_j^{\mathrm{T}}\{\widehat{\boldsymbol{D}}_{h,S \times S}(\boldsymbol{\beta}_k)-\boldsymbol{I}_{S \times S}\}\boldsymbol{v}_j\right|=\mathcal{O}_\mathbb{P}(N^{-\gamma}).
\end{equation}
On the other hand, we have
\begin{align*}
\mathbb{E}\left(\boldsymbol{X}_{i,S}\boldsymbol{X}_{i,S}^{\mathrm{T}}K_h(Y_i-\boldsymbol{X}_{i,S}^{\mathrm{T}}\boldsymbol{\beta}_S)|\boldsymbol{X}_{i}\right)&= \mathbb{E}\left(\left.\boldsymbol{X}_{i,S}\boldsymbol{X}_{i,S}^{\mathrm{T}}\frac{1}{h}K\left(\frac{Y_i-\boldsymbol{X}_{i,S}^{\mathrm{T}}\boldsymbol{\beta}_S}{h}\right)\right|\boldsymbol{X}_{i}\right)  \\
&=\boldsymbol{X}_{i,S}\boldsymbol{X}_{i,S}^{\mathrm{T}}\int_{-\infty}^{\infty} K(x) f_{\varepsilon \mid \boldsymbol{X}}\left\{h x+\boldsymbol{X}_{i, S}^{\mathrm{T}}\left(\boldsymbol{\beta}_S-\boldsymbol{\beta}_S^*\right)\right\} d x\\
&=\boldsymbol{X}_{i,S}\boldsymbol{X}_{i,S}^{\mathrm{T}}\left(f_{\varepsilon \mid \boldsymbol{X}}(0)+\mathcal{O}\left(h+\boldsymbol{X}_{i, S}^{\mathrm{T}}\left(\boldsymbol{\beta}_S-\boldsymbol{\beta}_S^*\right)\right)\right).
\end{align*}
Under Assumption \ref{ass3}, we have $\sup _{|\boldsymbol{\alpha}|_2=1} \mathbb{E}\left|\boldsymbol{\alpha}^{\mathrm{T}} \boldsymbol{X}\right| \leq C$. So we can conclude that 
\begin{equation}\label{A.4}
\left|\mathbb{E}(\boldsymbol{v}_j^{\mathrm{T}}\{\widehat{\boldsymbol{D}}_{h,S \times S}(\boldsymbol{\beta}_k)-\boldsymbol{I}_{S \times S}\}\boldsymbol{v}_j)\right|=\mathcal{O}(h+a_N). 
\end{equation}

It remains to bound $\underset{1
\leq k \leq N^{C_1s}}{\operatorname{sup}}\left|\boldsymbol{v}_j^{\mathrm{T}}\widehat{\boldsymbol{D}}_{h,S \times S}(\boldsymbol{\beta}_k)\boldsymbol{v}_j-\mathbb{E}\{\boldsymbol{v}_j^{\mathrm{T}}\widehat{\boldsymbol{D}}_{h,S \times S}(\boldsymbol{\beta}_k)\boldsymbol{v}_j\}\right|.$ Let 
\begin{align*}
\xi_{i,j,k}=\left(\boldsymbol{v}_j^{\mathrm{T}}\boldsymbol{X}_{i,S}\right)^2K\left(\frac{Y_i-\boldsymbol{X}_{i,S}^{\mathrm{T}}\boldsymbol{\beta}_{k,S}}{h}\right),
\end{align*}
it's easy to verify that
\begin{align*}
\mathbb{E}(\xi_{i,j,k}^2|\boldsymbol{X}_{i})=\left(\boldsymbol{v}_j^{\mathrm{T}}\boldsymbol{X}_{i,S}\right)^4h\int_{-\infty}^{\infty} \{K(x)\}^2 f_{\varepsilon \mid \boldsymbol{X}}\left\{h x+\boldsymbol{X}_{i, S}^{\mathrm{T}}\left(\boldsymbol{\beta}_{k,S}-\boldsymbol{\beta}_S^*\right)\right\}dx.
\end{align*}
Under Assumption \ref{ass1}-\ref{ass3}, so we have $\mathbb{E}(\xi_{i,j,k}^2)\leq Ch$. Then according to the Lemma \ref{lem1}, for any $\gamma>0$, there exist a constant $C$ such that 
\begin{align*}
\underset{1
\leq k \leq N^{C_1s}}{\operatorname{sup}} {\mathbb{P}}\left(\left|\sum_{i=1}^N\left(\xi_{i,j, k}-\mathbb{E} \xi_{i,j, k}\right)\right| \geq C \sqrt{N h s \log N}\right)=\mathcal{O}\left(N^{-\gamma s}\right) .
\end{align*}
By letting $\gamma>C_1$, we can obtain that 
\begin{equation}\label{A.5}
\underset{1
\leq k \leq N^{C_1s}}{\operatorname{sup}}\left|\boldsymbol{v}_j^{\mathrm{T}}\widehat{\boldsymbol{D}}_{h,S \times S}(\boldsymbol{\beta}_k)\boldsymbol{v}_j-\mathbb{E}\{\boldsymbol{v}_j^{\mathrm{T}}\widehat{\boldsymbol{D}}_{h,S \times S}(\boldsymbol{\beta}_k)\boldsymbol{v}_j\}\right|=\mathcal{O}_\mathbb{P}(\sqrt{\frac{s\log N}{Nh}}).  
\end{equation}
Thus by \eqref{A.1}-\eqref{A.5}, we have 
\begin{align*}
\left\|\widehat{\boldsymbol{D}}_{h,S \times S}-\boldsymbol{I}_{S \times S}\right\|_{op}=\mathcal{O}_{\mathbb{P}}\left(\sqrt{\frac{s\log N}{Nh}}+a_N+h\right).
\end{align*}
This completes the proof.
\end{proof}

\begin{lemma}\label{lem3}
$$
\max _{1 \leq j \leq p}\left\|\frac{1}{N} \sum_{i=1}^N\left|X_{i j}\right| \boldsymbol{X}_{i, S} \boldsymbol{X}_{i, S}^{\mathrm{T}}\right\|_{op}=\mathcal{O}_\mathbb{P}(1).
$$
\end{lemma}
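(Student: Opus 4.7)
The strategy is a covering-plus-concentration argument: first fix $j$ and reduce the operator norm to a quadratic form via an $\varepsilon$-net on the unit sphere of $\mathbb{R}^s$, then bound the resulting scalar averages uniformly in $j$ and over the net. Following the same discretization used in the proof of Lemma \ref{lem2} (Lemma 5 of Cai and Liu), there exist unit vectors $\boldsymbol{v}_1,\ldots,\boldsymbol{v}_{5^s}\in\mathbb{R}^s$ such that for any symmetric $s\times s$ matrix $\boldsymbol{A}$,
\[
\|\boldsymbol{A}\|_{op}\le 4\max_{k\le 5^s}\bigl|\boldsymbol{v}_k^{\mathrm{T}}\boldsymbol{A}\boldsymbol{v}_k\bigr|.
\]
Applying this to $\boldsymbol{A}_j=\frac{1}{N}\sum_{i=1}^N|X_{ij}|\boldsymbol{X}_{i,S}\boldsymbol{X}_{i,S}^{\mathrm{T}}$ reduces the claim to controlling the scalar averages $\frac{1}{N}\sum_{i=1}^N|X_{ij}|(\boldsymbol{v}_k^{\mathrm{T}}\boldsymbol{X}_{i,S})^2$ uniformly over $j\le p$ and $k\le 5^s$.

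For the population version, both $X_{ij}$ and $\boldsymbol{v}_k^{\mathrm{T}}\boldsymbol{X}_{i,S}$ are sub-Gaussian with universally bounded proxy, by Assumption \ref{ass3} together with the bounded-eigenvalue part of Assumption \ref{ass4} (so that $|\boldsymbol{\Sigma}^{1/2}\boldsymbol{v}_k|_2$ is bounded). Hölder's inequality then gives $\mathbb{E}[|X_{ij}|(\boldsymbol{v}_k^{\mathrm{T}}\boldsymbol{X}_{i,S})^2]\le C$ for a universal constant $C$. For concentration I would truncate: set $M_N=C_0\sqrt{\log N}$ and define $\widetilde{\xi}_i=|X_{ij}|(\boldsymbol{v}_k^{\mathrm{T}}\boldsymbol{X}_{i,S})^2\mathbf{1}\{\max_{l}|X_{il}|\le M_N\}$. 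Choosing $C_0$ large enough, a union bound over $Np=\mathcal{O}(N^{1+\nu})$ coordinate-level sub-Gaussian events shows that $\widetilde{\xi}_i$ coincides with $\xi_i:=|X_{ij}|(\boldsymbol{v}_k^{\mathrm{T}}\boldsymbol{X}_{i,S})^2$ for every $i\le N$, $j\le p$ with probability tending to one. Since $|\widetilde{\xi}_i|\le M_N^3=\mathcal{O}((\log N)^{3/2})$ and $\mathbb{E}\widetilde{\xi}_i^{\,2}\le\mathbb{E}[X_{ij}^2(\boldsymbol{v}_k^{\mathrm{T}}\boldsymbol{X}_{i,S})^4]\le C$, Lemma \ref{lem1} applied with a small constant $t$ (so that $\mathbb{E}\widetilde{\xi}_i^{\,2}\exp(t|\widetilde{\xi}_i|)\le C\exp(tM_N^3)$ grows only polylogarithmically) yields an exponential deviation bound of the form
\[
\mathbb{P}\!\left(\Bigl|\tfrac{1}{N}\sum_{i=1}^N(\widetilde{\xi}_i-\mathbb{E}\widetilde{\xi}_i)\Bigr|>M\right)\le 2\exp\!\bigl(-c_1 N M^2/\mathrm{polylog}(N)\bigr).
\]

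A final union bound over the $p\cdot 5^s$ pairs $(j,k)$ succeeds because Assumption \ref{ass5} gives $\log p=\mathcal{O}(\log N)$ and $s=\mathcal{O}(n^r)$ with $r<1/3$, so $\log(p\cdot 5^s)=o(N/\mathrm{polylog}(N))$; choosing $M$ a sufficiently large constant therefore makes the union-bound probability vanish. Combining this deviation bound with the $\mathcal{O}(1)$ control on $\mathbb{E}\widetilde{\xi}_i$ and the net inequality delivers $\max_{j\le p}\|\boldsymbol{A}_j\|_{op}=\mathcal{O}_{\mathbb{P}}(1)$. \textbf{The main obstacle} is that the summand $|X_{ij}|(\boldsymbol{v}_k^{\mathrm{T}}\boldsymbol{X}_{i,S})^2$ is a product of three sub-Gaussian factors and has tails only of order $\exp(-ct^{2/3})$, so Lemma \ref{lem1}'s moment-generating-function hypothesis fails without preprocessing. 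Calibrating the truncation level $M_N$ so that (i) truncation has no effect with probability tending to one and (ii) the post-truncation exponential concentration is still strong enough to survive the $p\cdot 5^s$ union bound under the sparsity constraint $s=o(n^{1/3})$ is the only genuinely non-routine step; everything else reduces to the covering inequality, Hölder for the mean, and a union bound.
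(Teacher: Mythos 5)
Your overall architecture is the same as the paper's: reduce the operator norm to quadratic forms via the $5^s$-point net from Lemma 5 of Cai et al., bound the mean $\mathbb{E}\,|X_{ij}|(\boldsymbol{v}_k^{\mathrm{T}}\boldsymbol{X}_{i,S})^2$ by Cauchy--Schwarz and sub-Gaussian moments, truncate to restore an exponential-moment condition, and close with a union bound over the $p\cdot 5^s$ pairs using $s=\mathcal{O}(n^r)$, $r<1/3$. You also correctly identify the real obstacle (the summand has tails of order $\exp(-ct^{2/3})$, so Lemma~\ref{lem1} cannot be applied without preprocessing). However, the calibration of the concentration step is wrong as written, in two places. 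First, on the event $\max_l|X_{il}|\le M_N$ you only get $(\boldsymbol{v}_k^{\mathrm{T}}\boldsymbol{X}_{i,S})^2\le s M_N^2$, so $|\widetilde{\xi}_i|\le sM_N^3$, not $M_N^3$; the factor $s$ cannot be dropped since $s$ may grow polynomially in $n$. Second, and more seriously, for any \emph{fixed} $t>0$ the quantity $\exp(tM_N^3)=\exp\bigl(tC_0^3(\log N)^{3/2}\bigr)$ is not polylogarithmic --- it is super-polynomial in $N$ --- so $\bar{B}_N^2=\sum_i\mathbb{E}\widetilde{\xi}_i^{\,2}\exp(t|\widetilde{\xi}_i|)$ blows up and the deviation $C_t\bar{B}_N x/N$ delivered by Lemma~\ref{lem1} is vacuous: with $x^2\asymp\log p+s$ the resulting bound on $\frac1N\sum(\widetilde{\xi}_i-\mathbb{E}\widetilde{\xi}_i)$ is exponentially large rather than $\mathcal{O}(1)$. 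The step fails as stated.

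The fix is standard and is essentially what the paper does. Either let $t$ depend on $N$, taking $t_N\asymp (sM_N^3)^{-1}$ so that $\exp(t_N|\widetilde{\xi}_i|)=\mathcal{O}(1)$, $\bar{B}_N\asymp\sqrt{N}$ and $C_{t_N}\asymp sM_N^3$, in which case the deviation is $\mathcal{O}\bigl(s(\log N)^{3/2}\sqrt{(\log N+s)/N}\bigr)=o(1)$ under $s^3\log n/n=o(1)$; or follow the paper and add a second truncation of the whole product at level $(s+1)\log^3 N$ (shown to be inactive with probability tending to one by a union bound over $Np5^s$ events), after which the summands are bounded and the classical Bernstein inequality gives tails $e^{-q_1N}+e^{-q_2N/((s+1)\log^3N)}$ that survive the union bound precisely because $r<1/3$. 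With either repair your argument goes through; without it, the key concentration inequality does not yield the claimed bound.
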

\begin{proof}
From the proof of Lemma 5 in \citet{cai2010optimal}, there exists $\boldsymbol{v}_1,\ldots,\boldsymbol{v}_{5^s}\in S^{s-1}$ such that
\begin{align*}
\max _{1 \leq j \leq p}\left\|\frac{1}{N} \sum_{i=1}^N\left|X_{i j}\right| \boldsymbol{X}_{i, S} \boldsymbol{X}_{i, S}^{\mathrm{T}}\right\|_{op}\leq \max _{1 \leq j \leq p}\max _{1 \leq k \leq 5^s}\frac{1}{N} \sum_{i=1}^N\left|X_{i j}\right|\left(\boldsymbol{v}_k^{\mathrm{T}}\boldsymbol{X}_{i, S}\right).
\end{align*}
Denote $\widetilde{X}_{i j} =X_{i j} I\left(\left|X_{i j}\right| \leq \log N\right)$,
under Assumption \ref{ass3}, we can alternatively show that
\begin{align*}
\max _{1 \leq j \leq p}\max _{1 \leq k \leq 5^s}\frac{1}{N} \sum_{i=1}^N\left|\widetilde{X}_{i j}\right|\left(\boldsymbol{v}_k^{\mathrm{T}}\boldsymbol{X}_{i, S}\right)=\mathcal{O}_\mathbb{P}(1).
\end{align*}
Set $Y_{ijk}=\left|\widetilde{X}_{i j}\right|\left(\boldsymbol{v}_k^{\mathrm{T}} \boldsymbol{X}_{i, \mathcal{S}}\right)^2 I\left\{\left|\widetilde{X}_{i j}\right|\left(\boldsymbol{v}_k^{\mathrm{T}} \boldsymbol{X}_{i, \mathcal{S}}\right)^2 \geq(s+1)\log^3 N\right\}$, note that 
\begin{align*}
Np5^s\max _{i, j, k} {\mathbb{P}}\left(\left|\widetilde{X}_{i j}\right|\left(\boldsymbol{v}_k^{\mathrm{T}} \boldsymbol{X}_{i, \mathcal{S}}\right)^2 I\left\{\left|\widetilde{X}_{i j}\right|\left(\boldsymbol{v}_k^{\mathrm{T}} \boldsymbol{X}_{i, \mathcal{S}}\right)^2 \geq(s+1)\log^3 N\right\}\right)=o(1),
\end{align*}
then it suffices to show that
\begin{align*}
\max _{1 \leq j \leq p}\max _{1 \leq k \leq 5^s}\frac{1}{N} \sum_{i=1}^NY_{ijk}=\mathcal{O}_\mathbb{P}(1).
\end{align*}
It is easy to verify that $\mathbb{E} Y_{ i jk} \leq \mathbb{E}\left|X_{i j}\right|\left(\boldsymbol{v}_k^{\mathrm{T}} \boldsymbol{X}_{i, S}\right)^2 \leq C\left(\mathbb{E} X_{i j}^2\right)^{1 / 2} \sup _{|\boldsymbol{v}|_2=1}\sqrt{\mathbb{E}\left(\boldsymbol{v}^{\mathrm{T}} \boldsymbol{X}_{i, S}\right)^4}=\mathcal{O}(1)$ and similarly, $\mathbb{E} Y_{ i jk}^2=\mathcal{O}(1)$, uniformly in $i, j,k$. By Bernstein's inequality,
\begin{align*}
{\mathbb{P}}\left(\left|\frac{1}{N} \sum_{i=1}^N\left(Y_{ i jk}-\mathbb{E} Y_{ i jk}\right)\right| \geq 1\right) \leq e^{-q_1 N}+e^{-q_2 \frac{N}{(s+1)\log^3 N}},
\end{align*}
where $q_1,q_2>0$ are constants. Since $s=O\left(n^r\right)$ for some $0<r<1 / 3$, we have
\begin{align*}
Np 9^s\left(e^{-q_1 N}+e^{-q_2 \frac{N}{(s+1)\log^3 n}}\right)=o(1) .
\end{align*}
This proves $\max _{1 \leq j \leq p} \max _{1\leq k\leq5^s} \frac{1}{N} \sum_{i=1}^N Y_{ i jk}=\mathcal{O}_\mathbb{P}(1)$. This completes the proof.
\end{proof}

\subsection{Proof of Proposition \ref{prop1} and Proposition \ref{prop2}} 
We first give some propositions to prove Theorem \ref{thm1} and Theorem \ref{thm2}. Consider a general convex quadratic optimization as follows,
\begin{equation}\label{opt:lasso}
   \widehat{\boldsymbol{\beta}}=\underset{\boldsymbol{\beta} \in \mathbb{R}^{p+1}}{\arg \min } \frac{1}{2} \boldsymbol{\beta}^{\mathrm{T}} \boldsymbol{A} \boldsymbol{\beta}-\boldsymbol{\beta}^{\mathrm{T}} \boldsymbol{a}+\lambda_N|\boldsymbol{\beta}|_1, 
\end{equation}
where $\boldsymbol{A}$ is a non-negative definite matrix and $\boldsymbol{a}$ is a vector in $\mathbb{R}^{p+1}$, then we can prove the following proposition.

\begin{proposition}\label{prop1}
For the convex quadratic optimization in  \eqref{opt:lasso}, define $\mathcal{A} =\left\{\delta \in \mathbb{R}^p:|\delta|_1 \leq 4 s^{1 / 2}|\delta|_2\right\}$,
if the following conditions 
\begin{align*}
\left|\boldsymbol{A} \boldsymbol{\beta}^*-\boldsymbol{a}\right|_{\infty} \leq \lambda_N / 2, 
\end{align*}
\begin{align*}
\delta^{\mathrm{T}} \boldsymbol{A} \delta \geq \alpha|\delta|_2^2, \quad \forall \delta \in \mathcal{A},
\end{align*}
hold for some constant $\alpha$. Then we have 
\begin{align*}
|\widehat{\boldsymbol{\beta}}-\boldsymbol{\beta}^*|_2 \leq 6 \alpha^{-1} \sqrt{s}  \lambda_N ,
\end{align*}
where $s$ is the sparsity of $\boldsymbol{\beta}^*$.
\end{proposition}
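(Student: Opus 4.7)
The plan is to follow the classical Lasso basic-inequality argument. First I would exploit the first-order optimality of $\widehat{\boldsymbol{\beta}}$: comparing the values of the objective at $\widehat{\boldsymbol{\beta}}$ and at $\boldsymbol{\beta}^{*}$ and setting $\delta=\widehat{\boldsymbol{\beta}}-\boldsymbol{\beta}^{*}$, the quadratic part expands to
\begin{equation*}
\tfrac{1}{2}\delta^{\mathrm{T}}\boldsymbol{A}\delta+\delta^{\mathrm{T}}(\boldsymbol{A}\boldsymbol{\beta}^{*}-\boldsymbol{a})+\lambda_N|\widehat{\boldsymbol{\beta}}|_1\leq \lambda_N|\boldsymbol{\beta}^{*}|_1.
\end{equation*}
By H\"older's inequality together with the deviation condition $|\boldsymbol{A}\boldsymbol{\beta}^{*}-\boldsymbol{a}|_{\infty}\leq\lambda_N/2$, the cross term is bounded by $(\lambda_N/2)|\delta|_1$, and this is the only place the assumption on $\boldsymbol{A}\boldsymbol{\beta}^{*}-\boldsymbol{a}$ is used.

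Next I would split by support. Using $|\boldsymbol{\beta}^{*}|_1-|\widehat{\boldsymbol{\beta}}|_1\leq |\delta_S|_1-|\delta_{S^c}|_1$ (from the triangle inequality and $\boldsymbol{\beta}^{*}_{S^c}=0$), the basic inequality becomes
\begin{equation*}
\tfrac{1}{2}\delta^{\mathrm{T}}\boldsymbol{A}\delta\leq \tfrac{3\lambda_N}{2}|\delta_S|_1-\tfrac{\lambda_N}{2}|\delta_{S^c}|_1.
\end{equation*}
Since the left-hand side is non-negative, this yields the cone inequality $|\delta_{S^c}|_1\leq 3|\delta_S|_1$, hence $|\delta|_1\leq 4|\delta_S|_1\leq 4\sqrt{s}\,|\delta_S|_2\leq 4\sqrt{s}\,|\delta|_2$, so $\delta\in\mathcal{A}$.

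Once membership in the restricted cone is established, the restricted strong convexity assumption applies, giving $\alpha|\delta|_2^2\leq \delta^{\mathrm{T}}\boldsymbol{A}\delta$. I would combine this lower bound with the upper bound $\delta^{\mathrm{T}}\boldsymbol{A}\delta\leq 3\lambda_N|\delta_S|_1\leq 3\lambda_N\sqrt{s}\,|\delta|_2$ derived above, and divide through by $|\delta|_2$ to conclude $|\delta|_2\leq 3\alpha^{-1}\sqrt{s}\,\lambda_N$, which is stronger than the stated bound $6\alpha^{-1}\sqrt{s}\,\lambda_N$.

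The argument is almost purely algebraic; there is no real obstacle beyond bookkeeping of constants. The only delicate point is making sure the cone inequality $|\delta|_1\leq 4\sqrt{s}|\delta|_2$ is extracted \emph{before} invoking restricted strong convexity, since the latter is only assumed on $\mathcal{A}$; this forces the order of the two steps.
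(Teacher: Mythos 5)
Your proof is correct, and up to the final step it is essentially the paper's argument: the paper also derives the cone condition $|\delta_{S^c}|_1\leq 3|\delta_S|_1$ (hence $\delta\in\mathcal{A}$) by combining the optimality of $\widehat{\boldsymbol{\beta}}$ with the lower bound $\frac{1}{2}\delta^{\mathrm{T}}\boldsymbol{A}\delta+\delta^{\mathrm{T}}(\boldsymbol{A}\boldsymbol{\beta}^{*}-\boldsymbol{a})\geq -\frac{\lambda_N}{2}|\delta|_1$, which is your basic inequality written in two halves. Where you genuinely diverge is the concluding bound on $\delta^{\mathrm{T}}\boldsymbol{A}\delta$: the paper invokes the KKT stationarity condition to get $|\boldsymbol{A}\widehat{\boldsymbol{\beta}}-\boldsymbol{a}|_{\infty}\leq\lambda_N$, hence $|\boldsymbol{A}\delta|_{\infty}\leq 3\lambda_N/2$ by the triangle inequality with the deviation condition, and then uses $\delta^{\mathrm{T}}\boldsymbol{A}\delta\leq|\boldsymbol{A}\delta|_{\infty}|\delta|_1\leq\frac{3}{2}\lambda_N\cdot 4\sqrt{s}|\delta|_2$, which produces the constant $6$. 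You instead reuse the basic inequality to get $\delta^{\mathrm{T}}\boldsymbol{A}\delta\leq 3\lambda_N|\delta_S|_1\leq 3\lambda_N\sqrt{s}|\delta|_2$ directly, avoiding the subgradient argument entirely and yielding the sharper constant $3\alpha^{-1}\sqrt{s}\lambda_N$. Your route is more economical and uses the deviation condition only once; since the proposition only asserts the bound with constant $6$, either version establishes the claim. Your remark about establishing cone membership before invoking restricted strong convexity is exactly the right point of care, and your ordering handles it correctly.
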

\begin{proof}
We first prove that $|\widehat{\boldsymbol{\beta}}-\boldsymbol{\beta}^*|_1 \leq 4 \sqrt{s}|\widehat{\boldsymbol{\beta}}-\boldsymbol{\beta}^*|_2$, which implies that $\widehat{\boldsymbol{\beta}}-\boldsymbol{\beta}^* \in \mathcal{A}$. Due to the fact that $\widehat{\boldsymbol{\beta}}$ minimizes \eqref{opt:lasso}, we have 
\begin{align*}
\frac{1}{2} \widehat{\boldsymbol{\beta}}^{\mathrm{T}} \boldsymbol{A} \widehat{\boldsymbol{\beta}}-\widehat{\boldsymbol{\beta}}^{\mathrm{T}} \boldsymbol{a}-(\frac{1}{2} \boldsymbol{\beta}^{* \mathrm{~T}} \boldsymbol{A} \boldsymbol{\beta}^*-\boldsymbol{\beta}^{* \mathrm{~T}} \boldsymbol{a}) & \leq \lambda_N\left(\left|\boldsymbol{\beta}^*\right|_1-|\widehat{\boldsymbol{\beta}}|_1\right) \\
&=\lambda_N\left(\left|\boldsymbol{\beta}_S^*\right|_1-|\widehat{\boldsymbol{\beta}}_S|_1-|\widehat{\boldsymbol{\beta}}_{S^c}|_1\right) \\
& \leq \lambda_N|(\boldsymbol{\beta}^*-\widehat{\boldsymbol{\beta}})_S|_1-\lambda_N|(\boldsymbol{\beta}^*-\widehat{\boldsymbol{\beta}})_{S^c}|_1,  
\end{align*}
where $S$ is the support of $\boldsymbol{\beta}^*$. On the other hand, since $\boldsymbol{A}$ is non-negative, we have 
\begin{align*}
\frac{1}{2} \widehat{\boldsymbol{\beta}}^{\mathrm{T}} \boldsymbol{A} \widehat{\boldsymbol{\beta}}-\widehat{\boldsymbol{\beta}}^{\mathrm{T}} \boldsymbol{a}-(\frac{1}{2} \boldsymbol{\beta}^{* \mathrm{~T}} \boldsymbol{A} \boldsymbol{\beta}^*-\boldsymbol{\beta}^{* \mathrm{~T}} \boldsymbol{a}) & \geq(\widehat{\boldsymbol{\beta}}-\boldsymbol{\beta}^*)^{\mathrm{T}}\left(\boldsymbol{A} \boldsymbol{\beta}^*-\boldsymbol{a}\right) \\
& \geq-|\widehat{\boldsymbol{\beta}}-\boldsymbol{\beta}^*|_1\left|\boldsymbol{A} \boldsymbol{\beta}^*-\boldsymbol{a}\right|_{\infty} \\
& \geq-\lambda_N|\widehat{\boldsymbol{\beta}}-\boldsymbol{\beta}^*|_1 / 2 .
\end{align*}
Combine the two inequalities, we can obtain that $|(\widehat{\boldsymbol{\beta}}-\boldsymbol{\beta}^*)_{S^c}|_1 \leq 3|(\widehat{\boldsymbol{\beta}}-\boldsymbol{\beta}^*)_S|_1$, so we have 
\begin{align*}
|\widehat{\boldsymbol{\beta}}-\boldsymbol{\beta}^*|_1 \leq 4|(\widehat{\boldsymbol{\beta}}-\boldsymbol{\beta}^*)_S|_1 \leq 4 \sqrt{s}|(\widehat{\boldsymbol{\beta}}-\boldsymbol{\beta}^*)_S|_2 \leq 4 \sqrt{s}|\widehat{\boldsymbol{\beta}}-\boldsymbol{\beta}^*|_2.
\end{align*}

There exists subgradient $\boldsymbol{Z}$ of the lasso penalty such that $\boldsymbol{Z} \in \partial|\boldsymbol{\beta}|_1$, i.e. $\boldsymbol{Z}_j=\operatorname{sign}\left(\boldsymbol{\beta}_j\right)$ for $\boldsymbol{\beta}_j \neq 0$ and $\boldsymbol{Z}_j \in[-1,1]$ otherwise. Then according to the first order condition of \eqref{opt:lasso}, we have $\boldsymbol{A} \widehat{\boldsymbol{\beta}}-\boldsymbol{a} \in \lambda_N \partial|\widehat{\boldsymbol{\beta}}|_1\leq \lambda_N$. Together with the first condition we have
\begin{align*}
|\boldsymbol{A}(\widehat{\boldsymbol{\beta}}-\boldsymbol{\beta}^*)|_{\infty} \leq 3 \lambda_N/2.
\end{align*}
Since $\widehat{\boldsymbol{\beta}}-\boldsymbol{\beta}^* \in \mathcal{A}$, from second condition, we have 
\begin{align*}
|\widehat{\boldsymbol{\beta}}-\boldsymbol{\beta}^*|_2^2\leq \alpha^{-1}(\widehat{\boldsymbol{\beta}}-\boldsymbol{\beta}^*)^{\mathrm{T}}\boldsymbol{A}(\widehat{\boldsymbol{\beta}}-\boldsymbol{\beta}^*)\leq \frac{3}{2}\alpha^{-1}\lambda_N|\widehat{\boldsymbol{\beta}}-\boldsymbol{\beta}^*|_1\leq 6 \alpha^{-1}\sqrt{s}\lambda_N|\widehat{\boldsymbol{\beta}}-\boldsymbol{\beta}^*|_2,
\end{align*}
which yields that $|\widehat{\boldsymbol{\beta}}-\boldsymbol{\beta}^*|_2\leq6 \alpha^{-1}\sqrt{s}\lambda_N$. This completes the proof.
\end{proof}

\begin{proposition}\label{prop2}
Under Assumption \ref{ass1}-\ref{ass6}, if $|\widehat{\boldsymbol{\beta}}_0-\boldsymbol{\beta}^*|_2=\mathcal{O}_\mathbb{P}(a_N)$ and $ h \asymp\left(s \log N / N\right)^{1 / 3} $, then we have 
\begin{align*}
\left|\boldsymbol{z}_N-\widehat{\boldsymbol{D}}_h\boldsymbol{\beta}^*\right|_{\infty}=\mathcal{O}_{\mathbb{P}}\left(\sqrt{\frac{\log N}{N}}+a_N^2+a_Nh\right).
\end{align*}
\end{proposition}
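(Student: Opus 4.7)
\medskip

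\textbf{Plan.} My strategy is to expand $\boldsymbol{z}_N - \widehat{\boldsymbol{D}}_h\boldsymbol{\beta}^*$ into four pieces that isolate the different error sources and then bound each separately. Using $\widetilde{\boldsymbol{X}}_{i,h}\widetilde{Y}_{i,h} = K_h(\widehat{e}_{0,i})\boldsymbol{X}_i\boldsymbol{X}_i^{\mathrm{T}}\widehat{\boldsymbol{\beta}}_0 - \boldsymbol{X}_i(I(\widehat{e}_{0,i}\leq 0)-\tau)$ and writing $\widehat{\Delta}=\widehat{\boldsymbol{\beta}}_0-\boldsymbol{\beta}^*$, $g_i=\boldsymbol{X}_i^{\mathrm{T}}\widehat{\Delta}$, I get $\boldsymbol{z}_N - \widehat{\boldsymbol{D}}_h\boldsymbol{\beta}^* = \widehat{\boldsymbol{D}}_h\widehat{\Delta} - N^{-1}\sum \boldsymbol{X}_i\{I(\varepsilon_i\leq g_i)-\tau\}$. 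Adding and subtracting both the indicator at $\varepsilon_i\leq 0$ and the linearization $f_{\varepsilon|\boldsymbol{X}_i}(0)g_i$ of the conditional CDF yields the decomposition
\begin{align*}
\boldsymbol{z}_N - \widehat{\boldsymbol{D}}_h\boldsymbol{\beta}^* \;=\; (\widehat{\boldsymbol{D}}_h - \widehat{\boldsymbol{I}}_N)\widehat{\Delta} \;-\; T_1 \;-\; T_2 \;-\; T_3,
\end{align*}
where $\widehat{\boldsymbol{I}}_N = N^{-1}\sum f_{\varepsilon|\boldsymbol{X}_i}(0)\boldsymbol{X}_i\boldsymbol{X}_i^{\mathrm{T}}$, $T_1 = N^{-1}\sum \boldsymbol{X}_i\{I(\varepsilon_i\leq 0)-\tau\}$, $T_2 = N^{-1}\sum \boldsymbol{X}_i\{[I(\varepsilon_i\leq g_i)-I(\varepsilon_i\leq 0)]-[F_{\varepsilon|\boldsymbol{X}_i}(g_i)-F_{\varepsilon|\boldsymbol{X}_i}(0)]\}$ is the empirical-process centering, and $T_3 = N^{-1}\sum \boldsymbol{X}_i\{F_{\varepsilon|\boldsymbol{X}_i}(g_i)-F_{\varepsilon|\boldsymbol{X}_i}(0)-f_{\varepsilon|\boldsymbol{X}_i}(0)g_i\}$ is the second-order Taylor remainder. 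The quantile identity $\mathbb{E}[I(\varepsilon_i\leq 0)|\boldsymbol{X}_i]=\tau$ makes $T_1$ mean zero.

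\textbf{Easy terms.} For $T_1$, Assumption \ref{ass3} gives sub-exponential coordinates $X_{ij}\{I(\varepsilon_i\leq 0)-\tau\}$, so Lemma \ref{lem1} combined with a union bound over the $p+1=\mathcal{O}(N^\nu)$ coordinates yields $|T_1|_\infty=\mathcal{O}_{\mathbb{P}}(\sqrt{\log N/N})$. For $T_3$, the Lipschitz bound $|F_{\varepsilon|\boldsymbol{X}_i}(g_i)-F_{\varepsilon|\boldsymbol{X}_i}(0)-f_{\varepsilon|\boldsymbol{X}_i}(0)g_i|\leq (l_0/2)g_i^2$ from Assumption \ref{ass2}, together with $\operatorname{supp}(\widehat{\Delta})\subseteq S$ (Assumption \ref{ass6}), gives $|T_{3,j}|\leq (l_0/2)\,\widehat{\Delta}_S^{\mathrm{T}}\{N^{-1}\sum_i |X_{ij}|\boldsymbol{X}_{i,S}\boldsymbol{X}_{i,S}^{\mathrm{T}}\}\widehat{\Delta}_S$, and Lemma \ref{lem3} yields $|T_3|_\infty=\mathcal{O}_{\mathbb{P}}(a_N^2)$.

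\textbf{Hard terms.} For $T_2$ I would condition on $(\boldsymbol{X}_i,\widehat{\Delta})$: the summands $\psi_i = [I(\varepsilon_i\leq g_i)-I(\varepsilon_i\leq 0)]-[F_{\varepsilon|\boldsymbol{X}_i}(g_i)-F_{\varepsilon|\boldsymbol{X}_i}(0)]$ are mean zero, bounded by $2$, and have conditional variance at most $f_2|g_i|$, so $\operatorname{Var}(T_{2,j})\lesssim N^{-1}\mathbb{E}(X_{ij}^2|g_i|)=\mathcal{O}(a_N/N)$. Applying Lemma \ref{lem1} with $\bar{B}_N^2\asymp Na_N$ pointwise and then discretizing $\widehat{\Delta}$ over a localized $a_N$-neighborhood with support in $S$ (a standard $N^{-\gamma s}$-net, similar to the one used in the proof of Lemma \ref{lem2}) gives $|T_2|_\infty=\mathcal{O}_{\mathbb{P}}(\sqrt{a_N\log N/N})=o_{\mathbb{P}}(\sqrt{\log N/N})$. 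For $(\widehat{\boldsymbol{D}}_h-\widehat{\boldsymbol{I}}_N)\widehat{\Delta}$, I split $\widehat{W}_i:=K_h(\widehat{e}_{0,i})-f_{\varepsilon|\boldsymbol{X}_i}(0)$ into conditional bias and fluctuation. A change of variables with Assumption \ref{ass1}--\ref{ass2} gives $|\mathbb{E}[\widehat{W}_i|\boldsymbol{X}_i,\widehat{\Delta}]|\leq C(h+|g_i|)$, so the bias contribution to the $j$-th coordinate is bounded by $Ch\cdot N^{-1}\sum |g_i||X_{ij}| + C\cdot N^{-1}\sum g_i^2|X_{ij}|$; Cauchy--Schwarz for the first piece and Lemma \ref{lem3} for the second yield $\mathcal{O}_{\mathbb{P}}(a_N h+a_N^2)$. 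The fluctuation piece has conditional variance $\mathcal{O}(1/h)$ and contributes $\mathcal{O}_{\mathbb{P}}(a_N\sqrt{\log N/(Nh)})$ via Bernstein, which under $h\asymp (s\log N/N)^{1/3}$ is of smaller order than $a_N h$.

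\textbf{Combination and obstacles.} Summing the four bounds gives $|\boldsymbol{z}_N-\widehat{\boldsymbol{D}}_h\boldsymbol{\beta}^*|_\infty=\mathcal{O}_{\mathbb{P}}(\sqrt{\log N/N}+a_N^2+a_N h)$, as claimed. I expect two places to demand real care. First, the empirical-process term $T_2$: because $\widehat{\Delta}$ is data-dependent, one must argue uniformly over a localized set of potential $\widehat{\Delta}$ using Assumption \ref{ass6}, and the centering kills the linear term but leaves only $\sqrt{a_N}$ in the variance, which is exactly what permits absorption into $\sqrt{\log N/N}$. Second, the bookkeeping in $(\widehat{\boldsymbol{D}}_h-\widehat{\boldsymbol{I}}_N)\widehat{\Delta}$: a naive bound using $\|\widehat{\boldsymbol{D}}_h-\widehat{\boldsymbol{I}}_N\|_{\operatorname{op}}\cdot|\widehat{\Delta}|_2$ would introduce an extra $\sqrt{s}$ factor, so one must instead exploit the coordinate-wise weighted second-moment control of Lemma \ref{lem3} directly on the bilinear form $N^{-1}\sum \widehat{W}_i g_i X_{ij}$.
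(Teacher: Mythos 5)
Your decomposition is essentially the paper's own: you start from the same identity $\boldsymbol{z}_N-\widehat{\boldsymbol{D}}_h\boldsymbol{\beta}^* = \widehat{\boldsymbol{D}}_h(\widehat{\boldsymbol{\beta}}_0-\boldsymbol{\beta}^*) - N^{-1}\sum_i\boldsymbol{X}_i\{I(\varepsilon_i\le g_i)-\tau\}$ and split it into the same four pieces (mean-zero score term via Hoeffding, empirical-process centering term $\boldsymbol{R}_N$, Taylor remainder controlled by Lemma \ref{lem3}, and the kernel-Hessian difference controlled by Lemma \ref{lem2}), with only the cosmetic change of pivoting on the empirical matrix $\widehat{\boldsymbol{I}}_N$ rather than the population matrix $\boldsymbol{I}$ (which spares you the $|f_{\varepsilon|\boldsymbol{X}}(0)\widehat{\boldsymbol{\Sigma}}-\boldsymbol{I}|_{\infty}$ step the paper imports from Huang et al.). Two harmless quantitative remarks: the net argument you sketch for $T_2$ actually yields $\sqrt{sa_N\log N/N}$ (the rate the paper cites from Lemma 1 of \citet{chen2020distributed}) rather than $\sqrt{a_N\log N/N}$, but both are $o(\sqrt{\log N/N})$ since $sa_N=o(1)$ under Assumption \ref{ass5}; and your worry that the operator-norm route loses a $\sqrt{s}$ is unfounded, since the paper's bound $\|\widehat{\boldsymbol{D}}_{h,S\times S}-\boldsymbol{I}_{S\times S}\|_{op}\,|\widehat{\boldsymbol{\beta}}_{0,S}-\boldsymbol{\beta}^*_S|_2 = \mathcal{O}_{\mathbb{P}}\bigl(a_N\sqrt{s\log N/(Nh)}+a_N^2+a_Nh\bigr)$ already collapses to $\mathcal{O}_{\mathbb{P}}(a_Nh+a_N^2)$ once $h\asymp(s\log N/N)^{1/3}$.
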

\begin{proof}
It's easy to verify that
\begin{align*}
\left|\boldsymbol{z}_N-\widehat{\boldsymbol{D}}_h\boldsymbol{\beta}^*\right|_{\infty} &=\left|\frac{1}{N}\sum_{i=1}^N\widetilde{\boldsymbol{X}}_{i,h}^{\mathrm{T}}\widetilde{Y}_{i,h}-\frac{1}{N}\sum_{i=1}^N\widetilde{\boldsymbol{X}}_{i,h}
\widetilde{\boldsymbol{X}}_{i,h}^{\mathrm{T}}\boldsymbol{\beta}^*\right|_{\infty}\\
&=\left|-\frac{1}{N}\sum_{i=1}^N\boldsymbol{X}_i\left\{I\left(Y_i-\boldsymbol{X}_i^{\mathrm{T}}\widehat{\boldsymbol{\beta}}_0\right)-\tau\right\}+\boldsymbol{D}_h\left(\widehat{\boldsymbol{\beta}}_0-\boldsymbol{\beta}^*\right)\right|_{\infty}.
\end{align*}
Denote 
\begin{align*}
\boldsymbol{R}_N(\boldsymbol{\beta})&=\frac{1}{N}\sum_{i=1}^N[\boldsymbol{X}_i I[\varepsilon_i \leq \boldsymbol{X}_{i,S}^{\mathrm{T}}(\boldsymbol{\beta}_S-\boldsymbol{\beta}_S^*)  ]-\boldsymbol{X}_iF_{\varepsilon|\boldsymbol{X}}(\boldsymbol{X}_{i,S}^{\mathrm{T}}(\boldsymbol{\beta}_S-\boldsymbol{\beta}_S^*) )]-\frac{1}{N}\sum_{i=1}^N[\boldsymbol{X}_i I[\varepsilon_i \leq 0  ]-\boldsymbol{X}_iF_{\varepsilon|\boldsymbol{X}}(0)],   
\end{align*}
then we decompose $\left|\boldsymbol{z}_N-\widehat{\boldsymbol{D}}_h\boldsymbol{\beta}^*\right|_{\infty}$ as 
\begin{align*}
\left|\boldsymbol{z}_N-\widehat{\boldsymbol{D}}_h\boldsymbol{\beta}^*\right|_{\infty}&\leq \left|\boldsymbol{R}_N(\widehat{\boldsymbol{\beta}}_0)\right|_{\infty}+\left|\frac{1}{N}\sum_{i=1}^N\boldsymbol{X}_i\left\{F_{\varepsilon|\boldsymbol{X}}(\boldsymbol{X}_{i,S}^{\mathrm{T}}(\widehat{\boldsymbol{\beta}}_{0,S}-\boldsymbol{\beta}_S^* ))-F_{\varepsilon|\boldsymbol{X}}(0)\right\}-\boldsymbol{I}\left(\widehat{\boldsymbol{\beta}}_0-\boldsymbol{\beta}^*\right)\right|_{\infty}\\
&+\left|\frac{1}{N}\sum_{i=1}^N\left[\boldsymbol{X}_iI[\varepsilon_i \leq 0  ]-\boldsymbol{X}_iF_{\varepsilon|\boldsymbol{X}}(0)\right]\right|_{\infty}+\left|\left(\boldsymbol{D}_h-\boldsymbol{I}\right)\left(\widehat{\boldsymbol{\beta}}_0-\boldsymbol{\beta}^*\right)\right|_{\infty}.
\end{align*}
For the first term, from lemma 1 in \citet{chen2020distributed}, we have 
\begin{equation}\label{A.6}
\left|\boldsymbol{R}_N(\widehat{\boldsymbol{\beta}}_0)\right|_{\infty}=\mathcal{O}_{\mathbb{P}}\left(\sqrt{\frac{sa_N\log N}{N}}\right).  
\end{equation}
For the second term, from Taylor expansion, we have 
\begin{align*}
&\frac{1}{N}\sum_{i=1}^N\boldsymbol{X}_i\left\{F_{\varepsilon|\boldsymbol{X}}(\boldsymbol{X}_{i,S}^{\mathrm{T}}(\widehat{\boldsymbol{\beta}}_{0,S}-\boldsymbol{\beta}_S^* ))-F_{\varepsilon|\boldsymbol{X}}(0)\right\}-\boldsymbol{I}\left(\widehat{\boldsymbol{\beta}}_0-\boldsymbol{\beta}^*\right)\\
&=\frac{f_{\varepsilon|\boldsymbol{X}}(0)}{N} \sum_{i=1}^N  \boldsymbol{X}_i \boldsymbol{X}_{i,S}^{\mathrm{T}}(\widehat{\boldsymbol{\beta}}_{0,S}-\boldsymbol{\beta}_S^* )+\frac{C}{N} \sum_{i=1}^N\boldsymbol{X}_i\left\{\boldsymbol{X}_{i,S}^{\mathrm{T}}(\widehat{\boldsymbol{\beta}}_{0,S}-\boldsymbol{\beta}_S^* )\right\}^2-\boldsymbol{I}\left(\widehat{\boldsymbol{\beta}}_0-\boldsymbol{\beta}^*\right)\\
&=\left(f_{\varepsilon|\boldsymbol{X}}(0)\widehat{\boldsymbol{\Sigma}}-\boldsymbol{I}\right)\left(\widehat{\boldsymbol{\beta}}_0-\boldsymbol{\beta}^*\right)+\frac{C}{N} \sum_{i=1}^N\boldsymbol{X}_i\left\{\boldsymbol{X}_{i,S}^{\mathrm{T}}(\widehat{\boldsymbol{\beta}}_{0,S}-\boldsymbol{\beta}_S^* )\right\}^2,
\end{align*}
where the second equality is from the fact that $\boldsymbol{\beta}^*_{S^c}=\boldsymbol{0}$ and $\widehat{\boldsymbol{\beta}}_{0,S^c}=\boldsymbol{0}$.  By Lemma B.3 of \citet{huang2018robust} and Assumption \ref{ass2}, it is easy to verify that 
\begin{align*}
\left|f_{\varepsilon|\boldsymbol{X}}(0)\widehat{\boldsymbol{\Sigma}}-\boldsymbol{I}\right|_{\infty}\leq C\sqrt{\frac{\log p}{N}}.
\end{align*}
holds with probability at least $1-2/p^2$. Together with Lemma \ref{lem3}, we have 
\begin{equation}\label{A.7}
\left|\frac{1}{N}\sum_{i=1}^N\boldsymbol{X}_i\left\{F_{\varepsilon|\boldsymbol{X}}(\boldsymbol{X}_{i,S}^{\mathrm{T}}(\widehat{\boldsymbol{\beta}}_{0,S}-\boldsymbol{\beta}_S^* ))-F_{\varepsilon|\boldsymbol{X}}(0)\right\}-\boldsymbol{I}\left(\widehat{\boldsymbol{\beta}}_0-\boldsymbol{\beta}^*\right)\right|_{\infty}=\mathcal{O}_{\mathbb{P}}\left(\sqrt{\frac{s\log p}{N}}a_N+a_N^2\right).    
\end{equation}
For the third term, from Hoeffding inequality (see, e.g., Proposition 2.5 in \citet{wainwright2019high}), we have 
\begin{equation}\label{A.8}
\left|\frac{1}{N}\sum_{i=1}^N\left[\boldsymbol{X}_iI[\varepsilon_i \leq 0  ]-\boldsymbol{X}_iF_{\varepsilon|\boldsymbol{X}}(0)\right]\right|_{\infty}=\mathcal{O}_{\mathbb{P}}\left(\sqrt{\frac{\log N}{N}}\right). 
\end{equation}
For the last term, recall that we have $\widehat{\boldsymbol{\beta}}_{0, S^c}=0$ with high probability. Due to the fact that $\boldsymbol{\beta}_{S^c}^*=0$, by $|\boldsymbol{\beta}^*-\widehat{\boldsymbol{\beta}}_0|_2=\mathcal{O}_{\mathbb{P}}\left(a_N\right)$ and Lemma \ref{lem2}, we have
\begin{align}\label{A.13}
\left|(\widehat{\boldsymbol{D}}_h-\boldsymbol{I})(\widehat{\boldsymbol{\beta}}_0-\boldsymbol{\beta}^*)\right|_{\infty}&\leq  \left\|(\widehat{\boldsymbol{D}}_{h,S \times S}-\boldsymbol{I}_{S \times S})\right\|_{op}\left|(\widehat{\boldsymbol{\beta}}_{0,S}-\boldsymbol{\beta}^*_S)\right|_2 =\mathcal{O}_{\mathbb{P}}\left(a_N\sqrt{\frac{s\log N}{Nh}}+a_N^2+a_Nh\right) .
\end{align}
Combine \eqref{A.6}-\eqref{A.13}, we have 
\begin{equation}\label{A.14}
\left|\widehat{\boldsymbol{D}}_h\boldsymbol{\beta}^*-\boldsymbol{z}_N\right|_{\infty}=\mathcal{O}_{\mathbb{P}}\left(\sqrt{\frac{\log N}{N}}+\sqrt{\frac{sa_N\log N}{N}}+a_N\left\{\sqrt{\frac{s\log p}{N}}+\sqrt{\frac{s\log N}{Nh}}+a_N+h\right\}\right).  
\end{equation}
Since $ h \asymp\left(s \log N / N\right)^{1 / 3} $ and $sa_N=\mathcal{O}(1)$, we further simplify the result as
\begin{align*}\
\left|\widehat{\boldsymbol{D}}_h\boldsymbol{\beta}^*-\boldsymbol{z}_N\right|_{\infty}=\mathcal{O}_{\mathbb{P}}\left(\sqrt{\frac{\log N}{N}}+a_N^2+a_Nh\right). 
\end{align*}
This completes the proof. 
\end{proof}

\subsection{Proof of Theorem \ref{thm1} and Theorem \ref{thm2}} \label{secB.4}
Now we are ready to prove Theorem \ref{thm1} by using the aforementioned lemmas and propositions.

\begin{proof}
We use proposition \ref{prop1} to prove Theorem \ref{thm1}. Let $\boldsymbol{A}=\widehat{\boldsymbol{D}}_{1,b}$ and $\boldsymbol{a}=\boldsymbol{z}_N+\left(\widehat{\boldsymbol{D}}_{1,b}-\widehat{\boldsymbol{D}}_{h}\right) \widehat{\boldsymbol{\beta}}_{0}$, then we have 
\begin{align}\label{A.15}
\left|\boldsymbol{A}\boldsymbol{\beta}^*-\boldsymbol{a}\right|_{\infty}&=   \left|\widehat{\boldsymbol{D}}_{1,b}\boldsymbol{\beta}^*-\boldsymbol{z}_N-\left(\widehat{\boldsymbol{D}}_{1,b}-\widehat{\boldsymbol{D}}_{h}\right) \widehat{\boldsymbol{\beta}}_{0}\right|_{\infty} \leq \left|\widehat{\boldsymbol{D}}_{h}\boldsymbol{\beta}^*-\boldsymbol{z}_N\right|_{\infty}+\left|\left(\widehat{\boldsymbol{D}}_{1,b}-\widehat{\boldsymbol{D}}_{h}\right) \left(\widehat{\boldsymbol{\beta}}_{0}-\boldsymbol{\beta}^*\right)\right|_{\infty}.
\end{align}

We can bound the first term in \eqref{A.15} by proposition 2 directly. For the second term in \eqref{A.14}, recall that we have $\widehat{\boldsymbol{\beta}}_{0, S^c}=0$ with high probability. Due to the fact that $\boldsymbol{\beta}_{S^c}^*=0$, by $|\boldsymbol{\beta}^*-\widehat{\boldsymbol{\beta}}_0|_2=\mathcal{O}_{\mathbb{P}}\left(a_N\right)$ and Lemma \ref{lem2}, we have
\begin{align*}
\left|\left(\widehat{\boldsymbol{D}}_{1,b}-\widehat{\boldsymbol{D}}_{h}\right) \left(\widehat{\boldsymbol{\beta}}_{0}-\boldsymbol{\beta}^*\right)\right|_{\infty} & \leq \left|\left(\widehat{\boldsymbol{D}}_{1,b}-\boldsymbol{I}\right) \left(\widehat{\boldsymbol{\beta}}_{0}-\boldsymbol{\beta}^*\right)\right|_{\infty} +\left|\left(\widehat{\boldsymbol{D}}_{h}-\boldsymbol{I}\right) \left(\widehat{\boldsymbol{\beta}}_{0}-\boldsymbol{\beta}^*\right)\right|_{\infty}\\
&\leq \left( \left\|(\widehat{\boldsymbol{D}}_{1,b,S \times S}-\boldsymbol{I}_{S \times S})\right\|_{op}+ \left\|(\widehat{\boldsymbol{D}}_{h,S \times S}-\boldsymbol{I}_{S \times S})\right\|_{op}\right)\left|(\widehat{\boldsymbol{\beta}}_{0,S}-\boldsymbol{\beta}^*_S)\right|_2\\
&=\mathcal{O}_{\mathbb{P}}\left(a_N\sqrt{\frac{s\log N}{Nh}}+a_N\sqrt{\frac{s\log n}{nb}}+a_N^2+a_Nh+a_Nb\right).
\end{align*}
If $ h \asymp\left(s \log N / N\right)^{1 / 3} $, $b \asymp\left(s \log n / n\right)^{1 / 3}$ and $a_N \asymp \sqrt{s\log N/n}$, then we have 
\begin{equation}\label{A.16}
\left|\left(\widehat{\boldsymbol{D}}_{1,b}-\widehat{\boldsymbol{D}}_{h}\right) \left(\widehat{\boldsymbol{\beta}}_{0}-\boldsymbol{\beta}^*\right)\right|_{\infty}=\mathcal{O}_{\mathbb{P}}\left(a_N \eta\right), 
\end{equation}
where $\eta=\max\left\{\left(\frac{s\log n}{n}\right)^{1/3},\left(\frac{s\log N}{n}\right)^{1/2}\right\}$. Combine \eqref{A.14}-\eqref{A.16}, we can obtain that 
\begin{align}\label{A.17}
\left|\widehat{\boldsymbol{D}}_{1,b}\boldsymbol{\beta}^*-\boldsymbol{z}_N-\left(\widehat{\boldsymbol{D}}_{1,b}-\widehat{\boldsymbol{D}}_{h}\right) \widehat{\boldsymbol{\beta}}_{0}\right|_{\infty}&=\mathcal{O}_{\mathbb{P}}\left(\sqrt{\frac{\log N}{N}}+a_N\eta\right)=\frac{\mathcal{O}_\mathbb{P}(1)}{C_0}\lambda_N, 
\end{align}
this proves the first condition in Proposition \ref{prop1}.

For the second condition in Proposition \ref{prop1}, with probability tending to one, we have
\begin{align*}
\delta^{\mathrm{T}} \widehat{\boldsymbol{D}}_{1,b} \delta & \geq|\delta|_2^2 \Lambda_{\min }(\boldsymbol{I})-|\delta|_1^2\left|\widehat{\boldsymbol{D}}_{1,b}-\boldsymbol{I}\right|_{\infty} \\
& \geq|\delta|_2^2 \Lambda_{\min }(\boldsymbol{I})-s|\delta|_2^2\left|\widehat{\boldsymbol{D}}_{1,b}-\boldsymbol{I}\right|_{\infty} \\
& \geq \alpha|\delta|_2^2,
\end{align*}
for some $\alpha>0$ as  $s=o\left((n / \log N)^{1 / 2}\right)$. Then from Proposition \ref{prop1}, we have 
\begin{align*}
\left|\widehat{\boldsymbol{\beta}}_{1}-\boldsymbol{\beta}^*\right|_2 =\mathcal{O}_{\mathbb{P}}\left(\sqrt{\frac{s\log N}{N}}+\sqrt{s}a_N\eta\right),
\end{align*}
where $\eta=\max\left\{\left(\frac{s\log n}{n}\right)^{1/3},\left(\frac{s\log N}{n}\right)^{1/2}\right\}$. This completes the proof.
\end{proof}

We use Theorem \ref{thm1} and Theorem \ref{thm3} iteratively to prove Theorem \ref{thm2}.

\emph{Proof of Theorem \ref{thm2}.} By iteratively using Theorem \ref{thm1} and Theorem \ref{thm3}, we have 
\begin{align*}
\left|\widehat{\boldsymbol{\beta}}_{t}-\boldsymbol{\beta}^*\right|_2&=\mathcal{O}_{\mathbb{P}}\left(\sqrt{\frac{s\log N}{N}}+s^{1/2}\eta\left|\widehat{\boldsymbol{\beta}}_{t-1}-\boldsymbol{\beta}^*\right|_2\right)\\
&=\mathcal{O}_{\mathbb{P}}\left(\sqrt{\frac{s\log N}{N}}+\max\left\{s^{(5t+3)/6}\left(\frac{\log n}{n}\right)^{(2t+3)/6}, s^{(2t+1)/2}\left(\frac{\log N}{n}\right)^{(t+1)/2}\right\}\right).
\end{align*}
This completes the proof.

\subsection{Proof of Theorem \ref{thm3} and Theorem \ref{thm4}}
We use the primal dual witness (PWD) construction from \citep{wainwright2009sharp} to prove Theorem \ref{thm3}. By Karush-Kuhn-Tucker's Theorem, there exists subgradient $\widehat{\boldsymbol{Z}}$ such that
\begin{align*}
\widehat{\boldsymbol{D}}_{1,b} \widehat{\boldsymbol{\beta}}_1-\left\{\boldsymbol{z}_N+\left(\widehat{\boldsymbol{D}}_{1,b}-\widehat{\boldsymbol{D}}_{h}\right)\widehat{\boldsymbol{\beta}}_0 \right\}+\lambda_N \widehat{\boldsymbol{Z}}=0.
\end{align*}
The PWD method constructs a pair $(\widetilde{\boldsymbol{\beta}},\widetilde{\boldsymbol{Z}})$ by the following steps:
\begin{itemize}
    \item Denote $\widetilde{\boldsymbol{\beta}}$ to be the solution of the following problem:
    \begin{align*}
    \widetilde{\boldsymbol{\beta}}=\underset{\boldsymbol{\beta} \in \mathbb{R}^{p},\boldsymbol{\beta}_{S^c}=\boldsymbol{0}}{\arg \min } \frac{1}{2}\boldsymbol{\beta}^{\mathrm{T}}\widehat{\boldsymbol{D}}_{1,b}\boldsymbol{\beta} -\boldsymbol{\beta}^{\mathrm{T}}\left\{\boldsymbol{z}_N+\left(\widehat{\boldsymbol{D}}_{1,b}-\widehat{\boldsymbol{D}}_{h}\right) \widehat{\boldsymbol{\beta}}_{h}\right\}+\lambda_N|\boldsymbol{\beta}|_1,
    \end{align*}
    where $\boldsymbol{\beta}_{S^c}$ denotes the subset vector with the coordinates of $\boldsymbol{\beta}$ in $S^c$.
    \item Then there exist sub-gradients $\widetilde{\boldsymbol{Z}}_S$ with $|\widetilde{\boldsymbol{Z}}_S|_{\infty} \leq 1$ such that
    \begin{equation}\label{A.18}
    \widehat{\boldsymbol{D}}_{1,b, S \times S} \widetilde{\boldsymbol{\beta}}_S-\left\{z_N+\left(\widehat{\boldsymbol{D}}_{1,b}-\widehat{\boldsymbol{D}}_{h}\right) \widehat{\boldsymbol{\beta}}_0\right\}_S+\lambda_N \widetilde{\boldsymbol{Z}}_S=0.
    \end{equation}
    \item Set
    \begin{equation}\label{A.19}
     \widetilde{\boldsymbol{Z}}_{S^c}=-\lambda_N^{-1}\left\{\left(\widehat{\boldsymbol{D}}_{1,b} \widetilde{\boldsymbol{\beta}}\right)_{S^c}-\left\{z_N+\left(\widehat{\boldsymbol{D}}_{1,b}-\widehat{\boldsymbol{D}}_{h}\right) \widehat{\boldsymbol{\beta}}_0\right\}_{S^c}\right\} . 
    \end{equation}
    For $j \in S^c$, let $Z_j =\left(\widetilde{\boldsymbol{Z}}_{S^c}\right)_j$. Checking that $\left|Z_j\right|<1$ for all $j \in \mathcal{S}^c$ ensures that there is a unique solution $\widetilde{\boldsymbol{\beta}}$ satisfying $\operatorname{supp}(\widetilde{\boldsymbol{\beta}}) \subseteq S$.
\end{itemize}
The following lemma gives the strict duality of $Z_j$ for all $j \in S^c$.
\vspace{0.5cm}

\begin{lemma}\label{lem4}
Under Assumption \ref{ass1}-\ref{ass6}, we have, with probability tending to one,
\begin{align*}
\left|Z_j\right| \leq v,
\end{align*}
uniformly for $j \in S^c$, for some $0<v<1$.
\end{lemma}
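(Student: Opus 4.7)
The plan is to make the primal-dual-witness identity explicit, then bound the two resulting pieces using the sample version of the irrepresentable condition and the $\ell_\infty$ noise estimates already available from Proposition~\ref{prop2} and \eqref{A.16}. I would begin by solving \eqref{A.18} for $\widetilde{\boldsymbol{\beta}}_S-\boldsymbol{\beta}_S^*$ and substituting into \eqref{A.19}. Writing
$$\boldsymbol{u}:=\boldsymbol{z}_N-\widehat{\boldsymbol{D}}_h\boldsymbol{\beta}^*+(\widehat{\boldsymbol{D}}_{1,b}-\widehat{\boldsymbol{D}}_h)(\widehat{\boldsymbol{\beta}}_0-\boldsymbol{\beta}^*),$$
and using $\boldsymbol{\beta}_{S^c}^*=0$ to cancel the ``signal'' contribution on the complement, a direct computation yields
$$\widetilde{\boldsymbol{Z}}_{S^c}=\widehat{\boldsymbol{D}}_{1,b,S^c\times S}\widehat{\boldsymbol{D}}_{1,b,S\times S}^{-1}\widetilde{\boldsymbol{Z}}_S+\lambda_N^{-1}\bigl\{\boldsymbol{u}_{S^c}-\widehat{\boldsymbol{D}}_{1,b,S^c\times S}\widehat{\boldsymbol{D}}_{1,b,S\times S}^{-1}\boldsymbol{u}_S\bigr\}.$$
Since $|\widetilde{\boldsymbol{Z}}_S|_\infty\leq 1$, the desired bound reduces to (i) controlling the matrix $\ell_\infty$-norm of $\widehat{\boldsymbol{D}}_{1,b,S^c\times S}\widehat{\boldsymbol{D}}_{1,b,S\times S}^{-1}$ and (ii) controlling $\lambda_N^{-1}|\boldsymbol{u}|_\infty$.

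\textbf{Sample irrepresentable condition.} For (i) I would prove
$$\bigl\|\widehat{\boldsymbol{D}}_{1,b,S^c\times S}\widehat{\boldsymbol{D}}_{1,b,S\times S}^{-1}-\boldsymbol{I}_{S^c\times S}\boldsymbol{I}_{S\times S}^{-1}\bigr\|_\infty=o_{\mathbb{P}}(1),$$
so that by Assumption~\ref{ass4} the first term in the decomposition has $\ell_\infty$-norm at most $1-\alpha/2$ with probability tending to one. The $S\times S$ inverse is controlled by Lemma~\ref{lem2} (with $h$ replaced by the local bandwidth $b$), and what remains is a row-wise union bound over the $p-s$ rows of $\widehat{\boldsymbol{D}}_{1,b,S^c\times S}-\boldsymbol{I}_{S^c\times S}$. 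Each such row is a kernel-smoothed empirical average whose deviation from its population counterpart can be handled by the same discretization-plus-Bernstein machinery as in Lemma~\ref{lem2}, except that the Lipschitz continuity of $K$ in Assumption~\ref{ass1} is needed to absorb the random initial estimator $\widehat{\boldsymbol{\beta}}_0$ through the pseudo-covariate $\widetilde{\boldsymbol{X}}_{i,b}$. Under Assumption~\ref{ass5} the resulting rate is $\mathcal{O}_{\mathbb{P}}(\sqrt{s\log N/(nb)}+a_N+b)=o_{\mathbb{P}}(1)$.

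\textbf{Noise term.} For (ii), Proposition~\ref{prop2} gives $|\boldsymbol{z}_N-\widehat{\boldsymbol{D}}_h\boldsymbol{\beta}^*|_\infty=\mathcal{O}_{\mathbb{P}}(\sqrt{\log N/N}+a_N^2+a_Nh)$, and \eqref{A.16} gives $|(\widehat{\boldsymbol{D}}_{1,b}-\widehat{\boldsymbol{D}}_h)(\widehat{\boldsymbol{\beta}}_0-\boldsymbol{\beta}^*)|_\infty=\mathcal{O}_{\mathbb{P}}(a_N\eta)$. Since $a_N\leq\eta$ and $h\lesssim\eta$, combining these yields $|\boldsymbol{u}|_\infty=\mathcal{O}_{\mathbb{P}}(\sqrt{\log N/N}+a_N\eta)$. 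On the high-probability event from step (i) we have $\|\widehat{\boldsymbol{D}}_{1,b,S^c\times S}\widehat{\boldsymbol{D}}_{1,b,S\times S}^{-1}\|_\infty\leq 2$, so the residual noise term is at most $3\lambda_N^{-1}|\boldsymbol{u}|_\infty$. With $\lambda_N=C(\sqrt{\log N/N}+a_N\eta)$ this is $\mathcal{O}_{\mathbb{P}}(1/C)$, which can be made smaller than $\alpha/4$ by choosing $C$ sufficiently large. Adding the two contributions, $|Z_j|\leq 1-\alpha/4=:v<1$ uniformly in $j\in S^c$ with probability tending to one.

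\textbf{Main obstacle.} The hardest step is the uniform-$\ell_\infty$ sample irrepresentable condition. The operator-norm bound of Lemma~\ref{lem2} controls only the $S\times S$ block, whereas here one needs a simultaneous deviation bound over all $p-s$ rows of the off-diagonal block of a kernel-smoothed Gram matrix whose weights depend on the data-adaptive $\widehat{\boldsymbol{\beta}}_0$. This is the single place where the dimension $p$, the local bandwidth $b$, and the sparsity $s$ all enter together, and it is what forces both the sub-Gaussian tail in Assumption~\ref{ass3} and the specific choice $b\asymp(s\log n/n)^{1/3}$.
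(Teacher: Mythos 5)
Your proposal follows essentially the same route as the paper's proof: the identity you derive for $\widetilde{\boldsymbol{Z}}_{S^c}$ is exactly the paper's decomposition into $\boldsymbol{M}_1+\boldsymbol{M}_2+\boldsymbol{M}_3+\boldsymbol{M}_4$ (with your $\lambda_N^{-1}\boldsymbol{u}_{S^c}$ playing the role of $\boldsymbol{M}_3+\boldsymbol{M}_4$), and the two reductions — the sample irrepresentable condition $\|\widehat{\boldsymbol{D}}_{1,b,S^c\times S}\widehat{\boldsymbol{D}}_{1,b,S\times S}^{-1}\|_\infty\le 1-\alpha/2$ plus the noise bound $\lambda_N^{-1}|\boldsymbol{u}|_\infty\lesssim 1/C$ via Proposition~\ref{prop2} and \eqref{A.16} — are precisely the ones the paper establishes, yielding the same $v=1-\alpha/4$. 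The only minor variation is technical: the paper perturbs $\widehat{\boldsymbol{D}}_{1,b,S^c\times S}\widehat{\boldsymbol{D}}_{1,b,S\times S}^{-1}$ around $\boldsymbol{I}_{S^c\times S}\boldsymbol{I}_{S\times S}^{-1}$ through a four-term expansion controlled by operator-norm bounds, whereas you propose a row-wise union bound over the $p-s$ rows of the off-diagonal block, which accomplishes the same step.
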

\begin{proof}
From \eqref{A.18}, we can decompose it as 
\begin{align*}
\widetilde{\boldsymbol{Z}}_{S^c}&=-\lambda_N^{-1}\left\{\left(\widehat{\boldsymbol{D}}_{1,b} \widetilde{\boldsymbol{\beta}}\right)_{S^c}-\left\{\boldsymbol{z}_N+\left(\widehat{\boldsymbol{D}}_{1,b}-\widehat{\boldsymbol{D}}_{h}\right) \widehat{\boldsymbol{\beta}}_0\right\}_{S^c}\right\} \\
&=-\lambda_N^{-1} \widehat{\boldsymbol{D}}_{1,b, S^c \times S} \widehat{\boldsymbol{D}}_{1,b, S \times S}^{-1}\left\{\boldsymbol{z}_N+\left(\widehat{\boldsymbol{D}}_{1,b}-\widehat{\boldsymbol{D}}_{h}\right) \widehat{\boldsymbol{\beta}}_0\right\}_S+\widehat{\boldsymbol{D}}_{1,b, S^c \times S} \widehat{\boldsymbol{D}}_{1,b, S \times S}^{-1} \widetilde{\boldsymbol{Z}}_S\\
&+\lambda_N^{-1}\left\{\boldsymbol{z}_N+\left(\widehat{\boldsymbol{D}}_{1,b}-\widehat{\boldsymbol{D}}_{h}\right) \widehat{\boldsymbol{\beta}}_0\right\}_{S^c}\\
&=\boldsymbol{M}_1+\boldsymbol{M}_2+\boldsymbol{M}_3+\boldsymbol{M}_4.
\end{align*}
where
\begin{align*}
\boldsymbol{M}_1=-\lambda_N^{-1} \widehat{\boldsymbol{D}}_{1,b, S^c \times S} \widehat{\boldsymbol{D}}_{1,b, S \times S}^{-1}\left(\left\{\boldsymbol{z}_N-\widehat{\boldsymbol{D}}_{h} \boldsymbol{\beta}^*\right\}_S+\left(\widehat{\boldsymbol{D}}_{h,S \times\{1, \ldots, p+1\}}-\widehat{\boldsymbol{D}}_{1,b, S \times\{1, \ldots, p+1\}}\right)\left(\boldsymbol{\beta}^*-\widehat{\boldsymbol{\beta}}_0\right)\right),
\end{align*}
\begin{align*}
\boldsymbol{M}_2=\widehat{\boldsymbol{D}}_{1,b, S^c \times S} \widehat{\boldsymbol{D}}_{1,b, S \times S}^{-1} \widetilde{\boldsymbol{Z}}_S, 
\quad 
\boldsymbol{M}_3=\lambda_N^{-1}\left\{\boldsymbol{z}_N-\widehat{\boldsymbol{D}}_{h} \boldsymbol{\beta}^*\right\}_{S^c},
\end{align*}
\begin{align*}
\boldsymbol{M}_4=\lambda_N^{-1}\left(\widehat{\boldsymbol{D}}_{h,S^c \times\{1, \ldots, p+1\}}-\widehat{\boldsymbol{D}}_{1,b, S^c \times\{1, \ldots, p+1\}}\right)\left(\boldsymbol{\beta}^*-\widehat{\boldsymbol{\beta}}_0\right) .
\end{align*}
Then we have $|\widetilde{\boldsymbol{Z}}_{S^c}|_{\infty} \leq |\boldsymbol{M}_1|_{\infty}+|\boldsymbol{M}_2|_{\infty}+|\boldsymbol{M}_3|_{\infty}+|\boldsymbol{M}_4|_{\infty}$. In the following, we will bound each $|\boldsymbol{M}_i|_{\infty}$ for $i=1,\ldots,4$.

For the first term, we have 
\begin{align*}
\left|\boldsymbol{M}_1\right|_{\infty}&\leq \left\|\widehat{\boldsymbol{D}}_{1,b, S^c \times S} \widehat{\boldsymbol{D}}_{1,b, S \times S}^{-1}\right\| _{\infty}\left|-\lambda_N^{-1}\left(\left\{\boldsymbol{z}_N-\widehat{\boldsymbol{D}}_{h} \boldsymbol{\beta}^*\right\}_S+\left(\widehat{\boldsymbol{D}}_{h,S \times\{1, \ldots, p+1\}}-\widehat{\boldsymbol{D}}_{1,b, S \times\{1, \ldots, p+1\}}\right)\left(\boldsymbol{\beta}^*-\widehat{\boldsymbol{\beta}}_0\right)\right)\right|_{\infty}.    
\end{align*}
By \eqref{A.17} in the proof of Theorem \ref{thm1}, let $C_0$ be sufficiently large, we have 
\begin{equation}\label{A.20}
\left|-\lambda_N^{-1}\left(\left\{\boldsymbol{z}_N-\widehat{\boldsymbol{D}}_{h} \boldsymbol{\beta}^*\right\}_S+\left(\widehat{\boldsymbol{D}}_{h,S \times\{1, \ldots, p+1\}}-\widehat{\boldsymbol{D}}_{1,b, S \times\{1, \ldots, p+1\}}\right)\left(\boldsymbol{\beta}^*-\widehat{\boldsymbol{\beta}}_0\right)\right)\right|_{\infty}\leq \alpha/12,   
\end{equation}
with probability to one. On the other hand, note that 
\begin{align*}
\widehat{\boldsymbol{D}}_{1,b, S{ }^c \times S} \widehat{\boldsymbol{D}}_{1,b, S \times S}^{-1} 
=&\left(\widehat{\boldsymbol{D}}_{1,b, S^c \times S}-\boldsymbol{I}_{S^c \times S}\right)\left(\widehat{\boldsymbol{D}}_{1,b, S \times S}^{-1}-\boldsymbol{I}_{S \times S}^{-1}\right)+\boldsymbol{I}_{S^c \times S}\left(\widehat{\boldsymbol{D}}_{1,b, S \times S}^{-1}-\boldsymbol{I}_{S \times S}^{-1}\right) \\
&+\left(\widehat{\boldsymbol{D}}_{1,b, S^c \times S}-\boldsymbol{I}_{S^c \times S}\right) \boldsymbol{I}_{S \times S}^{-1}+\boldsymbol{I}_{S^c \times S} \boldsymbol{I}_{S \times S}^{-1}. 
\end{align*}
By the proof of Lemma \ref{lem2}, with $b \asymp\left(s \log n / n\right)^{1 / 3}$ and $a_N \asymp \left(s \log N / n\right)^{1 / 2}$, we have 
\begin{align*}
\left\|\widehat{\boldsymbol{D}}_{1,b,S \times S}-\boldsymbol{I}_{S \times S}\right\|_{op}&=\mathcal{O}_{\mathbb{P}}\left(\sqrt{\frac{s\log n}{nb}}+a_N+b\right)  =\mathcal{O}_{\mathbb{P}}\left(\left(\frac{s \log n}{n}\right)^{1/3}\right).
\end{align*}
Similarly, we can get $\left\|\widehat{\boldsymbol{D}}_{1,b,S^c \times S}-\boldsymbol{I}_{S^c \times S}\right\|_{op}=\mathcal{O}_\mathbb{P}((s\log n/n)^{1/3})$. Due to the fact that 
\begin{align*}
\left\|\widehat{\boldsymbol{D}}_{1,b, S \times S}^{-1}-\boldsymbol{I}_{S \times S}^{-1}\right\|_{op}\leq \left\|\boldsymbol{I}_{S \times S}^{-1}\right\|_{op}^2\left\|\widehat{\boldsymbol{D}}_{1,b,S \times S}-\boldsymbol{I}_{S \times S}\right\|_{op}.
\end{align*}
Note that $\|\boldsymbol{A}\|_{\infty} \leq \sqrt{n}\|\boldsymbol{A}\|_{op}$ and $\|\boldsymbol{A}\|_{\infty} \leq n|\boldsymbol{A}|_{\infty}$ for any matrix $A \in \mathbb{R}^{m \times n}$, then under Assumption \ref{ass4}, we have 
\begin{equation}\label{A.21}
\left\|\left(\widehat{\boldsymbol{D}}_{1,b, S^c \times S}-\boldsymbol{I}_{S^c \times S}\right)\left(\widehat{\boldsymbol{D}}_{1,b, S \times S}^{-1}-\boldsymbol{I}_{S \times S}^{-1}\right)\right\|_{\infty}=\mathcal{O}_{\mathbb{P}}\left(s^{3/2}\left(\frac{s \log n}{n}\right)^{2/3}\right).
\end{equation}
Similarly, we have 
\begin{align}\label{A.22}
\left\|\boldsymbol{I}_{S^c \times S}\left(\widehat{\boldsymbol{D}}_{1,b, S \times S}^{-1}-\boldsymbol{I}_{S \times S}^{-1}\right)\right\|_{\infty}\leq s\left\|\boldsymbol{I}_{S^c \times S}\right\|_{op}\left\|\widehat{\boldsymbol{D}}_{1,b, S \times S}^{-1}-\boldsymbol{I}_{S \times S}^{-1}\right\|_{op}=\mathcal{O}_{\mathbb{P}}\left(s\left(\frac{s\log n}{n}\right)^{1/3}\right), 
\end{align}
and 
\begin{align}\label{A.23}
\left\|\left(\widehat{\boldsymbol{D}}_{1,b, S^c \times S}-\boldsymbol{I}_{S^c \times S}\right) \boldsymbol{I}_{S \times S}^{-1}\right\|_{\infty}\leq s\left\|\left(\widehat{\boldsymbol{D}}_{1,b, S^c \times S}-\boldsymbol{I}_{S^c \times S}\right) \right\|_{op}\left\|\boldsymbol{I}^{-1}_{S \times S}\right\|_{op}=\mathcal{O}_{\mathbb{P}}\left(s\left(\frac{s\log n}{n}\right)^{1/3}\right).
\end{align}
Together with the assumption $s^3\log n/n=\mathcal{O}(1)$, so we can conclude that $\|\widehat{\boldsymbol{D}}_{1,b, S{ }^c \times S} \widehat{\boldsymbol{D}}_{1,b, S \times S}^{-1} \|_{\infty}=\mathcal{O}_\mathbb{P}(1)+\|\boldsymbol{I}_{S^c \times S} \boldsymbol{I}_{S \times S}^{-1}\|_{\infty} $. Under the assumption that $\|\boldsymbol{I}_{S^c \times S} \boldsymbol{I}_{S \times S}^{-1}\|_{\infty}\leq 1-\alpha$, we have that $\|\widehat{\boldsymbol{D}}_{1,b, S{ }^c \times S} \widehat{\boldsymbol{D}}_{1,b, S \times S}^{-1} \|_{\infty}\leq 1-\alpha/2$ with probability to one. Thus combine \eqref{A.20}-\eqref{A.23}, we have
\begin{equation}\label{A.24}
\left|\boldsymbol{M}_1\right|_{\infty}\leq \alpha/12.    
\end{equation}

For the second term, by the fact that $|\widetilde{\boldsymbol{Z}}_S|_{\infty}\leq 1$, we have 
\begin{equation}\label{A.25}
\left|\boldsymbol{M}_2\right|_{\infty}\leq \left\|\widehat{\boldsymbol{D}}_{1,b, S^c \times S} \widehat{\boldsymbol{D}}_{1,b, S \times S}^{-1}\right\| _{\infty} \left|\widetilde{\boldsymbol{Z}}_S\right|_{\infty} \leq 1-\alpha/2 .    
\end{equation}
For the last two terms, by \eqref{A.17} in the proof of Theorem \ref{thm1}, let $C_0$ be sufficient large, we have 
\begin{equation}\label{A.26}
\left|\boldsymbol{M}_3\right|_{\infty}\leq \mathcal{O}_\mathbb{P}(1)/C_0 \leq  \alpha/12 .  
\end{equation}
Similarly, we can obtain that
\begin{equation}\label{A.27}
\left|\boldsymbol{M}_4\right|_{\infty}\leq \mathcal{O}_\mathbb{P}(1)/C_0 \leq  \alpha/12 .  
\end{equation}

Combine \eqref{A.24}-\eqref{A.27}, we can conclude that for any $j \in S^c$, there exist some constant $v=1-\alpha/4<1$ such that
\begin{align*}
\left|Z_j\right| \leq v <1.
\end{align*}
This completes the proof.
\end{proof}

Now we are ready to prove Theorem \ref{thm3}.

\begin{proof}
By Lemma \ref{lem4}, uniformly for $j \in S^c$ and some $0<v<1$, we have $\left|Z_j\right| \leq v <1$ with probability approaching to one. By the PDW construction, we have $\widehat{\boldsymbol{\beta}}_1=\widetilde{\boldsymbol{\beta}}$ with probability approaching to one. Thus, we have 
\begin{equation}\label{A.28}
{\mathbb{P}}\left(\widehat{S}_1 \subseteq  S\right) \rightarrow 1. 
\end{equation}
For the second part of Theorem \ref{thm3}, note that $\widehat{\boldsymbol{\beta}}_1=\widetilde{\boldsymbol{\beta}}$ with probability approaching to one. We alternately verify the same results for $\widetilde{\boldsymbol{\beta}}$. Recall that 
\begin{equation}\label{A.29}
\widehat{\boldsymbol{D}}_{1,b, S \times S} \widetilde{\boldsymbol{\beta}}_S-\left\{z_N+\left(\widehat{\boldsymbol{D}}_{1,b}-\widehat{\boldsymbol{D}}_{h}\right) \widehat{\boldsymbol{\beta}}_0\right\}_S=-\lambda_N \widetilde{\boldsymbol{Z}}_S.  
\end{equation}
Rewrite \eqref{A.29} as 
\begin{align*}
-\lambda_N \widetilde{\boldsymbol{Z}}_S&=\boldsymbol{I}_{S \times S}\left(\widetilde{\boldsymbol{\beta}}_S-\boldsymbol{\beta}^*_S\right)+\left(\widehat{\boldsymbol{D}}_{1,b,S\times S}-\boldsymbol{I}_{S\times S}\right)\left(\widetilde{\boldsymbol{\beta}}_S-\boldsymbol{\beta}^*_S\right)+\widehat{\boldsymbol{D}}_{1,b,S\times S}\boldsymbol{\beta}^*_S-\left\{z_N+\left(\widehat{\boldsymbol{D}}_{1,b}-\widehat{\boldsymbol{D}}_{h}\right)\widehat{\boldsymbol{\beta}}_0\right\}_S.
\end{align*}
This implies that
\begin{align*}\label{A.30}
\widetilde{\boldsymbol{\beta}}_S-\boldsymbol{\beta}^*_S&=\boldsymbol{I}_{S \times S}^{-1}\left\{-\lambda_N\widetilde{\boldsymbol{Z}}_S-\left(\widehat{\boldsymbol{D}}_{1,b,S\times S}-\boldsymbol{I}_{S\times S}\right)\left(\widetilde{\boldsymbol{\beta}}_S-\boldsymbol{\beta}^*_S\right)\right.\\
&\left.-\left(\widehat{\boldsymbol{D}}_{1,b,S\times S}-\widehat{\boldsymbol{D}}_{h,S\times S}\right)\left(\boldsymbol{\beta}^*_S-\widehat{\boldsymbol{\beta}}_{0,S}\right)+\left(z_N-\widehat{\boldsymbol{D}}_{h}\boldsymbol{\beta}^*\right)_S\right\}\\
&=\boldsymbol{I}_{S \times S}^{-1} \left(\boldsymbol{N}_1+\boldsymbol{N}_2+\boldsymbol{N}_3+\boldsymbol{N}_4\right).
\end{align*}
So we have 
\begin{equation}\label{A.30}
\left|\widetilde{\boldsymbol{\beta}}_S-\boldsymbol{\beta}^*_S\right|_{\infty}\leq \left\|\boldsymbol{I}_{S \times S}^{-1}\right\|_{\infty}\left(\left|\boldsymbol{N}_1\right|_{\infty}+\left|\boldsymbol{N}_2\right|_{\infty}+\left|\boldsymbol{N}_3\right|_{\infty}+\left|\boldsymbol{N}_4\right|_{\infty}\right) . 
\end{equation}

Recall that $\widetilde{\boldsymbol{Z}}_S\leq 1$, we have 
\begin{equation}\label{A.31}
\left|\boldsymbol{N}_1\right|_{\infty}\leq \lambda_N.  
\end{equation}
Due to fact that $|\boldsymbol{N}_2|_{\infty}\leq |\boldsymbol{N}_2|_2\leq \|\widehat{\boldsymbol{D}}_{1,b,S\times S}-\boldsymbol{I}_{S\times S}\|_{op}|\widetilde{\boldsymbol{\beta}}_S-\boldsymbol{\beta}^*_S|_2$ and Lemma \ref{lem2}, we have 
\begin{align*}
\left|\boldsymbol{N}_2\right|_{\infty}\leq C_2\left(\frac{s\log n}{n}\right)^{1/3}\left|\widetilde{\boldsymbol{\beta}}_S-\boldsymbol{\beta}^*_S\right|_2,
\end{align*}
where $C_2$ is a constant. Together with the assumption $s^3\log n/n=o(1)$, we have 
\begin{equation}\label{A.32}
\left\|\boldsymbol{I}_{S \times S}^{-1}\right\|_{\infty}\left|\boldsymbol{N}_2\right|_{\infty}\leq \frac{1}{2}\left|\widetilde{\boldsymbol{\beta}}_S-\boldsymbol{\beta}^*_S\right|_{\infty},  
\end{equation}
with probability tending to one.
From the proof of Theorem \ref{thm1}, we can obtain that 
\begin{equation}\label{A.33}
\left|\boldsymbol{N}_3\right|_{\infty}\leq C_3a_N\eta, 
\end{equation}
where $C_3$ is a constant, and $\eta=\max\left\{\left(\frac{s\log n}{n}\right)^{1/3},\left(\frac{s\log N}{n}\right)^{1/2}\right\}$.

Furthermore, according to Proposition \ref{prop2}, we have 
\begin{equation}\label{A.34}
\left|\boldsymbol{N}_4\right|_{\infty}=\mathcal{O}_{\mathbb{P}}\left(\sqrt{\frac{\log N}{N}}+a_N\eta\right),
\end{equation}

Combine \eqref{A.30}-\eqref{A.34}, we can conclude that 
\begin{align*}
\left|\widetilde{\boldsymbol{\beta}}_S-\boldsymbol{\beta}^*_S\right|_{\infty}\leq C\left\|\boldsymbol{I}_{S \times S}^{-1}\right\|_{\infty}\left(\sqrt{\frac{\log N}{N}}+a_N\eta\right).
\end{align*}
Then the second part of Theorem \ref{thm3} follows from the above and together with the lower bound
condition on $\min_{j \in S}|\beta_j^*|$.
Then we complete the proof.
\end{proof}

We use Theorem \ref{thm2} and Theorem \ref{thm3} iteratively to prove Theorem \ref{thm4}.

\begin{proof}
By iteratively using Theorem \ref{thm2} and Theorem \ref{thm3}, we have
\begin{align*}
\left|\widetilde{\boldsymbol{\beta}}_{t,S}-\boldsymbol{\beta}^*_S\right|_{\infty}&\leq C\left\|\boldsymbol{I}_{S \times S}^{-1}\right\|_{\infty}\left(\sqrt{\frac{\log N}{N}}+\eta\left|\widehat{\boldsymbol{\beta}}_{t-1,S}-\boldsymbol{\beta}^*\right|_2\right)\\
&=C\left\|\boldsymbol{I}_{S \times S}^{-1}\right\|_{\infty}\left(\sqrt{\frac{\log N}{N}}+\max\left\{s^{5t/6}\left(\frac{\log n}{n}\right)^{(2t+3)/6},s^{t}\left(\frac{\log N}{n}\right)^{(t+1)/2}\right\}\right).
\end{align*}
Then we complete the proof.
\end{proof}

\section{Comparison to the Competitors}

We compare our DHSQR method with other three competitors, DREL, DPQR, and Avg-DC methods, whose definitions are provided in the simulation part of the main text.

\textbf{Space Aspects.} The space complexity of the DHSQR method is provided in Section 2.3 of the main text. Similarly, all other algorithms exhibit the same spatial complexity of $\mathcal{O}\left(np+p^2\right)$. While our algorithm demonstrates strong performance, it remains comparable to other methods in terms of space complexity. We prioritize efficient storage utilization, avoiding unnecessary memory overhead compared to competing algorithms.

\textbf{Computational Aspects.} Compared to the Avg-DC method, our approach eliminates the need to solve non-smoothness quantile optimization on each machine. Instead, each local machine only calculates and communicates a $p$-dimensional vector (rather than a $p \times p$ matrix), while the central machine performs linear optimization with a Lasso penalty using a coordinate descent algorithm. Our method incurs lower communication costs. For instance, DREL\citep{chen2020distributed} requires an additional round of communication for calculating and broadcasting the density function $\widehat{f}^{g, k}(0)$. Simulation results demonstrate that DHSQR outperforms DREL in terms of speed. Leveraging a Newton-type iteration, our method is a second-order algorithm, requiring fewer iterations to achieve convergence compared to gradient-based first-order algorithms like DPQR\citep{tan2022communication}.

\textbf{Theoretical Aspects.} In addition, we wish to reiterate the theoretical innovations of our method, DHSQR. Unlike the DREL method, we relax the homoscedasticity assumption of the error term. In contrast to the Avg-DC and DPQR methods, we provide theoretical guarantees to support recovery.

\begin{table}[H]
\centering
\caption{Comparison of theoretical properties of different methods.}
\begin{tabular}{cccc}
\hline
Method &Statistical convergence & Support recovery & Heterogeneity \\ \hline
DHSQR  & $\checkmark$           & $\checkmark$          &  $\checkmark$                \\
DREL   & $\checkmark$           & $\checkmark$    & $\times$                  \\
DPQR   & $\checkmark$           & $\times$    & $\checkmark$                  \\
Avg-DC & $\checkmark$           & $\times$      & $\times$                  \\ \hline
\end{tabular}
\end{table}

\section{Additional Experiments}\label{sec2}

\subsection{Real Data Analysis}
In this section, we employ the proposed DHSQR algorithm to analyze the drug sensitivity data of the Human Immunodeficiency Virus (HIV) \citep{rhee2003human, hu2021communication}. This data is sourced from the Stanford University HIV Drug Resistance Database (\url{http://hivdb.stanford.edu}). Efavirenz (EFV) is the preferred first-line antiretroviral drug for HIV and belongs to the category of selective non-nucleoside reverse transcriptase inhibitors (NNRTIs) for subtype 1 HIV. We investigate the impact of mutations at different positions on EFV drug sensitivity. After excluding some missing records, our initial dataset comprises $N=2046$ HIV isolates and $p=201$ viral mutation positions. We define the response variable $y$ for regression as the drug sensitivity of the samples, which quantifies the fold reduction in susceptibility of a single virus isolate compared to a control isolate \citep{hu2021communication}. The covariate $x$ indicates whether the virus has mutated at different positions ($x=1$ for mutation, $x=0$ for no mutation). Due to the heavy-tailed distribution of the response variable, we apply a logarithmic transformation, transforming it into $\log_{10} y$. Histograms of both the initial response variable and the transformed response variable are shown in Figure \ref{fig2.6}. Even after the transformation, the response variable remains non-normally distributed. Therefore, we employ quantile regression for data analysis, which is more robust than linear regression. Additionally, it's important to note that while the dataset is not large in size, sensitivity data are typically distributed across different hospitals in practical scenarios, making data aggregation challenging. In such cases, distributed quantile regression algorithms become a suitable choice.

\begin{figure}[H]
  \centering
  \includegraphics[width=0.3\textwidth]
   {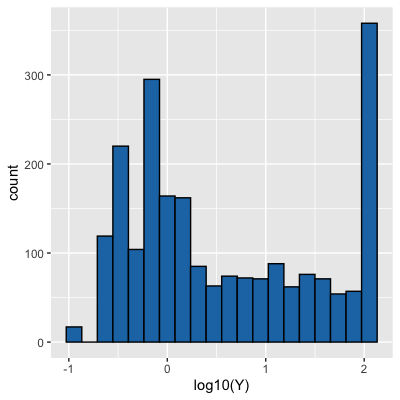}\label{y_plot}
  \includegraphics[width=0.3\textwidth]{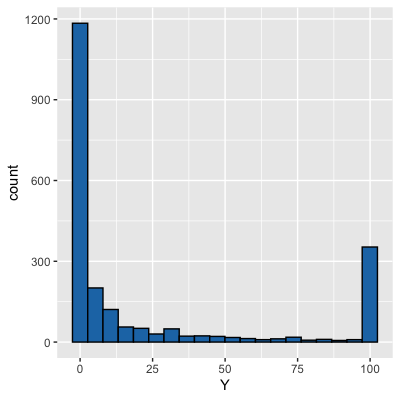}\label{logy_plot}
   \caption{The left figure represents the histogram of the initial drug sensitivity variable distribution, while the right figure represents the histogram of the drug sensitivity variable distribution after undergoing a logarithmic transformation.}\label{fig2.6}
\end{figure}  

In the experimental setup, we randomly selected a training dataset with a sample size of $N_{tr}=1500$, a validation dataset with a sample size of $N_{va}=300$ to select the optimal penalty parameter $\lambda$, and the remaining data served as the test dataset, which had a sample size of $N_{te}=246$. Typically, our interest lies in drug sensitivity at higher quantile levels since it is associated with stronger resistance \citep{hu2021communication}, enabling the development of better treatment strategies. Therefore, in the experiments, we chose quantile levels $\tau \in {0.5,0.75,0.9}$. We employed the same comparative methods as in Section \ref{sec:sim} to evaluate our algorithm. To assess the performance of each estimator, we computed the Predicted Quantile Error (PQE) on the prediction dataset, defined as
$$
\text{PQE}= \sum_{i=1}^{n_{te}}\rho_{\tau}(y_i-\widehat{y}_i) ,\quad i=1,\ldots,246,
$$
where $\widehat{y}_i$ represents the predictions made by each model. The results of the five different methods at quantile levels $\tau\in \{0.5,0.75,0.9\}$ are presented in Table \ref{tab.2.8}. From the results, it can be observed that our DHSQR estimator exhibits smaller prediction errors compared to other distributed estimators and is very close to the performance of the single machine estimator, Pooled DHSQR. This further underscores the method's excellent performance on HIV data. It is noteworthy that the performance of the DREL estimator is even worse than the one-step Avg-DC estimator, indicating the presence of heteroscedasticity in the data. Clearly, the DREL algorithm is not suitable for analyzing this type of data.

\begin{table}[H]
\footnotesize
\centering
\caption{The results of the Predicted Quantile Error (PQE) for different estimators at quantile levels $\tau\in\{0.5,0.75,0.9\}$ .}
\begin{tabular}{c|ccccc}
\hline
Quantile Level & DHSQR & Pooled DHSQR & DREL  & DPQR  & Avg-DC \\ \hline
$\tau=0.5$   & 0.205 & 0.201        & 0.241 & 0.226 & 0.225  \\ \hline
$\tau=0.75$  & 0.193 & 0.189        & 0.278 & 0.213 & 0.211  \\ \hline
$\tau=0.9$   & 0.176 & 0.172        & 0.329 & 0.186 & 0.232  \\ \hline
\end{tabular}
\label{tab.2.8}
\end{table}

Furthermore, to analyze the performance differences of various distributed algorithms in variable selection, we present the results of their predictions of non-zero coefficients at quantile levels $\tau\in{0.5,0.75,0.9}$ in Table \ref{tab.2.9}. We observed that, apart from the Avg-DC algorithm, all the multi-round communication-based distributed algorithms were able to select fewer variables than the total number, $p=201$. As the quantile level increases, the DHSQR algorithm selects an increasing number of non-zero coefficients, going from 24 to 35. This suggests that more virus positions have an impact on resistance at higher quantile levels.

\begin{table}[H]
\footnotesize
\centering
\caption{The results of the number of selected non-zero coefficients for different estimators at quantile levels $\tau\in\{0.5,0.75,0.9\}$.}
\begin{tabular}{c|ccccc}
\hline
Quantile Level & DHSQR & Pooled DHSQR & DREL  & DPQR  & Avg-DC \\ \hline
$\tau=0.5$   & 24 & 24        & 26 & 24 & 41  \\ \hline
$\tau=0.75$  & 29 & 29        & 31 & 29 & 63  \\ \hline
$\tau=0.9$   & 35 & 35        & 36 & 35 & 81  \\ \hline
\end{tabular}
\label{tab.2.9}
\end{table}

\subsection{Additional Simulated Experiments}
\subsubsection{Sensitivity Analysis for the Bandwidth}
In this section, we study the sensitivity of the scaling constant in the bandwidth of the DHSQR estimator. 
Recall that the global bandwidth is $h= c_h (s\log N/n)^{1/3}$ and the local bandwidth is $b= c_b (s\log n/n)^{1/3}$ with constant $c_h, c_b > 0$ being the scaling constant. 
We fix the quantile level $\tau = 0.5$, the number of local sample size $n = 500$  and the number total sample size $N = 20000$, 
We vary the constant $c_h, c_b$ from 1 to 10 and compute the $F_1$-score and the $\ell_2$-error of the DHSQR estimator under the heteroscedastic error case. Due to space limitations, we report the Normal noise case as an example. The results are shown in Table \ref{tab.5}.
\begin{table}[H]
\scriptsize
\caption{The $F_1$-score and $\ell_2$-error of the DHSQR under different choices of bandwidth constant $c_h, c_b$. (The standard deviation is given in parentheses.)}
\centering
\begin{tabular}{c|c|cc}
\hline
$c_b$ & $c_h$ & $F_1$-score  & $\ell_2$-error 
                       \\ 
                       \hline
1                      & 1                      & 0.991(0.032) & 0.077(0.032)   \\
2                      & 1                      & 0.951(0.068) & 0.260(0.081)   \\
5                      & 1                      & 0.996(0.023) & 1.856(1.084)   \\
10                     & 1                      & 1.000(0.02)  & 5.666(1.399)   \\ \hline
1                      & 2                      & 0.991(0.032) & 0.077(0.032)   \\
2                      & 2                      & 0.951(0.068) & 0.260(0.081)   \\
5                      & 2                      & 0.996(0.023) & 1.856(1.084)   \\
10                     & 2                      & 1.000(0.02)  & 5.666(1.399)   \\ \hline
1                      & 5                      & 0.991(0.032) & 0.077(0.032)   \\
2                      & 5                      & 0.951(0.068) & 0.260(0.081)   \\
5                      & 5                      & 0.996(0.023) & 1.856(1.084)   \\
10                     & 5                      & 1.000(0.02)  & 5.666(1.399)   \\ \hline
1                      & 10                     & 0.991(0.032) & 0.077(0.032)   \\
2                      & 10                     & 0.951(0.068) & 0.260(0.081)   \\
5                      & 10                     & 0.996(0.023) & 1.856(1.084)   \\
10                     & 10                     & 1.000(0.02)  & 5.666(1.399)   \\ \hline
\end{tabular}
\label{tab.5}
\end{table}
Table \ref{tab.5} shows that our estimator DHSQR is only sensitive to local bandwidth $b$ and not to global bandwidth.

\subsubsection{Computation Time Comparison}
We further study the computation efficiency of our proposed estimator. We fix the local sample size $n=500$ and vary the total sample size $N$. In Table \ref{tab.7}, we report the $F_1$ score, $\ell_2$ error, and computation time (per iteration) with different the total sample size $N$ under the heteroscedastic error case.
\begin{table}[H]
\centering
\caption{the $F_1$ score, $\ell_2$ error, and Time computation of five estimators under different total sample sizes $N$. (The standard deviation is given in parentheses.)}
\resizebox{0.6\textwidth}{!}{
\begin{tabular}{c|ccc|ccc}
\hline
\multicolumn{1}{c|}{$N$} & \multicolumn{3}{c|}{DHSQR}                       & \multicolumn{3}{c}{Pooled DHSQR}              \\ \cline{2-7} 
                     & $F_1$-score  & $\ell_2$-error & Time            & $F_1$-score  & $\ell_2$-error & Time         \\ \hline
5000                 & 0.961(0.121) & 0.126(0.018)   & 0.122(0.029)    & 0.851(0.132) & 0.113(0.032)   & 0.877(0.115) \\
10000                & 0.941(0.074) & 0.069(0.021)   & 0.130(0.024)    & 0.991(0.032) & 0.060(0.014)   & 1.671(0.080) \\
15000                & 0.984(0.043) & 0.068(0.030)   & 0.143(0.017)    & 0.981(0.072) & 0.055(0.015)   & 2.500(0.094) \\
20000                & 0.991(0.031) & 0.055(0.024)   & 0.151(0.020)    & 0.991(0.017) & 0.058(0.016)   & 3.393(0.140) \\ \hline
\multicolumn{1}{c|}{$N$} & \multicolumn{3}{c|}{DPQR}                       & \multicolumn{3}{c}{DREL}                     \\ \cline{2-7} 
                     & $F_1$-score  & $\ell_2$-error & Time            & $F_1$-score  & $\ell_2$-error & Time         \\ \hline
5000                 & 0.996(0.001) & 0.126(0.061)   & 0.193(0.033)    & 0.862(0.152) & 0.189(0.065)   & 1.006(0.095) \\
10000                & 0.996(0.002) & 0.097(0.056)   & 0.189(0.010)    & 0.972(0.621) & 0.131(0.072)   & 1.768(0.034) \\
15000                & 0.997(0.002) & 0.088(0.052)   & 0.208(0.020)    & 0.991(0.052) & 0.119(0.053)   & 2.622(0.166) \\
20000                & 1.000(0.001) & 0.081(0.050)   & 0.223(0.030)    & 0.992(0.053) & 0.117(0.042)   & 3.471(0.126) \\ \hline
\multicolumn{1}{c|}{$N$} & \multicolumn{3}{c|}{Avg-DC}                     &              &                &              \\ \cline{2-4}
                     & $F_1$-score  & $\ell_2$-error & Time            &              &                &              \\ \cline{1-4}
5000                 & 0.199(0.052) & 0.126(0.015)   & 31.780(2.884)   &              &                &              \\
10000                & 0.122(0.022) & 0.125(0.011)   & 59.576(5.125)   &              &                &              \\
15000                & 0.094(0.011) & 0.126(0.008)   & 93.553(7.594)   &              &                &              \\
20000                & 0.084(0.011) & 0.121(0.007)   & 117.129(11.709) &              &                &              \\ \cline{1-4}
\end{tabular}
}
\label{tab.7}
\end{table}
As is shown in Table \ref{tab.7}, We can see that our DHSQR has the fastest single iteration time, followed by DPQP, which maintains the same order of magnitude, then DREL and Pooled DHSQR, REL, and Pooled DHSQR maintain the same order of magnitude, and Avg-DC is the slowest.
\subsubsection{Effect of Sample Size and Local Sample Size}

\begin{table}[H]
\scriptsize
\caption{ The $F_1$-score and $\ell_2$-error of the DHSQR, Pooled DHSQR, DPQR, DREL, and Avg-DC estimator (at iteration 10) under different sample size $N$. Noises are generated from a normal distribution for the homoscedastic error case. The local sample size is fixed to $n = 500$.  The quantile level is fixed $\tau = 0.5$ and the iteration is fixed $T= 10$. (The standard deviation is given in parentheses.)}
\centering
\resizebox{0.8\textwidth}{!}{
\begin{tabular}{cc|ccccc}
\hline
\multicolumn{2}{c|}{$N$}                                             & 5000         & 10000        & 15000        & 20000        & 25000        \\ \hline
\multicolumn{1}{c|}{DHSQR}        & $F_1$-score    & 0.871(0.210)   & 0.929(0.072)   & 0.992(0.032)   & 1.000(0.001)      & 1.000(0.001)      \\
\multicolumn{1}{c|}{}                             & $\ell_2$-error & 0.139(0.021) & 0.080(0.053)  & 0.056(0.022) & 0.047(0.018) & 0.045(0.014) \\ \hline
\multicolumn{1}{c|}{Pooled DHSQR} & $F_1$-score    & 0.884(0.143)   & 0.957(0.041)         & 0.963(0.082)   & 0.974(0.057)  & 1.000(0.020) \\
\multicolumn{1}{c|}{}                             & $\ell_2$-error & 0.119(0.018) & 0.072(0.051) & 0.064(0.051) & 0.063(0.026) & 0.060(0.021)  \\ \hline
\multicolumn{1}{c|}{DPQR}        & $F_1$-score    & 1.000(0.019)   & 0.992(0.021)   & 1.000(0.001)      & 0.991(0.062)   & 0.992(0.063)   \\
\multicolumn{1}{c|}{}                             & $\ell_2$-error & 0.155(0.05)  & 0.115(0.083) & 0.116(0.059) & 0.096(0.05)  & 0.093(0.048) \\ \hline
\multicolumn{1}{c|}{DREL}        & $F_1$-score    & 0.869(0.081)   & 0.987(0.032)   & 1.000(0.001)      & 1.000(0.001)      & 1.000(0.001)      \\
\multicolumn{1}{c|}{}                             & $\ell_2$-error & 0.094(0.016) & 0.074(0.055) & 0.066(0.036) & 0.062(0.02)  & 0.059(0.016) \\ \hline
\multicolumn{1}{c|}{Avg-DC}      & $F_1$-score    & 0.191(0.054)   & 0.111(0.022)   & 0.082(0.011)   & 0.068(0.011)   & 0.069(0.012)   \\
\multicolumn{1}{c|}{}                             & $\ell_2$-error & 0.147(0.015) & 0.154(0.086) & 0.143(0.025) & 0.141(0.008) & 0.138(0.006) \\ \hline
\end{tabular}
}
\label{tab.1}
\end{table}
\begin{table}[H]
\scriptsize
\caption{ The $F_1$-score and $\ell_2$-error of the DHSQR, Pooled DHSQR, DPQR, DREL, and Avg-DC estimator (at iteration 10) under different sample size $N$. Noises are generated from a normal distribution for the heteroscedastic error case. The local sample size is fixed to $n = 500$.  The quantile level is fixed $\tau = 0.5$ and the iteration is fixed $T= 10$. (The standard deviation is given in parentheses.)}
\centering
\resizebox{0.8\textwidth}{!}{
\begin{tabular}{cc|ccccc}
\hline
\multicolumn{2}{c|}{$N$}                                             & 5000         & 10000        & 15000        & 20000        & 25000        \\ \hline
\multicolumn{1}{c|}{DHSQR}       & $F_1$-score    & 0.961(0.121)   & 0.942(0.073)   & 0.984(0.042)   & 0.990(0.031)   & 1.000(0.036)   \\
\multicolumn{1}{c|}{}                             & $\ell_2$-error & 0.126(0.018) & 0.069(0.021) & 0.068(0.031) & 0.055(0.024) & 0.052(0.026) \\ \hline
\multicolumn{1}{c|}{Pooled DHSQR} & $F_1$-score    & 0.851(0.132)   & 0.991(0.032)   & 0.981(0.072)   & 0.991(0.017)   & 1.000(0.019)   \\
\multicolumn{1}{c|}{}                             & $\ell_2$-error & 0.113(0.032) & 0.060(0.013) & 0.055(0.015) & 0.058(0.016) & 0.058(0.009) \\ \hline
\multicolumn{1}{c|}{DPQR}        & $F_1$-score    & 0.996(0.002)   & 0.996(0.001)   & 0.997(0.002)   & 1.000(0.001)   & 1.000(0.002)   \\
\multicolumn{1}{c|}{}                             & $\ell_2$-error & 0.126(0.061) & 0.097(0.056) & 0.088(0.052) & 0.081(0.051)  & 0.082(0.047) \\ \hline
\multicolumn{1}{c|}{DREL}        & $F_1$-score    & 0.862(0.152)   & 0.972(0.061)   & 0.991(0.052)   & 0.992(0.053)   & 0.996(0.036)   \\
\multicolumn{1}{c|}{}                             & $\ell_2$-error & 0.189(0.065) & 0.131(0.072)  & 0.119(0.052) & 0.117(0.042) & 0.108(0.041) \\ \hline
\multicolumn{1}{c|}{Avg-DC}      & $F_1$-score    & 0.199(0.052)   & 0.121(0.022)   & 0.094(0.011)   & 0.084(0.012)   & 0.082(0.009)   \\
\multicolumn{1}{c|}{}                             & $\ell_2$-error & 0.126(0.015) & 0.125(0.011) & 0.125(0.009) & 0.121(0.007) & 0.123(0.008) \\ \hline
\end{tabular}
}
\label{tab.2}
\end{table}
\begin{table}[H]
\scriptsize
\caption{ The $\ell_2$-error, precision, and recall of the DHSQR, Pooled DHSQR, DPQR, DREL, and Avg-DC estimator under different sample size $N$ and local sample size $n$. Noises are generated from a normal distribution for the homoscedastic error case. The quantile level is fixed $\tau = 0.5$ and the iteration is fixed $T= 10$. (The standard deviation is given in parentheses.)
}
\centering
\resizebox{\textwidth}{!}{
\begin{tabular}{cc|ccc|ccc|ccc}
\hline
\multicolumn{2}{c|}{$n$}                                           & \multicolumn{3}{c|}{200}                   & \multicolumn{3}{c|}{500}                   & \multicolumn{3}{c}{1000}                   \\ \hline
\multicolumn{2}{c|}{$N$}                                           & 5000         & 10000        & 20000        & 5000         & 10000        & 20000        & 5000         & 10000        & 20000        \\ \hline
\multicolumn{1}{c|}{DHSQR}        & Precision      & 0.901(0.147) & 0.947(0.119) & 0.969(0.077) & 0.935(0.136) & 0.901(0.159) & 0.984(0.079) & 0.913(0.177) & 0.940(0.086) & 0.961(0.101) \\
\multicolumn{1}{c|}{}                             & Recall         & 1.000(0.001) & 1.000(0.001) & 1.000(0.001) & 1.000(0.001) & 1.000(0.001) & 1.000(0.001) & 1.000(0.001) & 1.000(0.001) & 1.000(0.001) \\
\multicolumn{1}{c|}{}                             & $\ell_2$-error & 0.131(0.064) & 0.114(0.074) & 0.070(0.038) & 0.090(0.025) & 0.080(0.023) & 0.050(0.018) & 0.088(0.025) & 0.070(0.016) & 0.050(0.016) \\ \hline
\multicolumn{1}{c|}{Pooled DHSQR} & Precision      & 0.874(0.18)  & 0.943(0.095) & 0.997(0.02)  & 0.903(0.158) & 0.949(0.095) & 0.991(0.034) & 0.903(0.164) & 0.944(0.099) & 0.994(0.028) \\
\multicolumn{1}{c|}{}                             & Recall         & 1.000(0.001) & 1.000(0.001) & 1.000(0.001) & 1.000(0.001) & 1.000(0.001) & 1.000(0.001) & 1.000(0.001) & 1.000(0.001) & 1.000(0.001) \\
\multicolumn{1}{c|}{}                             & $\ell_2$-error & 0.086(0.019) & 0.066(0.016) & 0.046(0.009) & 0.086(0.019) & 0.066(0.016) & 0.046(0.009) & 0.087(0.019) & 0.066(0.016) & 0.045(0.009) \\ \hline
\multicolumn{1}{c|}{DPQR}        & Precision      & 0.929(0.141) & 0.956(0.103) & 0.985(0.057) & 0.935(0.139) & 0.923(0.131) & 0.970(0.080) & 0.957(0.085) & 0.918(0.118) & 0.988(0.058) \\
\multicolumn{1}{c|}{}                             & Recall         & 1.000(0.001) & 1.000(0.001) & 1.000(0.001) & 1.000(0.001) & 1.000(0.001) & 1.000(0.001) & 1.000(0.001) & 1.000(0.001) & 1.000(0.001) \\
\multicolumn{1}{c|}{}                             & $\ell_2$-error & 0.149(0.182) & 0.119(0.118) & 0.104(0.14)  & 0.098(0.045) & 0.092(0.040) & 0.086(0.068) & 0.101(0.044) & 0.077(0.029) & 0.063(0.024) \\ \hline
\multicolumn{1}{c|}{DREL}       & Precision      & 0.922(0.144) & 0.991(0.034) & 0.991(0.034) & 0.853(0.141) & 0.991(0.034) & 0.994(0.028) & 0.842(0.122) & 0.994(0.028) & 1.000(0.001) \\
\multicolumn{1}{c|}{}                             & Recall         & 1.000(0.001) & 1.000(0.001) & 1.000(0.001) & 1.000(0.001) & 0.058(0.009) & 1.000(0.001) & 1.000(0.001) & 1.000(0.001) & 1.000(0.001) \\
\multicolumn{1}{c|}{}                             & $\ell_2$-error & 0.094(0.017) & 0.060(0.011) & 0.044(0.008) & 0.085(0.015) & 0.061(0.011) & 0.040(0.007) & 0.084(0.015) & 0.060(0.011) & 0.040(0.007) \\ \hline
\multicolumn{1}{c|}{Avg-DC}     & Precision      & 0.043(0.005) & 0.029(0.002) & 0.025(0.001) & 0.106(0.028) & 0.058(0.009) & 0.038(0.006) & 0.196(0.075) & 0.115(0.025) & 0.066(0.011) \\
\multicolumn{1}{c|}{}                             & Recall         & 1.000(0.001) & 1.000(0.001) & 1.000(0.001) & 1.000(0.001) & 0.058(0.009) & 1.000(0.001) & 1.000(0.001) & 1.000(0.001) & 1.000(0.001) \\
\multicolumn{1}{c|}{}                             & $\ell_2$-error & 0.219(0.014) & 0.216(0.011) & 0.213(0.007) & 0.148(0.012) & 0.144(0.01)  & 0.140(0.008) & 0.112(0.013) & 0.108(0.009) & 0.102(0.007) \\ \hline
\end{tabular}
}
\label{tab.3}
\end{table}
\begin{table}[H]
\scriptsize
\caption{The $\ell_2$-error, precision, and recall of the DHSQR, Pooled DHSQR, DPQR, DREL, and Avg-DC estimator under different sample size $N$ and local sample size $n$. Noises are generated from a normal distribution for the heteroscedastic error case. The quantile level is fixed $\tau = 0.5$ and the iteration is fixed $T= 10$. (The standard deviation is given in parentheses.)}
\centering
\resizebox{\textwidth}{!}{
\begin{tabular}{cc|ccc|ccc|ccc}
\hline
\multicolumn{2}{c|}{$n$}                                           & \multicolumn{3}{c|}{200}                   & \multicolumn{3}{c|}{500}                   & \multicolumn{3}{c}{1000}                   \\ \hline
\multicolumn{2}{c|}{$N$}                                           & 5000         & 10000        & 20000        & 5000         & 10000        & 20000        & 5000         & 10000        & 20000        \\ \hline
\multicolumn{1}{c|}{DHSQR}        & Precision      & 0.901(0.160) & 0.941(0.118) & 0.979(0.061) & 0.943(0.10)  & 0.936(0.129) & 0.981(0.055) & 0.961(0.100) & 0.947(0.116) & 0.978(0.072) \\
\multicolumn{1}{c|}{}                             & Recall         & 1.000(0.001) & 1.000(0.001) & 1.000(0.001) & 1.000(0.001) & 1.000(0.001) & 1.000(0.001) & 1.000(0.001) & 1.000(0.001) & 1.000(0.001) \\
\multicolumn{1}{c|}{}                             & $\ell_2$-error & 0.180(0.133) & 0.157(0.107) & 0.125(0.075) & 0.092(0.025) & 0.081(0.027) & 0.059(0.018) & 0.088(0.022) & 0.065(0.02)  & 0.050(0.014) \\ \hline
\multicolumn{1}{c|}{Pooled DHSQR} & Precision      & 0.937(0.191) & 0.983(0.047) & 1.000(0.001) & 0.933(0.110) & 0.984(0.052) & 1.000(0.001) & 0.920(0.130) & 0.980(0.050) & 1.000(0.001) \\
\multicolumn{1}{c|}{}                             & Recall         & 1.000(0.001) & 1.000(0.001) & 1.000(0.001) & 1.000(0.001) & 1.000(0.001) & 1.000(0.001) & 1.000(0.001) & 1.000(0.001) & 1.000(0.001) \\
\multicolumn{1}{c|}{}                             & $\ell_2$-error & 0.083(0.018) & 0.058(0.015) & 0.043(0.011) & 0.083(0.018) & 0.058(0.015) & 0.043(0.011) & 0.083(0.018) & 0.057(0.016) & 0.043(0.011) \\ \hline
\multicolumn{1}{c|}{DPQR}        & Precision      & 0.969(0.060) & 0.936(0.128) & 0.969(0.084) & 0.970(0.060) & 0.946(0.087) & 0.978(0.073) & 0.950(0.110) & 0.943(0.105) & 0.994(0.028) \\
\multicolumn{1}{c|}{}                             & Recall         & 1.000(0.001) & 1.000(0.001) & 1.000(0.001) & 1.000(0.001) & 1.000(0.001) & 1.000(0.001) & 1.000(0.001) & 1.000(0.001) & 1.000(0.001) \\
\multicolumn{1}{c|}{}                             & $\ell_2$-error & 0.173(0.185) & 0.165(0.172) & 0.143(0.155) & 0.155(0.112) & 0.093(0.046) & 0.061(0.04)  & 0.135(0.087) & 0.079(0.036) & 0.057(0.025) \\ \hline
\multicolumn{1}{c|}{DREL}        & Precision      & 0.939(0.120) & 0.920(0.107) & 0.967(0.080)  & 0.950(0.100)   & 0.968(0.073) & 0.997(0.02)  & 0.921(0.110) & 0.984(0.052) & 0.997(0.020)  \\
\multicolumn{1}{c|}{}                             & Recall         & 1.000(0.001) & 1.000(0.001) & 1.000(0.001) & 1.000(0.001) & 1.000(0.001) & 1.000(0.001) & 1.000(0.001) & 1.000(0.001) & 1.000(0.001) \\
\multicolumn{1}{c|}{}                             & $\ell_2$-error & 0.192(0.110)  & 0.184(0.100)   & 0.159(0.074) & 0.146(0.037) & 0.127(0.06)  & 0.107(0.036) & 0.119(0.034) & 0.103(0.027) & 0.091(0.031) \\ \hline
\multicolumn{1}{c|}{Avg-DC}      & Precision      & 0.049(0.010) & 0.032(0.002) & 0.025(0.001) & 0.112(0.030) & 0.062(0.011) & 0.04(0.004)  & 0.221(0.080) & 0.125(0.028) & 0.071(0.013) \\
\multicolumn{1}{c|}{}                             & Recall         & 1.000(0.001) & 1.000(0.001) & 1.000(0.001) & 1.000(0.001) & 1.000(0.001) & 1.000(0.001) & 1.000(0.001) & 1.000(0.001) & 1.000(0.001) \\
\multicolumn{1}{c|}{}                             & $\ell_2$-error & 0.203(0.017) & 0.201(0.011) & 0.199(0.008) & 0.126(0.015) & 0.125(0.011) & 0.121(0.007) & 0.092(0.016) & 0.090(0.010) & 0.085(0.008) \\ \hline
\end{tabular}
 }
\label{tab.4}
\end{table}

\subsubsection{Additional Experiments for $\tau=\{0.3,0.7\}$}
In this section, we provide some additional experiment results using quantile level $\tau=0.3$ and $\tau=0.7$.
The results are reported in tables. 
The observations are similar to the case of $\tau = 0.5$ in Section 4.2. 
* indicates failure to converge.
\begin{table}[H]
\scriptsize
\caption{ The $\ell_2$-error, precision, and recall of the DHSQR, Pooled DHSQR, DPQR, DREL, and Avg-DC estimator under different sample size $N$ and local sample size $n$. Noises are generated from a normal distribution for the homoscedastic error case. The quantile level is fixed $\tau = 0.3$ and the iteration is fixed $T= 10$. (The standard deviation is given in parentheses.)}
\centering
\resizebox{\textwidth}{!}{
\begin{tabular}{cc|ccc|ccc|ccc}
\hline
\multicolumn{2}{c|}{$n$}                                           & \multicolumn{3}{c|}{200}                   & \multicolumn{3}{c|}{500}                   & \multicolumn{3}{c}{1000}                   \\ \hline
\multicolumn{2}{c|}{$N$}                                           & 5000         & 10000        & 20000        & 5000         & 10000        & 20000        & 5000         & 10000        & 20000        \\ \hline
\multicolumn{1}{c|}{DHSQR}        & Precision      & 0.951(0.100) & 0.961(0.072) & 0.967(0.082) 
& 0.968(0.100) & 0.966(0.066) & 0.995(0.035) 
& 0.941(0.141) & 0.961(0.108) & 1.000(0.001) \\
\multicolumn{1}{c|}{}                             & Recall         & 1.000(0.001) & 1.000(0.001) & 1.000(0.001) 
& 1.000(0.001) & 1.000(0.001) & 1.000(0.001) 
& 1.000(0.001) & 1.000(0.001) & 1.000(0.001) \\
\multicolumn{1}{c|}{}                             & $\ell_2$-error & 0.126(0.077) & 0.135(0.128) & 0.089(0.067) 
& 0.092(0.017) & 0.077(0.025) & 0.059(0.028) 
& 0.097(0.022) & 0.073(0.022) & 0.054(0.013) \\ \hline
\multicolumn{1}{c|}{Pooled DHSQR} & Precision      
& 0.952(0.102) & 0.974(0.055) & 0.997(0.020) 
& 0.947(0.101) & 0.955(0.083) & 0.997(0.020) 
& 0.939(0.112) & 0.964(0.071) & 0.997(0.020)  \\
\multicolumn{1}{c|}{}                             & Recall         & 1.000(0.001) & 1.000(0.001) & 1.000(0.001) 
& 1.000(0.001) & 1.000(0.001) & 1.000(0.001) 
& 1.000(0.001) & 1.000(0.001) & 1.000(0.001) \\
\multicolumn{1}{c|}{}                             & $\ell_2$-error & 0.092(0.017) & 0.072(0.016) & 0.049(0.012) 
& 0.091(0.016) & 0.073(0.017) & 0.049(0.012) 
& 0.091(0.017) & 0.073(0.022) & 0.049(0.012) \\ \hline
\multicolumn{1}{c|}{DPQR}        & Precision      
& 0.967(0.069) & 0.949(0.097) & 0.978(0.057) 
& 0.917(0.132) & 0.970(0.067) & 0.986(0.048) 
& 0.960(0.087) & 0.986(0.048) & 0.987(0.053) \\
\multicolumn{1}{c|}{}                             & Recall         
& 1.000(0.001) & 1.000(0.001) & 1.000(0.001) 
& 1.000(0.001) & 1.000(0.001) & 1.000(0.001) 
& 1.000(0.001) & 1.000(0.001) & 1.000(0.001) \\
\multicolumn{1}{c|}{}                             & $\ell_2$-error 
& 0.137(0.092) & 0.141(0.119) & 0.094(0.068) 
& 0.101(0.020) & 0.078(0.027) & 0.061(0.030)  
& 0.098(0.017) & 0.076(0.011) & 0.054(0.014) \\ \hline
\multicolumn{1}{c|}{DREL}        & Precision      
& 0.970(0.081) & 0.991(0.034) & 0.994(0.028) 
& 0.934(0.103) & 0.986(0.043) & 0.994(0.028) 
& 0.925(0.105) & 0.986(0.043) & 1.000(0.001) \\
\multicolumn{1}{c|}{}                             & Recall         
& 1.000(0.001) & 1.000(0.001) & 1.000(0.001) 
& 1.000(0.001) & 1.000(0.001) & 1.000(0.001) 
& 1.000(0.001) & 1.000(0.001) & 1.000(0.001) \\
\multicolumn{1}{c|}{}                             & $\ell_2$-error 
& 0.095(0.017) & 0.065(0.016) & 0.049(0.009) 
& 0.091(0.015) & 0.064(0.014) & 0.044(0.008) 
& 0.088(0.016) & 0.064(0.014) & 0.043(0.008) \\ \hline
\multicolumn{1}{c|}{Avg-DC}      & Precision      
& 0.046(0.006) & 0.031(0.002) & 0.025(0.001) 
& 0.117(0.028) & 0.061(0.011) & 0.039(0.004) 
& 0.235(0.086) & 0.118(0.036) & 0.065(0.012) \\
\multicolumn{1}{c|}{}                             & Recall         
& 1.000(0.001) & 1.000(0.001) & 1.000(0.001) 
& 1.000(0.001) & 1.000(0.001) & 1.000(0.001) 
& 1.000(0.001) & 1.000(0.001) & 1.000(0.001) \\
\multicolumn{1}{c|}{}                             & $\ell_2$-error 
& 0.232(0.016) & 0.230(0.011) & 0.227(0.008) 
& 0.156(0.013) & 0.153(0.010) & 0.148(0.007)  
& 0.117(0.014) & 0.114(0.009) & 0.107(0.007) \\ \hline
\end{tabular}
}
\label{tab.8}
\end{table}
\begin{table}[H]
\scriptsize
\caption{The $\ell_2$-error, precision, and recall of the DHSQR, Pooled DHSQR, DPQR, DREL, and Avg-DC estimator under different sample size $N$ and local sample size $n$. Noises are generated from a normal distribution for the heteroscedastic error case. The quantile level is fixed $\tau = 0.3$ and the iteration is fixed $T= 10$. (The standard deviation is given in parentheses.)}
\centering
\resizebox{\textwidth}{!}{
\begin{tabular}{cc|ccc|ccc|ccc}
\hline
\multicolumn{2}{c|}{$n$}                                           & \multicolumn{3}{c|}{200}                                              & \multicolumn{3}{c|}{500}                                              & \multicolumn{3}{c}{1000}                                              \\ \hline
\multicolumn{2}{c|}{$N$}                                           & 5000                  & 10000                 & 20000                 & 5000                  & 10000                 & 20000                 & 5000                  & 10000                 & 20000                 \\ \hline
\multicolumn{1}{c|}{DHSQR}        & Precision      & 0.966(0.072)          & 0.954(0.119)          & 0.971(0.084)          & 0.976(0.079)          & 0.971(0.058)          & 1.000(0.001)          & 0.931(0.119)          & 0.993(0.032)          & 0.993(0.032)          \\
\multicolumn{1}{c|}{}                             & Recall         & 1.000(0.030)          & 1.000(0.001)          & 1.000(0.001)          & 1.000(0.001)          & 1.000(0.001)          & 1.000(0.001)          & 1.000(0.001)          & 1.000(0.001)          & 1.000(0.001)          \\
\multicolumn{1}{c|}{}                             & $\ell_2$-error & 0.193(0.254)          & 0.143(0.167)          & 0.128(0.131)          & 0.075(0.032)          & 0.067(0.026)          & 0.051(0.025)          & 0.070(0.024)          & 0.053(0.017)          & 0.052(0.015)          \\ \hline
\multicolumn{1}{c|}{Pooled DHSQR} & Precision      & 0.979(0.057)          & 0.996(0.023)          & 1.000(0.001)          & 0.973(0.068)          & 0.996(0.023)          & 1.000(0.001)          & 0.979(0.057)          & 0.996(0.023)          & 1.000(0.001)          \\
\multicolumn{1}{c|}{}                             & Recall         & 1.000(0.001)          & 1.000(0.001)          & 1.000(0.001)          & 1.000(0.001)          & 1.000(0.001)          & 1.000(0.001)          & 1.000(0.001)          & 1.000(0.001)          & 1.000(0.001)          \\
\multicolumn{1}{c|}{}                             & $\ell_2$-error & 0.065(0.014)          & 0.057(0.021)          & 0.050(0.011)          & 0.062(0.016)          & 0.057(0.021)          & 0.049(0.011)          & 0.061(0.015)          & 0.057(0.021)          & 0.048(0.011)          \\ \hline
\multicolumn{1}{c|}{DPQR}        & Precision     
& 0.943(0.158)  & 0.948(0.109) & 0.977(0.098) 
& 0.937(0.086)  & 0.997(0.020) & 1.000(0.001) 
& 0.943(0.138)  & 0.978(0.057) & 1.000(0.001) \\
\multicolumn{1}{c|}{}                             & Recall         
&  1.000(0.001) & 1.000(0.001) & 1.000(0.001)
&  1.000(0.001) & 1.000(0.001  & 1.000(0.001) 
&  1.000(0.001) & 1.000(0.001) & 1.000(0.001) \\
\multicolumn{1}{c|}{}                             & $\ell_2$-error 
& 0.278(0.055) & 0.196(0.048) & 0.135(0.045) 
& 0.201(0.012) & 0.139(0.009) & 0.094(0.008)
& 0.114(0.016) & 0.081(0.033) & 0.121(0.007)  \\ \hline
\multicolumn{1}{c|}{DREL}        & Precision     & 
\textbf{*}         & 0.977(0.086)          & 0.974(0.082)          & 0.982(0.048)          & 1.000(0.001)          & 0.996(0.023)          & 0.996(0.023)          & 0.996(0.023)          & 1.000(0.001)          \\
\multicolumn{1}{c|}{}                             & Recall         & \textbf{*}          & 1.000(0.001)          & 1.000(0.001)          & 1.000(0.001)          & 1.000(0.001)          & 1.000(0.001)          & 1.000(0.001)          & 1.000(0.001)          & 1.000(0.001)          \\
\multicolumn{1}{c|}{}                             & $\ell_2$-error 
& \textbf{*}            & 0.201(0.173)          & 0.195(0.168)         
& 0.147(0.069)          & 0.124(0.065)          & 0.108(0.046)          
& 0.115(0.043)          & 0.110(0.037)          & 0.090(0.028)           \\ \hline
\multicolumn{1}{c|}{Avg-DC}      & Precision      
& 0.053(0.009)          & 0.034(0.003)          & 0.026(0.002)          
& 0.130(0.033)          & 0.068(0.012)          & 0.042(0.004)          
& 0.266(0.105)          & 0.133(0.038)          & 0.072(0.013)          \\
\multicolumn{1}{c|}{}                             & Recall         
& 1.000(0.001)          & 1.000(0.001)          & 1.000(0.001)          
& 1.000(0.001)          & 1.000(0.001)          & 1.000(0.001)          
& 1.000(0.001)          & 1.000(0.001)          & 1.000(0.001)          \\
\multicolumn{1}{c|}{}                             & $\ell_2$-error 
& 0.175(0.012)          & 0.174(0.009)          & 0.170(0.008)          
& 0.111(0.010)          & 0.108(0.008)          & 0.103(0.005)          
& 0.086(0.011)          & 0.083(0.009)          & 0.077(0.005)          \\ \hline
\end{tabular}
}
\label{tab.9}
\end{table}

\begin{table}[H]
\scriptsize
\caption{ The $\ell_2$-error, precision, and recall of the DHSQR, Pooled DHSQR, DPQR, DREL, and Avg-DC estimator under different sample size $N$ and local sample size $n$. Noises are generated from a normal distribution for the homoscedastic error case. The quantile level is fixed $\tau = 0.7$ and the iteration is fixed $T= 10$. (The standard deviation is given in parentheses.)}
\centering
\resizebox{\textwidth}{!}{
\begin{tabular}{cc|ccc|ccc|ccc}
\hline
\multicolumn{2}{c|}{$n$}                                           & \multicolumn{3}{c|}{200}                   & \multicolumn{3}{c|}{500}                   & \multicolumn{3}{c}{1000}                   \\ \hline
\multicolumn{2}{c|}{$N$}                                           & 5000         & 10000        & 20000        & 5000         & 10000        & 20000        & 5000         & 10000        & 20000        \\ \hline
\multicolumn{1}{c|}{DHSQR}        & Precision      
& 0.943(0.105) & 0.951(0.118) & 0.978(0.072) 
& 0.981(0.055) & 0.962(0.096) & 0.994(0.028) 
& 0.956(0.11)  & 0.977(0.088) & 0.994(0.028) \\
\multicolumn{1}{c|}{}                             & Recall         
& 1.000(0.001) & 1.000(0.001) & 1.000(0.001) 
& 1.000(0.001) & 1.000(0.001) & 1.000(0.001) 
& 1.000(0.001) & 1.000(0.001) & 1.000(0.001) \\
\multicolumn{1}{c|}{}                             & $\ell_2$-error 
& 0.140(0.092)  & 0.107(0.053) & 0.084(0.063) 
& 0.095(0.021)  & 0.072(0.026) & 0.056(0.014) 
& 0.095(0.020)  & 0.067(0.022) & 0.053(0.011) \\ \hline
\multicolumn{1}{c|}{Pooled DHSQR} & Precision      
& 0.949(0.110) & 0.955(0.102) & 1.000(0.001) 
& 0.956(0.096) & 0.958(0.101) & 1.000(0.001) 
& 0.957(0.098) & 0.959(0.102) & 1.000(0.001) \\
\multicolumn{1}{c|}{}                             & Recall         
& 1.000(0.001) & 1.000(0.001) & 1.000(0.001) 
& 1.000(0.001) & 1.000(0.001) & 1.000(0.001) 
& 1.000(0.001) & 1.000(0.001) & 1.000(0.001) \\
\multicolumn{1}{c|}{}                             & $\ell_2$-error 
& 0.094(0.021) & 0.067(0.018) & 0.050(0.012) 
& 0.094(0.021) & 0.066(0.019) & 0.049(0.011) 
& 0.095(0.02)  & 0.065(0.018) & 0.050(0.012)  \\ \hline
\multicolumn{1}{c|}{DPQR}        & Precision      
& 0.997(0.020) & 1.000(0.001) & 1.000(0.001) 
& 0.994(0.028) & 0.997(0.020) & 1.000(0.001) 
& 0.985(0.057) & 0.986(0.048) & 1.000(0.001) \\
\multicolumn{1}{c|}{}                             & Recall        
& 1.000(0.001) & 1.000(0.001) & 1.000(0.001) 
& 1.000(0.001) & 1.000(0.001) & 1.000(0.001) 
& 1.000(0.001) & 1.000(0.001) & 1.000(0.001) \\
\multicolumn{1}{c|}{}                             & $\ell_2$-error 
& 0.201(0.018)  & 0.144(0.011) & 0.099(0.009)
& 0.140(0.015)  & 0.100(0.010) & 0.068(0.007) 
& 0.101(0.015)  & 0.074(0.013) & 0.051(0.010) \\ \hline
\multicolumn{1}{c|}{DREL}        & Precision      
& 0.951(0.083) & 0.997(0.020) & 0.997(0.020) 
& 0.896(0.129) & 0.997(0.020) & 1.000(0.001) 
& 0.901(0.107) & 0.997(0.020) & 1.000(0.001) \\
\multicolumn{1}{c|}{}                             & Recall        
& 1.000(0.001) & 1.000(0.001) & 1.000(0.001) 
& 1.000(0.001) & 1.000(0.001) & 1.000(0.001) 
& 1.000(0.001) & 1.000(0.001) & 1.000(0.001) \\
\multicolumn{1}{c|}{}                             & $\ell_2$-error 
& 0.098(0.021) & 0.063(0.015) & 0.050(0.011)  
& 0.094(0.019) & 0.062(0.014) & 0.044(0.008) 
& 0.09(0.018)  & 0.061(0.014) & 0.044(0.008) \\ \hline
\multicolumn{1}{c|}{Avg-DC}      & Precision      
& 0.048(0.006) & 0.031(0.002) & 0.025(0.001) 
& 0.105(0.028) & 0.063(0.011) & 0.038(0.005) 
& 0.218(0.069) & 0.116(0.029) & 0.065(0.013) \\
\multicolumn{1}{c|}{}                             & Recall         
& 1.000(0.001) & 1.000(0.001) & 1.000(0.001) 
& 1.000(0.001) & 1.000(0.001) & 1.000(0.001) 
& 1.000(0.001) & 1.000(0.001) & 1.000(0.001) \\
\multicolumn{1}{c|}{}                             & $\ell_2$-error 
& 0.234(0.015) & 0.231(0.011) & 0.226(0.008) 
& 0.155(0.014) & 0.151(0.011) & 0.147(0.008) 
& 0.119(0.015) & 0.112(0.01)  & 0.107(0.007) \\ \hline
\end{tabular}
\label{tab.10}
}
\end{table}
\begin{table}[H]
\scriptsize
\caption{The $\ell_2$-error, precision, and recall of the DHSQR, Pooled DHSQR, DPQR, DREL, and Avg-DC estimator under different sample size $N$ and local sample size $n$. Noises are generated from a normal distribution for the heteroscedastic error case. The quantile level is fixed $\tau = 0.7$ and the iteration is fixed $T= 10$. (The standard deviation is given in parentheses.)}
\centering
\resizebox{\textwidth}{!}{
\begin{tabular}{cc|ccc|ccc|ccc}
\hline
\multicolumn{2}{c|}{$n$}                                           & \multicolumn{3}{c|}{200}                   & \multicolumn{3}{c|}{500}                   & \multicolumn{3}{c}{1000}                   \\ \hline
\multicolumn{2}{c|}{$N$}                                           & 5000         & 10000        & 20000        & 5000         & 10000        & 20000        & 5000         & 10000        & 20000        \\ \hline
\multicolumn{1}{c|}{DHSQR}        & Precision      
& 0.947(0.116) & 0.948(0.100) & 0.974(0.069) 
& 0.965(0.098) & 0.974(0.069) & 0.997(0.020)  
& 0.926(0.15)  & 0.989(0.081) & 0.997(0.020)  \\
\multicolumn{1}{c|}{}                             & Recall         
& 1.000(0.001) & 1.000(0.001) & 1.000(0.001) 
& 1.000(0.001) & 1.000(0.001) & 1.000(0.001) 
& 1.000(0.001) & 1.000(0.001) & 1.000(0.001) \\
\multicolumn{1}{c|}{}                             & $\ell_2$-error 
& 0.239(0.144) & 0.174(0.054) & 0.177(0.102) 
& 0.169(0.033) & 0.134(0.032) & 0.117(0.021) 
& 0.159(0.028) & 0.122(0.022) & 0.113(0.019) \\ \hline
\multicolumn{1}{c|}{Pooled DHSQR} & Precision      
& 0.964(0.102) & 0.985(0.057) & 1.000(0.001) 
& 0.959(0.107) & 0.983(0.064) & 1.000(0.001) 
& 0.974(0.069) & 0.985(0.057) & 1.000(0.001) \\
\multicolumn{1}{c|}{}                             & Recall         
& 1.000(0.001) & 1.000(0.001) & 1.000(0.001) 
& 1.000(0.001) & 1.000(0.001) & 1.000(0.001) 
& 1.000(0.001) & 1.000(0.001) & 1.000(0.001) \\
\multicolumn{1}{c|}{}                             & $\ell_2$-error 
& 0.152(0.024) & 0.118(0.017) & 0.107(0.017) 
& 0.152(0.025) & 0.118(0.017) & 0.106(0.017) 
& 0.153(0.024) & 0.118(0.018) & 0.107(0.017) \\ \hline
\multicolumn{1}{c|}{DPQR}        & Precision      
& 0.994(0.028) & 1.000(0.001) & 1.000(0.001) 
& 0.991(0.034) & 1.000(0.001) & 1.000(0.001)
& 0.980(0.050) & 1.000(0.001) & 1.000(0.001) \\
\multicolumn{1}{c|}{}                             & Recall         
& 1.000(0.001) & 1.000(0.001) & 1.000(0.001) 
& 1.000(0.001) & 1.000(0.001) & 1.000(0.001) 
& 1.000(0.001) & 1.000(0.001) & 1.000(0.001) \\
\multicolumn{1}{c|}{}                             & $\ell_2$-error 
& 0.329(0.013) & 0.238(0.007) & 0.171(0.007)  
& 0.291(0.014) & 0.210(0.007) & 0.152(0.009) 
& 0.222(0.012) & 0.159(0.008) & 0.118(0.011)  \\ \hline
\multicolumn{1}{c|}{DREL}        & Precision      
& 0.923(0.125) & 0.976(0.067) & 0.983(0.047) 
& 0.934(0.083) & 0.977(0.053) & 1.000(0.001) 
& 0.962(0.073) & 0.977(0.053) & 1.000(0.001) \\
\multicolumn{1}{c|}{}                             & Recall         
& 1.000(0.001) & 1.000(0.001) & 1.000(0.001) 
& 1.000(0.001) & 1.000(0.001) & 1.000(0.001) 
& 1.000(0.001) & 1.000(0.001) & 1.000(0.001) \\
\multicolumn{1}{c|}{}                             & $\ell_2$-error 
& 0.229(0.066) & 0.207(0.068) & 0.198(0.052) 
& 0.184(0.032) & 0.166(0.04)  & 0.153(0.036) 
& 0.159(0.023) & 0.15(0.024)  & 0.141(0.025) \\ \hline
\multicolumn{1}{c|}{Avg-DC}      & Precision      
& 0.05(0.006)  & 0.032(0.002) & 0.025(0.001) 
& 0.114(0.025) & 0.065(0.012) & 0.040(0.005)  
& 0.207(0.07)  & 0.117(0.029) & 0.066(0.011) \\
\multicolumn{1}{c|}{}                             & Recall         
& 1.000(0.001) & 1.000(0.001) & 1.000(0.001) 
& 1.000(0.001) & 1.000(0.001) & 1.000(0.001) 
& 1.000(0.001) & 1.000(0.001) & 1.000(0.001) \\
\multicolumn{1}{c|}{}                             & $\ell_2$-error 
& 0.278(0.02)  & 0.275(0.015) & 0.271(0.008) 
& 0.187(0.019) & 0.183(0.016) & 0.181(0.008) 
& 0.150(0.020) & 0.144(0.016) & 0.141(0.008) \\ \hline
\end{tabular}
\label{tab.11}
}
\end{table}

\subsubsection{Additional Experiments for the Decaying Sequence Setting of Nonzero Parameters}
In this section, we provide some additional experiment results using the decaying sequence setting of nonzero parameters. 
We consider the heteroscedastic error case with normal distribution, and $\tau = 0.7, n = 500, T = 10$. Here, we set the true parameter as 
$$ \boldsymbol{\beta}^{*} = (1,2^1,2^0,2^{-1},2^{-2},2^{-3},\boldsymbol{0}_{p-5})^\mathrm{T}. $$
Other settings align with those in Section \ref{sec:sim}. The average results from 100 replicates are summarized in Table \ref{tab.12}.
\begin{table}[h]
\centering
\caption{The $\ell_2$-error, precision, and recall of the DHSQR, pooled DHSQR, DPQR, DREL, and Avg-DC estimator under different sample size $N$ and local sample size $n$. Noises are generated from a normal distribution for the heteroscedastic error case. The quantile level is fixed $\tau=0.7$ and the iteration is fixed $T=10$. (The standard deviation is given in parentheses.)}
\centering
\resizebox{0.5\textwidth}{!}{
\begin{tabular}{cc|ccc}
\hline
\multicolumn{2}{c|}{N}                                              & 5000 & 10000        & 20000 \\ \hline
\multicolumn{1}{c|}{\multirow{2}{*}{DHSQR}}        & $F_1$-score    & 0.881(0.082)      & 0.893(0.037) &  0.904(0.029)     \\
\multicolumn{1}{c|}{}                              & $\ell_2$-error &0.159(0.021)      & 0.136(0.032) &  0.125(0.031)     \\ \hline
\multicolumn{1}{c|}{\multirow{2}{*}{Pooled DHSQR}} & $F_1$-score    & 0.897(0.034)     & 0.901(0.023) &  0.909(0.001)     \\
\multicolumn{1}{c|}{}                              & $\ell_2$-error &0.144(0.020)      & 0.120(0.017) &  0.102(0.016)     \\ \hline
\multicolumn{1}{c|}{\multirow{2}{*}{DREL}}         & $F_1$-score    & 0.892(0.045)    & 0.906(0.015) &  0.906(0.020)     \\
\multicolumn{1}{c|}{}                              & $\ell_2$-error &0.181(0.035)      & 0.158(0.034) &  0.152(0.039)     \\ \hline
\multicolumn{1}{c|}{\multirow{2}{*}{DPQR}}         & $F_1$-score    & 0.878(0.071)     & 0.909(0.009) &  0.909(0.001)     \\
\multicolumn{1}{c|}{}                              & $\ell_2$-error &0.176(0.011)      & 0.146(0.002) &  0.137(0.007)     \\ \hline
\multicolumn{1}{c|}{\multirow{2}{*}{Avg-DC}}       & $F_1$-score    & 0.179( 0.047)     & 0.105(0.023) & 0.066(0.007)      \\
\multicolumn{1}{c|}{}                              & $\ell_2$-error &0.184(0.018)      & 0.182(0.013) &  0.180(0.009)     \\ \hline
\end{tabular}
}
\label{tab.12}
\end{table}

As depicted in Table \ref{tab.12}, our DHSQR method consistently outperforms other methods across all sample sizes under the decaying sequence setting. DHSQR demonstrates superior performance in terms of $\ell_2$-error, indicating its capability to provide more accurate estimations compared to alternative methods. Additionally, the DHSQR method achieves an indistinguishable $F_1$-score when compared to other iteration methods, reaffirming its strength and reliability.

\end{document}